\newtheorem{counterexample}[theorem]{Counterexample}
\crefname{proposition}{Prop.}{Props.}
\crefname{definition}{Def.}{Defs.}
\crefname{lemma}{Lemma}{Lemmas}
\crefname{example}{Ex.}{Exs.}
\crefname{equation}{Eq.}{Eqs.}
\crefname{section}{Sec.}{Secs.}
\crefname{subsection}{Sec.}{Secs.}
\crefname{subsubsection}{Sec.}{Secs.}
\crefname{figure}{Fig.}{Figs.}
\crefname{wrapfigure}{Fig.}{Figs.}
\crefname{corollary}{Cor.}{Cors.}
\newcommand{\defeq}{\vcentcolon=}
\newif\ifarxiv
\def\gT{{\mathcal{T}}}
\def\gS{{\mathcal{S}}}
\def\gF{{\mathcal{F}}}
\def\gO{{\mathcal{O}}}
\def\gE{{\mathcal{E}}}
\def\gH{{\mathcal{H}}}
\def\gL{{\mathcal{L}}}
\def\sA{{\mathbb{A}}}
\def\R{{\mathbb{R}}}
\begin{document}

\title{GFlowNet Foundations}

\author{\name Yoshua Bengio\thanks{Equal Contribution} \email yoshua.bengio@mila.quebec \\
       \addr Mila, Universié de Montréal, CIFAR, IVADO
       \AND
       \name Salem Lahlou\footnotemark[1] \email lahlosal@mila.quebec \\
       \addr Mila, Universié de Montréal
       \AND
       \name Tristan Deleu\footnotemark[1] \email deleutri@mila.quebec \\
       \addr Mila, Universié de Montréal
       \AND
       \name Edward J. Hu \email edward@edwardjhu.com \\
       \addr Mila, Universié de Montréal, Microsoft
       \AND
       \name Mo Tiwari \email motiwari@stanford.edu \\
       \addr Stanford University
       \AND
       \name Emmanuel Bengio \email bengioem@mila.quebec \\
       \addr Mila, McGill University
}
\editor{David Sontag}

\maketitle

\begin{abstract}
    Generative Flow Networks (GFlowNets) have been introduced as a method to sample a diverse set of candidates in an active learning context, with a training objective that makes them approximately sample in proportion to a given reward function. In this paper, we show a number of additional theoretical properties of GFlowNets, including a new local and efficient training objective called detailed balance for the analogy with MCMC. GFlowNets can be used to estimate joint probability distributions and the corresponding marginal distributions where some variables are unspecified and, of particular interest, can represent distributions over composite objects like sets and graphs. GFlowNets amortize the work typically done by computationally expensive MCMC methods in a single but trained generative pass. They could also be used to estimate partition functions and free energies, conditional probabilities of supersets (supergraphs) given a subset (subgraph), as well as marginal distributions over all supersets (supergraphs) of a given set (graph). We introduce variations enabling the estimation of entropy and mutual information, 
\ifarxiv
    sampling from a Pareto frontier, connections to reward-maximizing policies, and extensions to stochastic environments, 
\fi 
continuous actions and modular energy functions.
\end{abstract}

\section{Introduction}
\label{sec:introduction}
Building upon the introduction of Generative Flow Networks (GFlowNets) by~\citet{bengio2021flow}, we provide here an in-depth formal foundation and expansion of the set of theoretical results in ways that may be of interest for the active learning scenario of~\citet{bengio2021flow} but also much more broadly. 

\subsection{What is a GFlowNet ?}
GFlowNets have properties which make them well-suited to perform amortized probabilistic inference in general, whether for sampling or for marginalizing. Sampling takes place at training time while run-time sampling or computations of marginalized quantities can be done in a single pass through a sequence of constructive stochastic steps. This makes GFlowNets an interesting alternative to Monte-Carlo Markov chains (MCMC) and related to amortized variational inference~\citep{malkin2022gfnhvi}. 

Because sampling of a compositional object $s$ can be achieved through a sequence of stochastic steps, very rich multimodal distributions $P_T(s)$ over such objects can be represented, and the offline training objectives make it possible to explore and discover modes of the distribution of interest. The key property of GFlowNets is that their sampling policy is trained to make the probability $P_T(s)$ of sampling an object $s$ approximately proportional to the value $R(s)$ of a given reward function applied to that object. We also talk of an energy function ${\cal E}(s)=-\log R(s)$, i.e., the reward function is non-negative and corresponds to an unnormalized probability. Whereas one typically trains a generative model from a dataset of positive examples, a GFlowNet is trained to match the given energy or reward function and convert it into a sampler. We view that sampler as a generative policy because the composite object $s$ is constructed through a sequence of smaller stochastic steps (see \cref{fig:gfn_sampling}), often corresponding to constructively composing different elements of $s$, like the edges of a graph. 

This conversion of an energy function or unnormalized probability function to a sampler  is similar to what MCMC methods achieve but once trained, GFlowNets will generate a sample in one shot instead of generating a long sequence of samples whose distribution would gradually approach the desired one. GFlowNets thus avoid the lengthy stochastic search in the space of such objects and  the associated mode-mixing intractability challenge of MCMC methods~\citep{jasra2005markov,bengio2013better,pompe2020framework}. Multiple iid samples can be obtained from the GFlowNet by calling the sampler multiple times. GFlowNets exchange that intractability of sampling with MCMC for the challenge of amortized training of the generative policy. The latter problem would be equally intractable if the modes of the reward function did not have a inherent (but not necessarily known) structure over which the learner could generalize, i.e., the learner had almost no chance to correctly guess where to find new modes based on (i.e., training on) those it had already visited. 


\begin{figure}
    \centering
    \hspace*{-0.8cm}\includegraphics[scale=0.6]{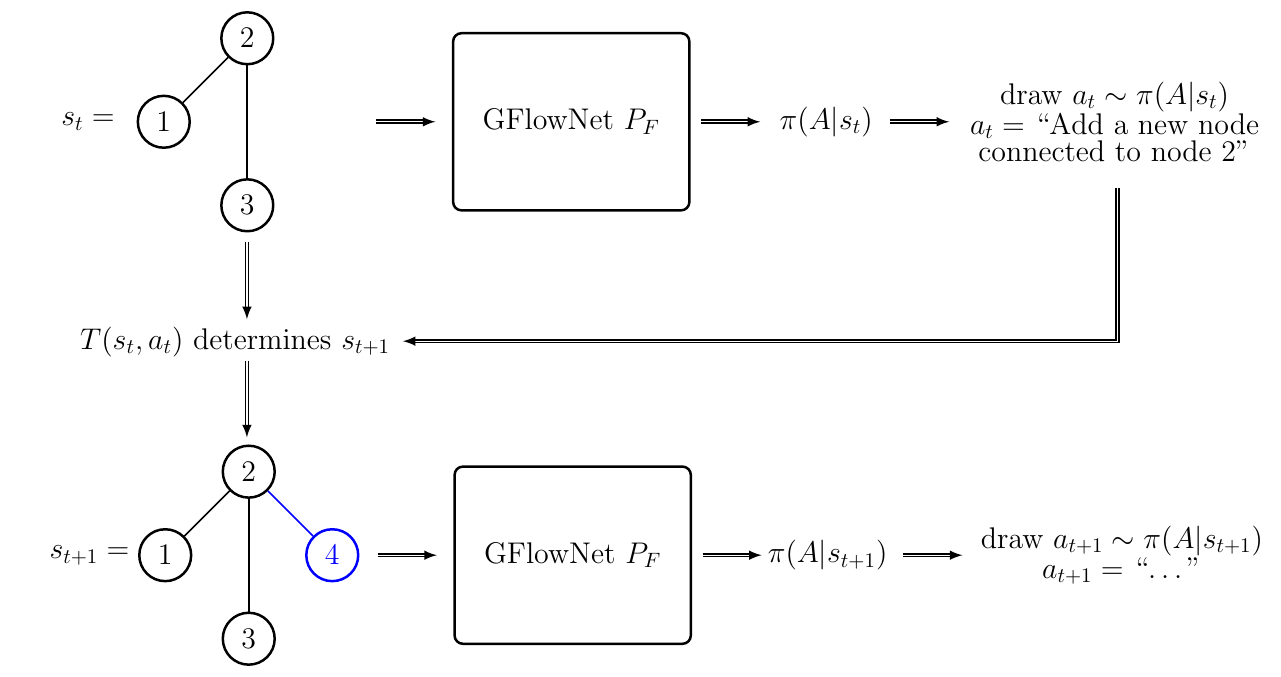}
    \caption{A diagram of how a GFlowNet iteratively constructs an object. We adopt notation that is common in the reinforcement learning literature: $s_t$ represents the state of the partially constructed object (in this case, a graph) at time $t$, $a_t$ represents the action taken by the GFlowNet at time $t$ to transition to state $s_{t+1}= T(s_t, a_t)$. In this diagram, the GFlowNet takes a 3-node graph as input and determines an action to take. The action, combined with the environment transition function $T(s_t, a_t)$, determines $s_{t+1}$: a four-node graph. This process repeats until an exit action is sampled and the sample is complete.}
    \label{fig:gfn_sampling}
\end{figure}

The energy function or reward function (exponential of minus energy) is evaluated only at the end of the sequential construction process for objects $s$, in what we call a terminating state. Every such constructive sequence starts in the single initial state $s_0$ and ends in a terminal state. As illustrated in Figure~\ref{fig:flows}, we can visualize the set of all trajectories starting from $s_0$ and ending in a terminal state $s$. The term "flow" in "generative flow networks" refers to unnormalized probabilities that can be learned by GFlowNet learning procedures. 
The flow in an intermediate state $s$ is a weighted sum of the non-negative rewards of the terminating states reachable from $s$.
Those weights are such as to avoid double-counting: if we were to inject a fixed flow of liquid in $s_0$ and dispatch that liquid in each child of any state $s$ proportionally to the GFlowNet policy for choosing a child of $s$, we would obtain the flow at each state and the flow at terminating states would match the reward function at those states. As shown in greater detail here and for the first time in the first GFlowNet paper~\citep{bengio2021flow}, this can be achieved with a flow constraint at each state: the sum of incoming flows must match the sum of outgoing flows.

\begin{figure}[t]
    \centering
    \includegraphics[width=\linewidth]{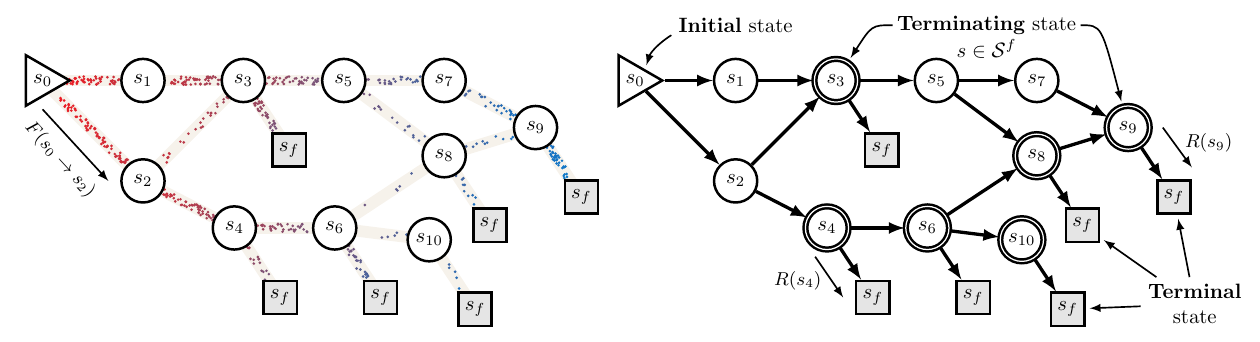}
    \caption{Illustration of the structure of a Generative Flow Network (GFlowNet), as a pointed DAG over states $s$, with particles flowing along edges to represent the flow function. Any object sampled by the GFlowNet policy can be obtained by starting from initial state $s_0$ and then at each step choosing a child with probability proportional to the GFlowNet policy's transition probability. This process stops when a terminating action is chosen from a terminating state $s$ (yielding a terminal state $s_f$), at which point a reward $R(s)$ is obtained. The figure shows a tiny GFlowNet and the possible trajectories from $s_0$ to any of the terminal states. It illustrates that in general a state can be reached through several trajectories. GFlowNet algorithms learn a policy such that the probability of sampling terminating state $s$ is proportional to $R(s)$. It tries to learn a flow function $F(s)$ and $F(s\rightarrow s')$ over all states (including intermediate states) $s$ and transitions $s\rightarrow s'$ with $F(s)=R(s)$ at terminal states and $F(s_0)$ being the sum of rewards over all terminal states. A sufficient property to achieve this is that at each state  the sum of incoming flows equals the sum of outgoing flows.}
    \label{fig:flows}
\end{figure}

\subsection{Contributions of this paper}
In this paper, an important contribution is the notion of \textit{conditional} GFlowNet, which enables estimation of intractable sums corresponding to marginalization over many steps of object construction, and can thus be used to compute free energies\footnote{In machine learning, a free energy is the logarithm of an unnormalized marginal probability, a generally intractable sum of exponentiated negative energies.} over different types of joint distributions, perhaps most interestingly over sets and graphs. This marginalization also enables estimation of entropies, conditional entropies\, and mutual information. GFlowNets can
thus 
be generalized to estimate multiple flows corresponding to modeling a rich outcome (rather than a scalar reward function)
.

We refer the reader to~\citet{bengio2021flow} and~\cref{sec:related-work} for a discussion of related approaches and differences with common generative models and reinforcement learning (RL) methods. In an RL context, two interesting properties of GFlowNets already noted in that paper are that they (1) can be trained in an offline manner with trajectories sampled from a distribution different from the one represented by the GFlowNet and (2) they match the reward function in probability rather than try to find a configuration which maximizes rewards or returns. The latter property is particularly interesting in the context of exploration, to ensure the configurations sampled from the generative policy are both interesting and diverse.  It is also interesting to transform GFlowNets into amortized probabilistic inference machines: if we choose the reward function to be a prior (over some random variable) times a likelihood (how well some data is fit given that choice of random variable value), then the GFlowNet policy learns to sample from the corresponding Bayesian posterior (which is proportional to prior times likelihood). The ability of GFlowNets to generate a diverse set of samples then corresponds to the ability to sample from the modes of the target distribution.

An important source of inspiration for GFlowNets is the way information propagates in temporal-difference RL methods~\citep{sutton2018reinforcement}.  Both rely on a principle of coherence for credit assignment which may only be achieved asymptotically when training converges. While exact gradient calculation may be intractable, because the number of paths in state space to consider is exponentially large, both methods rely on local coherence between different components and a training objective that states that if all the learned components are coherent with each other locally, then we obtain a system that estimates the quantities of interest globally. Examples include estimation of expected discounted returns in temporal-difference methods and probability measures with GFlowNets.

This paper extends the theory of the original GFlowNet construction~\citep{bengio2021flow} in several directions, including a new local training objective called detailed balance (for the analogy with the detailed balance condition of Monte-Carlo Markov chains) which avoids forming explicit sums required by the previously proposed flow matching loss, as well as formulations enabling the calculation of marginal probabilities (or free energies) for subsets of variables, more generally for subsets of larger sets, or subgraphs, 
\ifarxiv
their application to estimating entropy and mutual information, and the introduction of an unsupervised form of GFlowNets (the reward function is not needed while training, only observations of outcomes) enabling sampling from a Pareto frontier, for example. Although basic GFlowNets are more similar to bandits (in that a reward is only provided at the end of a sequence of actions), they can be extended to take into account intermediate rewards and thus a notion of return, and sample according to these returns. The original formulation of GFlowNets is also limited to discrete and deterministic environments, while this paper suggests how these two limitations could be lifted.
\else 
and their application to estimating entropy and mutual information.
\fi 
Finally, whereas the basic formulation of GFlowNets assumes a given reward or energy function, this paper considers how the energy function could be jointly learned with the GFlowNet, opening the door to novel energy-based modeling methodologies and a modular structure for both the energy function and the GFlowNet.

\subsection{GFlowNets in other works}
In addition to the theory presented in this paper, \citet{malkin2022gfnhvi} and \citet{zimmermann2022variational} prove some partial equivalences between GFlowNets and hierarchical variational methods, providing yet more theoretical evidence for the efficacy of GFlowNets in learning to sample proportionally to a given reward function. These works also provide evidence for the superiority of GFlowNets in off-policy settings.

GFlowNets
have found a wide array of applications due to the associated diversity of generated samples. 
In contexts where a cheap proxy for the true reward function exists, GFlowNets have been used to surface samples under which to query the proxy before more expensive evaluation under the true reward function.
In these settings, the diversity of samples generated by GFlowNets can be used for robustness to proxy misspecification and to incorporate epistemic uncertainty. 
For example, \citet{zhang2022scheduling} use GFlowNets to produce sample schedules for operations in a computation graph, where evaluating the runtimes of sample schedules via a proxy is fast but evaluating the same schedules on target hardware is expensive.
In active learning problems, \citet{jain2022biologicalseqdesign,jain2023multiobjective} use GFlowNet sampling as a subroutine inside an active learning loop as a substitute for Bayesian Optimization or RL-based methods. 
\citet{jain2022biologicalseqdesign} apply GFlowNets to search for novel anti-microbial peptides, discover DNA sequences that have high binding activity with human transcription factors, and to find proteins with high fluorescence.
Additionally, \citet{jain2023multiobjective} develops preference-conditional GFlowNets, where a preference weight vector is used to scalarize multiple objective functions into a single reward. 
The authors apply their techniques to various molecule and DNA sequence generation tasks and find that their methods are able to find different Pareto-optimal samples along the Pareto frontier.

GFlowNets have found applications in several other machine learning problems.
For example, \citet{zhang2022discretemodeling} simultaneously train an energy-based model and a GFlowNet; the energy function is trained with samples from a GFlowNet, which, in turn, uses the energy function to form its reward. Their method results in a generative model for binary vectors in high dimensions, e.g., binarized digits.
\citet{deleu2022bayesian} use a GFlowNet for structure learning; the GFlowNet produces samples that approximates the true posterior over causal graphs given a dataset. Their method works on both observational and interventional data, and  compares favorably to MCMC- and variational inference-based methods.
\citet{hu2023gfnem} find maximum-likelihood estimates of latent variable models with discrete compositional latents by jointly training a GFlowNet to approximately sample from the generally intractable posterior in the E-step of the expectation-maximization (EM) algorithm.

\section{Flow Networks and Markovian Flows}
\label{sec:measures-over-markovian-flows}
\subsection{Some elements of graph theory}
\label{sec:elements-of-graph-theory}
In this section, we recall some basic definitions and properties of graphs, which are the basis of flow networks and GFlowNets.
\begin{definition}
A directed graph is a tuple $G = (\gS, \sA)$, where $\gS$ is a finite set of states, and $\sA$ a subset of $\gS \times \gS$ representing directed edges. Elements of $\sA$ are denoted $s{\rightarrow}s'$ and called {\bf edges} or {\bf transitions}. 
\\
A \textbf{trajectory} in such a graph is a sequence $\tau=(s_1, \dots, s_n)$ of elements of $\gS$ such that every transition $s_t{\rightarrow}s_{t+1} \in \sA$ and $n > 1$. We denote $s \in \tau$ to mean that $s$ is in the trajectory $\tau$, i.e., $\exists t \in \{1,\dots, n\} \ s_t=s$, and similarly $s{\rightarrow}s' \in \tau$ to mean that $\exists t \in \{1, \dots, n-1\} \ s_t=s,s_{t+1}=s'$. For convenience, we also use the notation $\tau = s_1 \rightarrow \dots \rightarrow s_n$. The \textbf{length} of a trajectory is the number of edges in it (the length of $\tau = (s_1, \dots, s_n)$ is thus $n-1$).
\\
A \textbf{directed acyclic graph} (DAG) is a directed graph in which there is no trajectory $\tau=(s_1, \dots, s_n)$ satisfying $s_n = s_1$.
\end{definition}

Given a DAG $G=(\gS, \sA)$, and two states $s, s' \in \gS$, if there exists a trajectory in $G$ starting in $s$ and ending in $s'$, then we write $s < s'$. The binary relationship ``$<$'' defines a \textbf{strict partial order} (i.e. it is irreflexive, asymmetric and transitive).
We write $s \leq s'$ if $s < s'$ or $s = s'$. The binary relation ``$\leq$'' is a (non-strict) \textbf{partial order} (i.e. it is reflexive, antisymmetric and transitive).

If there is no order relation between $s$ and $s'$, we write $s \lessgtr s'$.


\begin{definition}
\label{eq:def-parents-children-consistent}
Given a DAG $G=(\gS, \sA)$, the {\bf parent set} of a state $s \in \gS$, which we denote $Par(s)$, contains all of the direct parents of $s$ in $G$, i.e., $Par(s) = \{s' \in \gS \ : \ \mbox{$s'{\rightarrow}s$} \in \sA\}$; similarly, the {\bf child set} $Child(s)$ contains all of the direct children of $s$ in $G$, i.e., $Child(s) = \{s' \in \gS \ : \ s{\rightarrow}s' \in \sA\}$.
\end{definition}

\begin{definition}
Given a DAG $G=(\gS, \sA)$. $G$ is called a \textbf{pointed DAG} if there exist two states $s_0, s_f \in \gS$ that satisfy:
\begin{equation*}
    \forall s \in \gS \setminus \{s_0\} \ \ s_0 < s \text{ and } \forall s \in \gS \setminus \{s_f\} \ \ s < s_f.
\end{equation*}
$s_0$ is called the \textbf{source state} or \textbf{initial state}. $s_f$ is called the \textbf{sink state} or \textbf{final state}. Because ``$<$'' is a strict partial order, these two states are unique.

A \textbf{complete trajectory} in such a DAG is any trajectory starting in $s_0$ and ending in $s_f$. We denote such a trajectory as $\tau = (s_0, s_1, \dots, s_n, s_{n+1}=s_f)$.

We denote by $\gT$ the set of all complete trajectories in $G$, and by $\gT^{partial}$ the set of (possibly incomplete) trajectories in $G$.

A state $s \in \gS$ is called a \textbf{terminating state} if it is a parent of the sink state, i.e. $s \rightarrow s_f \in \sA$. The transition $s \rightarrow s_f$ is called a \textbf{terminating edge}. We denote by:
\begin{itemize}
    \item $\sA^{-f} = \{ s\rightarrow s' \in \sA, \ s' \neq s_f \}$, the set of non-terminating edges in $G$,
    \item $\sA^{f} = \{ s\rightarrow s' \in \sA, \ s' = s_f \} = \sA \setminus \sA^{-f}$, the set of terminating edges in $G$,
    \item $\gS^{f} = \{ s \in  \gS, \ s \rightarrow s_f \in \sA^f\} = Par(s_f)$, the set of terminating states in $G$.
\end{itemize}
\end{definition}

In \cref{fig:illustration-dag-terminating}, we visualize the concepts introduced in the previous definitions.

\begin{figure}[ht]
    \centering
    \includegraphics{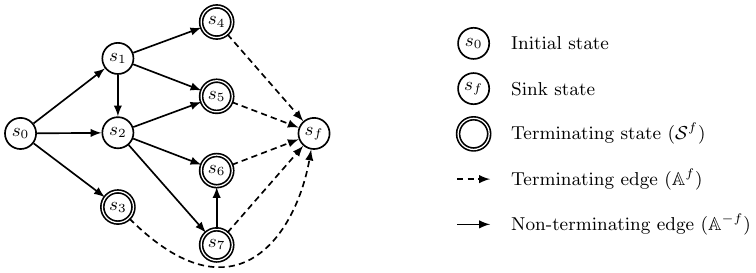}
    \caption{Example of a pointed DAG $G$ illustrating the notions of initial state ($s_0$), final or sink state ($s_f$), terminating states in $\gS^{f}$, with a transition to $s_f$ called a terminating edge, in $\sA^{f}$. A terminating state may have other children different from the sink state (e.g., the terminating state $s_{7}$).}
    \label{fig:illustration-dag-terminating}
\end{figure}

Note that the constraint of a single source state and single sink state is only a mathematical convenience since a bijection exists between general DAGs and those with this constraint (by the addition of a unique source/sink state connected to all the other source/sink states).

\begin{definition}
\label{def:P_F-P_B-consistent}
Let $G$ be a pointed DAG with source state $s_0$ and sink state $s_f$. A \textbf{forward} (resp. \textbf{backward}) \textbf{probability function consistent with} $G$ is any non-negative function $\hat{P}_F$ (resp. $\hat{P}_B$) defined on $\sA$ that satisfies $\forall s \in \gS \setminus\{s_f\}, \ \sum_{s' \in Child(s)} \hat{P}_F(s' \mid s) = 1$ (resp. $\forall s \in \gS\setminus\{s_0\}, \sum_{s' \in Par(s)} \hat{P}_B(s' \mid s) = 1$).
\end{definition}

With pointed DAGs, consistent forward and backward probability functions, that are probabilities over states, can be used to define probabilities over trajectories, i.e. probability measures on some subsets of $\gT^{partial}$. The following lemma shows how to construct such factorized probability measures:
\begin{lemma}
\label{lemma:PF-PB-extension}
Let $G = (\gS, \sA)$ be a pointed DAG, and consider a forward probability function $\hat{P_F}$, and a backward probability function $\hat{P_B}$ both consistent with $G$. For any state $s \in \gS \setminus \{s_f\}$, we denote by $\gT_{s, f} \subseteq \gT^{partial}$ the set of trajectories in $G$ starting in $s$ and ending in $s_f$; and for any state $s \in \gS \setminus \{s_0\}$, we denote by $\gT_{0, s} \subseteq \gT^{partial}$ the set of trajectories in $G$ starting in $s_0$ and ending in $s$.

Consider the extensions of $\hat{P}_F$ and $\hat{P}_B$ on $\gT^{partial}$ defined by:
\begin{align}
    \label{eq:PF-extension}
    \forall \tau = (s_1, \dots, s_n) \in \gT^{partial} \ \ \hat{P}_F(\tau) \defeq \prod_{t=1}^{n-1} \hat{P}_F(s_{t+1} \mid s_t) \\
    \label{eq:PB-extension}
    \forall \tau = (s_1, \dots, s_n) \in \gT^{partial} \ \ \hat{P}_B(\tau) \defeq \prod_{t=1}^{n-1} \hat{P}_B(s_t \mid s_{t+1})
\end{align}
We have the following:
\begin{align}
    \forall s \in \gS \setminus \{s_f\} \ \ \sum_{\tau \in \gT_{s, f}} \hat{P}_F(\tau) = 1 \label{eq:PF-extension-valid}  \\
    \forall s' \in \gS \setminus \{s_0\} \ \ \sum_{\tau \in \gT_{0, s'}} \hat{P}_B(\tau) = 1  \label{eq:PB-extension-valid}
\end{align}
\end{lemma}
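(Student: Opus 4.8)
The plan is to prove \cref{eq:PF-extension-valid} by induction on a rank of the state $s$, and then to obtain \cref{eq:PB-extension-valid} for free by applying the forward statement to the reversed DAG $G^{\mathrm{op}} = (\gS, \{s'{\rightarrow}s : s{\rightarrow}s' \in \sA\})$, which is again a pointed DAG but with source $s_f$ and sink $s_0$, and in which $\hat{P}_B$ (as a function on edges, conditioned on the child in $G$) plays exactly the role of a consistent forward probability function; under edge reversal a trajectory $\tau = (s_1,\dots,s_n)$ from $s_0$ to $s'$ in $G$ corresponds to a complete-to-the-sink trajectory from $s'$ to $s_0$ in $G^{\mathrm{op}}$ with the same weight, since $\hat{P}_B(\tau) = \prod_{t=1}^{n-1}\hat{P}_B(s_t\mid s_{t+1})$ is precisely the forward weight of the reversed trajectory.

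For the forward statement, I would first record a few structural facts. Since $G$ is finite and acyclic, there are only finitely many trajectories, so each $\gT_{s,f}$ is finite; since $G$ is pointed, $s < s_f$ for every $s \neq s_f$, so $\gT_{s,f}$ is nonempty; and $s_f$ is the unique childless state (any state $s \neq s_f$ satisfies $s < s_f$ and hence has a child, while an edge out of $s_f$ would create a cycle). Consequently, for $s \neq s_f$ the quantity $\ell(s) := \max\{\text{length of }\tau : \tau \in \gT_{s,f}\}$ is a well-defined positive integer, and prepending an edge $s{\rightarrow}s_1$ to a longest trajectory out of $s_1$ shows $\ell(s) \geq \ell(s_1)+1$ whenever $s_1 \in Child(s)\setminus\{s_f\}$.

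The induction is on $\ell(s)$. In the base case $\ell(s)=1$, no child of $s$ other than $s_f$ is possible (it would give a trajectory of length $\geq 2$), so $\gT_{s,f} = \{(s,s_f)\}$ and $\hat{P}_F((s,s_f)) = \hat{P}_F(s_f\mid s) = 1$ by consistency (\cref{def:P_F-P_B-consistent}). For the inductive step, I would partition $\gT_{s,f}$ by the first transition $s{\rightarrow}s_1$, $s_1 \in Child(s)$: the tail of such a trajectory ranges exactly over $\gT_{s_1,f}$, with the convention that for $s_1 = s_f$ the tail is the one-state trajectory $(s_f)$ whose weight is the empty product $1$ in \cref{eq:PF-extension}. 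Using the factorization in \cref{eq:PF-extension},
\begin{equation*}
\sum_{\tau \in \gT_{s,f}} \hat{P}_F(\tau) \;=\; \sum_{s_1 \in Child(s)} \hat{P}_F(s_1\mid s)\!\! \sum_{\tau' \in \gT_{s_1,f}}\!\! \hat{P}_F(\tau').
\end{equation*}
Each inner sum is $1$: by the induction hypothesis when $s_1 \neq s_f$ (as $\ell(s_1) < \ell(s)$), and trivially when $s_1 = s_f$. Hence the right-hand side reduces to $\sum_{s_1 \in Child(s)} \hat{P}_F(s_1\mid s) = 1$ by consistency, closing the induction and giving \cref{eq:PF-extension-valid}; \cref{eq:PB-extension-valid} then follows by the reversal argument above.

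I expect the only real friction to be bookkeeping at the sink boundary — whether to treat the direct edge $s{\rightarrow}s_f$ as a base case or to fold it into the recursion via the empty-product convention for $(s_f)$ — together with nailing down the small structural lemmas ($\gT_{s,f}$ finite and nonempty, $\ell$ well defined, $s_f$ the unique childless state) that make the induction legitimate. None of this is deep; the content of the lemma is essentially that a $\hat{P}_F$-random walk started anywhere in a finite pointed DAG is forced to terminate at $s_f$, and the induction is just the rigorous accounting of that fact.
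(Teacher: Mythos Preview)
Your proposal is correct and follows essentially the same approach as the paper: strong induction on the maximum trajectory length from $s$ to $s_f$ (your $\ell(s)$ is the paper's $d_{s,f}$), with the inductive step partitioning $\gT_{s,f}$ by the first transition and applying consistency of $\hat{P}_F$. The only cosmetic difference is that the paper runs the same induction a second time (on $d_{0,s'}$) for \cref{eq:PB-extension-valid} rather than invoking your edge-reversal duality, and you are slightly more explicit about the $s_1 = s_f$ boundary case and the structural prerequisites (finiteness, nonemptiness, $s_f$ being the unique sink) than the paper is.
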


\begin{proof}
For convenience, we will use $\gT_{s\rightarrow s', s_f}$ to denote the set of trajectories starting with $s \rightarrow s'$ and ending in $s_f$, and $\gT_{0, s\rightarrow s'}$ to denote the set of trajectories starting in $s_0$ and ending with $s \rightarrow s'$. This allows to write:
\begin{align*}
    &\forall s\neq s_f \ \ \gT_{s, f} = \bigcup_{s' \in Child(s)} \gT_{s\rightarrow s', s_f}, \ \ \{\gT_{s\rightarrow s', s_f}, \ s' \in Child(s)\} \text{ pairwise disjoint}, \\
    &\forall s' \neq s_0 \ \ \gT_{0, s'} = \bigcup_{s \in Par(s')} \gT_{0, s\rightarrow s'}, \ \ \{\gT_{0, s \rightarrow s'}, \ s \in Par(s')\} \text{ pairwise disjoint}.
\end{align*}
Additionally, for any $s \neq s_f$, we denote by $d_{s, f}$ the maximum trajectory length in $\gT_{s, f}$; and for any $s' \neq s_0$, we denote by $d_{0, s'}$ the maximum trajectory length in $\gT_{0, s}$.

We will prove \cref{eq:PF-extension-valid} by strong induction on $d_{s, f}$ and \cref{eq:PB-extension-valid} by strong induction on $d_{0, s'}$.

\textbf{Base cases: } If $d_{s, f} =1$ and $d_{0, s'} = 1$, then $\gT_{s, f} = \{ (s\rightarrow s_f) \}$ and $\gT_{0, s'} = \{ (s_0 \rightarrow s') \}$. Hence, $\sum_{\tau \in \gT_{s, f}} \hat{P}_F(\tau) = \hat{P}_F(s \rightarrow s_f) = \hat{P}_F(s_f \mid s) = 1$ given that $s_f$ is the only child of $s$ (otherwise $d_{s, f}$ cannot be $1$), and $\sum_{\tau \in \gT_{0, s'}} \hat{P}_B(\tau) = \hat{P}_B(s_0 \mid s') = 1$ given that $s_0$ is the only parent of $s'$ (otherwise $d_{0, s'}$ cannot be $1$).

\textbf{Induction steps: } Consider $s \neq s_f$ such that $d_{s, f} > 1$ and $s' \neq s_0$ such that $d_{0, s'} > 1$. Because of the disjoint unions written above, we have:
\begin{align*}
    \sum_{\tau \in \gT_{s, f}} \hat{P}_F(\tau) = \sum_{\tilde{s} \in Child(s)} \sum_{\tau \in \gT_{s \rightarrow \tilde{s}, f}} \hat{P}_F(\tau) = \sum_{\tilde{s} \in Child(s)} \hat{P}_F(\tilde{s} \mid s) \sum_{\tau \in \gT_{\tilde{s}, f}} \hat{P}_F(\tau) =  1, \\ 
    \sum_{\tau \in \gT_{0, s'}} \hat{P}_B(\tau) = \sum_{\tilde{s}' \in Par(s')} \sum_{\tau \in \gT_{0, \tilde{s}' \rightarrow s}} \hat{P}_B(\tau) = \sum_{\tilde{s}' \in Par(s')} \hat{P}_B(\tilde{s}' \mid s') \sum_{\tau \in \gT_{0, \tilde{s}'}} \hat{P}_B(\tau) = 1,
\end{align*}
where we used the induction hypotheses in the third equality of each line.
\end{proof}

\subsection{Trajectories and Flows}
\label{sec:trajectories-and-flows}
We augment pointed DAGs it with a function $F$ called a \emph{flow}. An analogy which helps to picture flows is a stream of particles flowing through a network where each particle starts at $s_0$ and flowing through some trajectory terminating in $s_f$. The flow $F(\tau)$ associated with each complete trajectory $\tau$ contains the number of particles sharing the same path $\tau$.

\begin{definition}
\label{def:flowmeasure}
Given a pointed DAG, a {\bf trajectory flow} (or ``\textbf{flow}'') is any non-negative function $F:{\cal T} \mapsto \R^{+}$ defined on the set of {\em complete trajectories} $\cal T$. $F$ induces a \emph{measure} over the $\sigma$-algebra $\Sigma = 2^{\gT}$, the power set on the set of complete trajectories $\gT$. In particular, for every subset $A \subseteq \gT$, we have
\begin{equation}
    F(A) = \sum_{\tau \in A} F(\tau). 
    \label{eq:F-event}
\end{equation}
The pair $(G, F)$ is called a \textbf{flow network}.
\end{definition}
This definition ensures that $(\gT, 2^{\gT}, F)$ is a measure space. We abuse the notation here, using $F$ to denote both a function of complete trajectories, and its corresponding measure over $(\gT, 2^{\gT})$. A special case is when the event $A$ is the singleton trajectory $\{\tau\}$, where we just write its measure as $F(\tau)$. We also abuse the notation to define the flow through either a particular state $s$, or through a particular edge $s{\rightarrow}s'$ in the following way.
\begin{definition}
The {\bf flow through a state} (or state flow) $F:{\cal S} \mapsto \R^{+}$ corresponds to the measure of the set of complete trajectories going through a particular state:
\begin{equation}
 F(s) \defeq F(\{\tau \in \gT \,:\, s \in \tau\}) = \sum_{\tau \in \gT \,:\, s\in\tau}F(\tau).
 \label{eq:Fs}
\end{equation}
Similarly, the {\bf flow through an edge} (or edge flow) $F: \sA \mapsto \R^{+}$ corresponds to the measure of the set of complete trajectories going through a particular edge:
\begin{equation}
\label{eq:Fss'}
    F(s{\rightarrow}s') \defeq F(\{\tau \in \gT \,:\, s{\rightarrow}s' \in \tau\}) = \sum_{\tau \in \gT \,:\, s{\rightarrow}s'\in\tau}F(\tau).
\end{equation}
\end{definition}
Note that with this definition, we have $F(s{\rightarrow}s') = 0$ if $s{\rightarrow}s'\notin \sA$ is not an edge in the pointed DAG (since $F(\emptyset) = 0$). We call the flow of a terminating transition $F(s{\rightarrow}s_{f})$ a \emph{terminating flow}. The following proposition relates the state flows and the edge flows:
\begin{proposition}
\label{prop:state-edge-flow}
Given a flow network $(G, F)$. The state flows and edge flows satisfy:
\begin{align}
\forall s \in \gS \setminus \{s_f\} \ \ F(s) = \sum_{s' \in Child(s)} F(s \rightarrow s') \label{eq:state-flow-outgoing-edge-flows} \\
\forall s' \in \gS \setminus \{s_0\} \ \ F(s') = \sum_{s \in Par(s')} F(s \rightarrow s') \label{eq:state-flow-incoming-edge-flows}
\end{align}
\end{proposition}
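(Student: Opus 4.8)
The plan is to prove both identities by partitioning the set of complete trajectories passing through a fixed state according to which incident edge they traverse, and then invoking the additivity of the measure $F$ from \cref{def:flowmeasure} (which is immediate here since $\gS$ is finite and $G$ is a DAG, so $\gT$ and all the sets involved are finite, and $F(A)=\sum_{\tau\in A}F(\tau)$ rearranges freely over a finite disjoint union).

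For \cref{eq:state-flow-outgoing-edge-flows}, fix $s\in\gS\setminus\{s_f\}$ and consider $\{\tau\in\gT\,:\,s\in\tau\}$. The first step is the combinatorial claim that every complete trajectory $\tau$ through $s$ contains exactly one edge of the form $s{\rightarrow}s'$ with $s'\in Child(s)$: since $\tau$ is complete it ends in $s_f\neq s$, so the occurrence of $s$ is not the last element of $\tau$ and hence has a successor $s'\in Child(s)$ (existence), and since $G$ is a DAG, $s$ occurs in $\tau$ at most once, so this successor is unique (uniqueness). This yields $\{\tau\in\gT\,:\,s\in\tau\}=\bigcup_{s'\in Child(s)}\{\tau\in\gT\,:\,s{\rightarrow}s'\in\tau\}$, with the union pairwise disjoint (two distinct children cannot both follow $s$ in the same acyclic trajectory). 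Applying the measure $F$ to this disjoint union and unfolding \cref{eq:Fs} and \cref{eq:Fss'} gives $F(s)=\sum_{s'\in Child(s)}F(s{\rightarrow}s')$; edges $s{\rightarrow}s'$ with $s'\notin Child(s)$ are irrelevant since $F(s{\rightarrow}s')=0$ there.

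The proof of \cref{eq:state-flow-incoming-edge-flows} is symmetric: fix $s'\in\gS\setminus\{s_0\}$; every complete trajectory through $s'$ starts in $s_0\neq s'$, so the occurrence of $s'$ is not the first element and has a predecessor $s\in Par(s')$, unique by acyclicity. This partitions $\{\tau\in\gT\,:\,s'\in\tau\}$ into the pairwise-disjoint sets $\{\tau\in\gT\,:\,s{\rightarrow}s'\in\tau\}$ indexed by $s\in Par(s')$, and additivity of $F$ finishes it.

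The only point requiring care — the ``main obstacle,'' though it is a mild one — is making the exactly-one-incident-edge claim rigorous: one uses acyclicity to forbid repeated visits to the state (so that successor/predecessor is well defined and unique) and uses $s\neq s_f$ (resp.\ $s'\neq s_0$) to guarantee the visit is not terminal (resp.\ initial), so that a successor (resp.\ predecessor) exists at all. Everything after that is bookkeeping with finite sums of non-negative terms.
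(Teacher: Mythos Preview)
Your proof is correct and follows essentially the same approach as the paper's: partition the set of complete trajectories through a state into disjoint pieces indexed by the incident outgoing (resp.\ incoming) edge, then apply additivity of $F$. If anything, your version is more careful than the paper's, since you explicitly justify why each trajectory contributes exactly one such edge (using acyclicity for uniqueness and $s\neq s_f$, resp.\ $s'\neq s_0$, for existence), whereas the paper simply asserts the disjoint-union decomposition.
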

\begin{proof}
Given $s \neq s_{f}$, the set of complete trajectories going through $s$ is the (disjoint) union of the sets of trajectories going through $s \rightarrow s'$, for all $s'\in Child(s)$:
\begin{equation*}
    \{\tau \in \gT\,:\,s\in \tau\} = \bigcup_{s'\in Child(s)}\{\tau \in \gT\,:\,s \rightarrow s'\in \tau\}.
\end{equation*}
Therefore, it follows that:
\begin{equation*}
    F(s) = \sum_{\tau\,:\,s\in\tau}F(\tau) = \sum_{s'\in Child(s)}\sum_{\tau\,:\,s \rightarrow s' \in \tau}F(\tau) = \sum_{s'\in Child(s)}F(s \rightarrow s')
\end{equation*}
Similarly, \cref{eq:state-flow-incoming-edge-flows} follows by writing the set of complete trajectories going though $s'\neq s_{0}$ as the (disjoint) union of the sets of trajectories going through $s \rightarrow s'$ for all $s \in Par(s')$.
\end{proof}

\subsection{Flow Induced Probability Measures}
\label{sec:flow-probability}
\begin{definition}
\label{def:total_flow}
Given a flow network $(G, F)$, the {\bf total flow} $Z$ is the measure of the whole set $\gT$, corresponding to the sum of the flows of all the complete trajectories:
\begin{equation}
 Z \defeq F(\gT) = \sum_{\tau \in \cal T} F(\tau).
\label{eq:Z}
\end{equation}
\end{definition}
\begin{proposition}
\label{prop:flow-initial-state}
The flow through the initial state equals the flow through the final state equals the total flow $Z$.
\end{proposition}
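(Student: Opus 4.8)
The plan is to unwind the definitions of the state flow (\cref{eq:Fs}) and the total flow (\cref{eq:Z}) and observe that the relevant sets of complete trajectories coincide with all of $\gT$. Concretely, I would first recall that a complete trajectory is, by definition, a trajectory of the form $\tau = (s_0, s_1, \dots, s_n, s_{n+1} = s_f)$, so every $\tau \in \gT$ satisfies $s_0 \in \tau$ and $s_f \in \tau$. Consequently the sets appearing in the definition of the state flows at $s_0$ and $s_f$ are not proper subsets of $\gT$ but equal to it:
\begin{equation*}
    \{\tau \in \gT \,:\, s_0 \in \tau\} = \gT = \{\tau \in \gT \,:\, s_f \in \tau\}.
\end{equation*}

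Given this, the conclusion is immediate from \cref{eq:Fs} and \cref{eq:Z}: applying the measure $F$ to both sides and using \cref{eq:F-event},
\begin{equation*}
    F(s_0) = \sum_{\tau \in \gT \,:\, s_0 \in \tau} F(\tau) = \sum_{\tau \in \gT} F(\tau) = Z, \qquad
    F(s_f) = \sum_{\tau \in \gT \,:\, s_f \in \tau} F(\tau) = \sum_{\tau \in \gT} F(\tau) = Z,
\end{equation*}
which gives $F(s_0) = F(s_f) = Z$.

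I do not anticipate any real obstacle here; the only thing to be careful about is that one invokes the right structural fact, namely that in a \emph{pointed} DAG the source and sink states lie on every complete trajectory (this is what distinguishes them from an arbitrary state $s$, for which $\{\tau \in \gT : s \in \tau\}$ is generally a strict subset of $\gT$). An optional alternative derivation, if one wishes to connect the statement to \cref{prop:state-edge-flow}, is to note that $s_0$ has no parents and $s_f$ has no children, so one could instead sum edge flows over outgoing edges of $s_0$ (resp. incoming edges of $s_f$); but the direct set-equality argument above is the cleanest route.
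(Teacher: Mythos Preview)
Your proposal is correct and follows essentially the same approach as the paper: both observe that every complete trajectory contains $s_0$ and $s_f$, so applying \cref{eq:Fs} at these two states reduces to the full sum $\sum_{\tau\in\gT} F(\tau)=Z$. Your added remark about the alternative via \cref{prop:state-edge-flow} is a nice aside but not needed.
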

\begin{proof}
 Since $\forall \tau \in {\cal T},\; s_0, s_f \in \tau$, applying~\cref{eq:Fs} to $s_0$ and $s_f$ yields
 \begin{align}
     F(s_0) &= \sum_{\tau\in \cal T} F(\tau) = Z \label{eq:F0=Z}, \\
      F(s_f) &= \sum_{\tau \in \cal T} F(\tau) = Z. \label{eq:Ff=Z}
 \end{align}
\end{proof}
Intuitively, \cref{prop:flow-initial-state} justifies the use of the term ``flow", introduced by~\citet{bengio2021flow}, by analogy with a stream of particles flowing from the initial state to the final states.

We use the letter $Z$ in \cref{def:total_flow}, often used to denote the partition function in probabilistic models and statistical mechanics, because it is a normalizing constant which can turn the measure space $(\gT, 2^\gT, F)$ defined above into the probability space $(\gT, 2^\gT, P)$:
\begin{definition}
\label{def:p}
Given a flow network $(G, F)$, the \textbf{flow probability} is the probability measure $P$ over the measurable space $(\gT, 2^\gT)$ associated with $F$:
\begin{equation}
 \forall A \subseteq \gT \ \ P(A) \defeq \frac{F(A)}{F(\gT)} = \frac{F(A)}{Z}.
 \label{eq:p-tau}
\end{equation}
For two events $A, B \subseteq \gT$, the conditional probability $P(A \mid B)$ thus satisfies:
\begin{align}
 P(A \mid B) &\defeq \frac{F(A{\cap}B)}{F(B)}.
\label{eq:cond-p}
\end{align}
\end{definition}
Similar to the flow $F$, we abuse the notation $P$ to define the probability of going through a state:
\begin{equation}
\label{eq:P-A}
 \forall s \in \gS \qquad  P(s) \defeq \frac{F(s)}{Z},
\end{equation}
and similarly for the probability of going through an edge. Note that $P(s)$ \emph{does not} correspond to a distribution over states, in the sense that $\sum_{s\in\gS} P(s) \neq 1$; in particular, it is easy to see that $P(s_{0}) = 1$ (in other words, the probability of a trajectory passing through the initial state $s_{0}$ is $1$). Additionally, for a trajectory $\tau \in \gT$, we also use the abuse of notation $P(\tau)$ instead of $P(\{\tau\})$ to denote the probability of going through a specific trajectory $\tau$.


\begin{definition}
\label{def:transitionprob}
Given a flow network $(G, F)$, the {\bf forward transition probability} operator $P_F$ is a function on $\gS \times \gS$, that is
a special case of the conditional probabilities induced by $F$ (\cref{eq:cond-p}):
\begin{equation}
\forall s \rightarrow s' \in \sA \ \ P_F(s'\mid s) \defeq P(s{\rightarrow}s'\mid s)=\frac{F(s{\rightarrow}s')}{F(s)}.
\label{eq:trans-p}
\end{equation}
Similarly, the {\bf backwards transition probability} is the operator defined by:
\begin{equation}
\forall s \rightarrow s' \in \sA \ \    P_B(s\mid s') \defeq P(s{\rightarrow}s'\mid s')=\frac{F(s{\rightarrow}s')}{F(s')}.
    \label{eq:trans-pb}
\end{equation}
Note how $P_F$ and $P_B$ are consistent with $G$ (in the sense of \cref{def:P_F-P_B-consistent}), as a consequence of \cref{prop:state-edge-flow}.

\end{definition}

Because flows define probabilities over states and edges, they can be used to define probability distributions over the terminating states of a graph (denoted by $\gS^f = Par(s_f)$) as follows:
\begin{definition}
\label{def:flow-P_T}
Given a flow network $(G, F)$, the \textbf{terminating state probability} $P_T$ is the probability over terminating states $\gS^f$ under the flow probability $P$:
\begin{equation}
    \label{eq:flow-P_T}
    \forall s \in \gS^f \ \ P_T(s) \defeq P(s \rightarrow s_f) = \frac{F(s \rightarrow s_f)}{Z}
\end{equation}
\end{definition}
Contrary to the probability $P(s)$ of going through a state $s$, the terminating state probability $P_{T}$ is a well-defined distribution over the terminating states $s\in \gS^{f}$, in the following sense:
\begin{proposition}
\label{prop:P_T-probability}
The terminating state probability $P_{T}$ is a well-defined distribution over the terminating states $s \in \gS^{f}$, in that $P_{T}(s) \geq 0$ for all $s \in \gS^{f}$, and
\begin{equation*}
    \sum_{s\in\gS^{f}}P_{T}(s) = 1.
\end{equation*}
\end{proposition}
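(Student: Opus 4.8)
The plan is to reduce the claim to two facts already established in the excerpt: the incoming-edge decomposition of a state flow (\cref{prop:state-edge-flow}, specifically \cref{eq:state-flow-incoming-edge-flows}) and the identity $F(s_f) = Z$ (\cref{prop:flow-initial-state}, \cref{eq:Ff=Z}). Non-negativity is immediate: for each $s \in \gS^f$ we have $P_T(s) = F(s \rightarrow s_f)/Z$, and $F(s \rightarrow s_f) \geq 0$ because $F$ is a non-negative function (a sum of the non-negative quantities $F(\tau)$), while $Z > 0$ is the total flow, so the quotient is well-defined and non-negative.

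For the normalization, first I would recall that $\gS^f = Par(s_f)$ by definition of the set of terminating states. Then I would apply \cref{eq:state-flow-incoming-edge-flows} to the state $s' = s_f$ (which is legitimate since $s_f \neq s_0$), obtaining
\begin{equation*}
    F(s_f) = \sum_{s \in Par(s_f)} F(s \rightarrow s') = \sum_{s \in \gS^f} F(s \rightarrow s_f).
\end{equation*}
Combining this with $F(s_f) = Z$ from \cref{prop:flow-initial-state} gives $\sum_{s \in \gS^f} F(s \rightarrow s_f) = Z$. Dividing through by $Z$ then yields
\begin{equation*}
    \sum_{s \in \gS^f} P_T(s) = \frac{1}{Z}\sum_{s \in \gS^f} F(s \rightarrow s_f) = \frac{Z}{Z} = 1,
\end{equation*}
which is the desired conclusion.

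There is essentially no hard step here; the proof is a two-line bookkeeping argument once the right earlier results are invoked. The only point that deserves care is matching the index set correctly — making sure that the sum over terminating edges $s \rightarrow s_f$ really is indexed by $Par(s_f)$ and hence by $\gS^f$ — and tacitly assuming $Z \neq 0$ so that $P_T$ is defined in the first place (the degenerate all-zero flow being excluded, as is implicit throughout this section).
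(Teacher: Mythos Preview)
Your proof is correct and follows essentially the same approach as the paper: invoke non-negativity of the flow, use $\gS^f = Par(s_f)$ together with \cref{prop:state-edge-flow} to write $\sum_{s\in\gS^f}F(s\rightarrow s_f) = F(s_f)$, and conclude via $F(s_f)=Z$ from \cref{prop:flow-initial-state}. There is a small typo in your first display (you wrote $F(s\rightarrow s')$ where you meant $F(s\rightarrow s_f)$), but otherwise the argument matches the paper's proof.
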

\begin{proof}
Since the flow $F(s\rightarrow s_{f})$ is non-negative, it is easy to see that $P_{T}(s) \geq 0$. Moreover, using the definition of $\gS^{f} = Par(s_{f})$, \cref{prop:state-edge-flow} (relating the edge flows and the state flows), and \cref{prop:flow-initial-state} ($F(s_{f}) = Z$), we have
\begin{equation*}
    \sum_{s\in\gS^{f}}P_{T}(s) = \frac{1}{Z}\sum_{s\in\gS^{f}}F(s\rightarrow s_{f}) = \frac{1}{Z}\sum_{s\in Par(s_{f})}F(s\rightarrow s_{f}) = \frac{F(s_{f})}{Z} = 1.
\end{equation*}
\end{proof}
The terminating state probability is particularly important in the context of estimating flow networks (see \cref{sec:GFlowNets-Learning-a-Flow}), as it shows that a flow network $(G, F)$ induces a probability distribution over terminating states which is proportional to the terminating flows ${F(s\rightarrow s_{f})}$, the normalization constant $Z$ being given by initial flow $F(s_{0})$.

\subsection{Markovian Flows}
\label{sec:markovian-flows}
Defining a flow requires the specification of $|\gT|$ non-negative values (one for every trajectory $\tau \in \gT$), which is generally exponential in the number of graph edges. Markovian flows however have the remarkable property that they can be defined with much fewer ``numbers'', given that trajectory flows factorize according to $G$. 
\begin{definition}
\label{def:markov}
Let $(G, F)$ be a flow network, with flow probability measure $P$. $F$ is called a {\bf Markovian flow} (or equivalently $(G, F)$ a {\bf Markovian flow network})
if, for any state $s \neq s_0$, outgoing edge $s{\rightarrow}s' $, and for any trajectory $\tau = (s_0, s_1, \dots, s_n=s) \in \gT^{partial}$ starting in $s_0$ and ending in $s$:
\begin{equation}
 P(s{\rightarrow}s' \mid \tau) = P(s{\rightarrow}s' \mid s) = P_F(s' \mid s).
\label{eq:markov}
\end{equation}
\end{definition}

Note that the Markovian property does not hold for all of the flows as defined in the previous sections (e.g. \cref{fig:equivalentflows}). Intuitively, a flow can be considered non-Markovian if a particle in the ``flow stream" can remember its past history; if not, its future behavior can only depend on its current state and the flow must be Markovian. In this work, we will primarily be concerned with Markovian flows, though later we will re-introduce a form of memory via state-conditional flows that allow each flow ``particle" to remember parts of its history. The following proposition shows that Markovian flows have the property that the flows at (or the probabilities of) complete trajectories factorize according the the graph, and that it is a sufficient condition for defining Markovian flows.

\begin{proposition}
\label{prop:markovian-equivalences}
Let $(G, F)$ be a flow network, and $P$ the corresponding flow probability. The following three statements are equivalent:
\begin{enumerate}
    \item $F$ is a \emph{Markovian} flow
    \item There exists a unique probability function $\hat{P}_{F}$ consistent with $G$ such that for all complete trajectories $\tau = (s_{0}, \ldots, s_{n+1} = s_{f})$:
    \begin{equation}
        P(\tau) = \prod_{t=1}^{n+1} \hat{P}_{F}(s_{t}\mid s_{t-1}).
        \label{eq:markov-forward-decomposition}
    \end{equation}
    Moreover, the probability function $\hat{P}_{F}$ is exactly the forward transition probability associated with the flow probability $P$: $\hat{P}_{F} = P_{F}$.
    \item There exists a unique probability function $\hat{P}_{B}$ consistent with $G$ such that for all complete trajectories $\tau = (s_{0}, \ldots, s_{n+1} = s_{f})$:
    \begin{equation}
        P(\tau) = \prod_{t=1}^{n+1} \hat{P}_{B}(s_{t-1}\mid s_{t}).
        \label{eq:markov-backwards-decomposition}
    \end{equation}
    Moreover, the probability function $\hat{P}_{B}$ is exactly the backwards transition probability associated with the flow probability $P$: $\hat{P}_{B} = P_{B}$.
\end{enumerate}
\end{proposition}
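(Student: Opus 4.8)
The plan is to prove the equivalence $(1)\Leftrightarrow(2)$ in full and then deduce $(1)\Leftrightarrow(3)$ from it. The common backbone is an elementary bookkeeping fact: since $G$ is a pointed DAG and every complete trajectory starts at $s_0$, for a partial trajectory $\rho=(s_0,\dots,s_k)$ the event ``a complete trajectory begins with $\rho$'' is precisely $\bigcap_{t=1}^{k}\{\tau'\in\gT : s_{t-1}{\rightarrow}s_t\in\tau'\}$, the intersection of the edge-events of $\rho$; symmetrically for terminal segments ending at $s_f$. I will read $P(s{\rightarrow}s'\mid\tau)$ in \cref{eq:markov}, for a partial $\tau$ starting at $s_0$, as conditioning (via \cref{eq:cond-p}) on the event that the complete trajectory begins with $\tau$. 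Throughout I use that $P_F,P_B$ of \cref{def:transitionprob} are well defined, are consistent with $G$, and that $F(s_0)=F(s_f)=Z$ by \cref{prop:flow-initial-state}.

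\textbf{$(1)\Rightarrow(2)$.} Assume $F$ is Markovian. Given a complete trajectory $\tau=(s_0,\dots,s_{n+1}=s_f)$, I would write $\{\tau\}$ as the intersection of its $n+1$ edge-events and expand $P(\tau)$ by the chain rule for $P$: after the first $t$ edges the conditioning event is exactly ``begins with $(s_0,\dots,s_t)$'', so \cref{eq:markov} (under the reading above) turns the next factor into $P_F(s_{t+1}\mid s_t)$, while the first factor is $P(s_0{\rightarrow}s_1)=F(s_0{\rightarrow}s_1)/Z=F(s_0{\rightarrow}s_1)/F(s_0)=P_F(s_1\mid s_0)$. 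This produces \cref{eq:markov-forward-decomposition} with $\hat P_F=P_F$, proving existence; uniqueness drops out of the converse.

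\textbf{$(2)\Rightarrow(1)$.} Suppose $\hat P_F$ is consistent with $G$ and satisfies \cref{eq:markov-forward-decomposition}. Each complete $\tau'$ beginning with a fixed $\rho=(s_0,\dots,s_k)$ is $\rho$ concatenated with some $\eta\in\gT_{s_k,f}$, and by \cref{eq:markov-forward-decomposition} its probability factors as $\hat P_F(\rho)\hat P_F(\eta)$; summing over $\eta$ gives $P(\text{begins with }\rho)=\hat P_F(\rho)\sum_{\eta\in\gT_{s_k,f}}\hat P_F(\eta)=\hat P_F(\rho)$, the sum collapsing to $1$ by \cref{eq:PF-extension-valid} of \cref{lemma:PF-PB-extension}; this is the one genuinely load-bearing step. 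Consequently, for any edge $s{\rightarrow}s'$, $P(s)=\sum_{\rho\in\gT_{0,s}}\hat P_F(\rho)$ and $P(s{\rightarrow}s')=\sum_{\rho\in\gT_{0,s}}\hat P_F(\rho)\hat P_F(s'\mid s)=\hat P_F(s'\mid s)P(s)$, so $\hat P_F(s'\mid s)=F(s{\rightarrow}s')/F(s)=P_F(s'\mid s)$; hence $\hat P_F=P_F$, which is also the uniqueness asserted in~(2). Finally, for $\tau\in\gT^{partial}$ from $s_0$ to $s$ and $s'\in Child(s)$, the intersection of the events ``$s{\rightarrow}s'$ is used'' and ``begins with $\tau$'' is ``begins with $(\tau,s')$'', so $P(s{\rightarrow}s'\mid\tau)$ equals $\hat P_F((\tau,s'))/\hat P_F(\tau)=\hat P_F(s'\mid s)=P_F(s'\mid s)$, which is \cref{eq:markov}; thus $F$ is Markovian.

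\textbf{$(1)\Leftrightarrow(3)$ and the main obstacle.} Rather than mirror the Markov step (the property in \cref{eq:markov} is one-sided), I would observe that for every complete trajectory $\prod_{t=1}^{n+1}P_F(s_t\mid s_{t-1})=\prod_{t=1}^{n+1}P_B(s_{t-1}\mid s_t)$, because $P_F(s_t\mid s_{t-1})=F(s_{t-1}{\rightarrow}s_t)/F(s_{t-1})$ and $P_B(s_{t-1}\mid s_t)=F(s_{t-1}{\rightarrow}s_t)/F(s_t)$ and the denominator products telescope to $F(s_0)$ and $F(s_f)$, both equal to $Z$ (\cref{prop:flow-initial-state}); combined with $(1)\Leftrightarrow(2)$ this yields \cref{eq:markov-forward-decomposition}$\Leftrightarrow$\cref{eq:markov-backwards-decomposition}, and the uniqueness of $\hat P_B$ follows from the terminal-segment analogue of the $(2)\Rightarrow(1)$ computation, now invoking \cref{eq:PB-extension-valid}. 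The difficulty I anticipate is not conceptual but lies in carefully justifying that prefix/terminal-segment events equal the corresponding intersections of edge-events (using that $G$ is a pointed DAG) and the chain-rule expansion, and in treating states $s$ with $F(s)=0$, for which $P_F(s'\mid s)=F(s{\rightarrow}s')/F(s)$ is a $0/0$ expression — these should be excluded by an implicit positivity/reachability assumption or handled by the standard convention that conditional probabilities given null events are fixed arbitrarily but consistently.
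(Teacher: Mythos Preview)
Your proposal is correct and follows essentially the same route as the paper: the chain rule plus \cref{eq:markov} for $(1)\Rightarrow(2)$, \cref{lemma:PF-PB-extension} to collapse the continuation sums in $(2)\Rightarrow(1)$ and establish $\hat P_F=P_F$, and the telescoping identity $\prod_t P_F(s_t\mid s_{t-1})=\prod_t P_B(s_{t-1}\mid s_t)$ via $F(s_0)=F(s_f)=Z$ to pass between the forward and backward decompositions. Your remark on the $F(s)=0$ edge case is a genuine caveat the paper simply leaves implicit.
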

\begin{proof}
Recall from \cref{lemma:PF-PB-extension} the notations $\gT_{0,s}$ to denote the set of partial trajectories from $s_{0}$ to $s$, and $\gT_{s',f}$ to denote the set of partial trajectories from $s'$ to $s_{f}$. We will prove the equivalences $1 \Leftrightarrow 2$ and $1 \Leftrightarrow 3$.

\begin{itemize}
    \item $1 \Rightarrow 2$: Suppose that $F$ is a Markovian flow. Then using the laws of probability, the Markov property in \cref{eq:markov}, and $P(s_{0}) = 1$, for some complete trajectory $\tau = (s_{0}, \ldots, s_{n+1} = s_{f})$:
    \begin{align*}
        P(\tau) &= P(s_{0} \rightarrow s_{1} \rightarrow \ldots \rightarrow s_{n+1}) = P(s_{0}\rightarrow s_{1})\prod_{t=1}^{n}P(s_{t} \rightarrow s_{t+1} \mid s_{0} \rightarrow \ldots \rightarrow s_{t})\\
        &= P(s_{0} \rightarrow s_{1}) \prod_{t=1}^{n}P(s_{t}\rightarrow s_{t+1} \mid s_{t})\\
        &= P(s_{0})P_{F}(s_{1}\mid s_{0})\prod_{t=1}^{n}P_{F}(s_{t+1}\mid s_{t})\\
        &= \prod_{t=1}^{n+1}P_{F}(s_{t}\mid s_{t-1}),
    \end{align*}
    where the second line uses to Markov property, and the third line uses the definition of the forward transition probability $P_{F}$. $P_F$ thus satisfies \cref{eq:markov-forward-decomposition} for all complete trajectories.
    
    To show uniqueness of $P_F$, assume \cref{eq:markov-forward-decomposition} is satisfied by some $\hat{P}_F$ for all complete trajectories. By definition of the forward transition probability:
    \begin{equation*}
        P_{F}(s'\mid s) \defeq P(s \rightarrow s'\mid s) = \frac{P(s\rightarrow s')}{P(s)}.
    \end{equation*}
    Any complete trajectory $\tau$ going through a state $s$ can be (uniquely) decomposed into a partial trajectory $\tau' \in \gT_{0,s}$ from $s_{0}$ to $s$, and a partial trajectory $\tau'' \in \gT_{s,f}$ from $s$ to $s_{f}$. Using the definition of $P(s)$, we have:
    \begin{align*}
        P(s) &= \sum_{\tau\,:\,s\in\tau}P(\tau) = \sum_{\tau\,:\,s\in\tau}\prod_{(s_{t}\rightarrow s_{t+1}) \in \tau}\hat{P}_{F}(s_{t+1}\mid s_{t})\\
        &= \left[\sum_{\tau'\in\gT_{0,s}}\prod_{(s_{t}\rightarrow s_{t+1})\in \tau'}\hat{P}_{F}(s_{t+1}\mid s_{t})\right]\underbrace{\left[\sum_{\tau''\in\gT_{s,f}}\prod_{(s_{t}\rightarrow s_{t+1})\in \tau''}\hat{P}_{F}(s_{t+1}\mid s_{t})\right]}_{=\,1\quad \textrm{(\cref{lemma:PF-PB-extension})}}\\
        &= \sum_{\tau'\in\gT_{0,s}}\prod_{(s_{t}\rightarrow s_{t+1})\in \tau'}\hat{P}_{F}(s_{t+1}\mid s_{t}).
    \end{align*}
    Similarly, any complete trajectory going through $s \rightarrow s'$ can be (uniquely) decomposed into a partial trajectory $\tau' \in \gT_{0,s}$ from $s_{0}$ to $s$, and a partial trajectory $\tau'' \in \gT_{s',f}$ from $s'$ to $s_{f}$. Again, using the definition of $P(s\rightarrow s')$:
    \begin{align*}
        P(s\rightarrow s') &= \sum_{\tau\,:\,(s\rightarrow s')\in \tau}P(\tau) = \sum_{\tau\,:\,(s\rightarrow s')\in\tau} \prod_{(s_{t}\rightarrow s_{t+1})\in\tau}\hat{P}_{F}(s_{t+1}\mid s_{t})\\
        &= \underbrace{\left[\sum_{\tau'\in\gT_{0,s}}\prod_{(s_{t}\rightarrow s_{t+1})\in\tau'}\hat{P}_{F}(s_{t+1}\mid s_{t})\right]}_{=\,P(s)}\hat{P}_{F}(s'\mid s)\underbrace{\left[\sum_{\tau''\in\gT_{s',f}}\prod_{(s_{t}\rightarrow s_{t+1})\in\tau''}\hat{P}_{F}(s_{t+1}\mid s_{t})\right]}_{=\,1\quad \textrm{(\cref{lemma:PF-PB-extension})}}\\
        &= P(s)\hat{P}_{F}(s'\mid s).
    \end{align*}
    Combining the two results above, we get:
    \begin{equation*}
        P_{F}(s'\mid s) = \frac{P(s\rightarrow s')}{P(s)} = \hat{P}_{F}(s'\mid s).
    \end{equation*}
    
    \item $2 \Rightarrow 1$: Suppose that there exists a probability function $\hat{P}_{F}$ consistent with $G$ such that for some complete trajectory $\tau = (s_{0}, \ldots, s_{n+1} = s_{f})$
    \begin{equation*}
        P(\tau) = \prod_{t=1}^{n+1}\hat{P}_{F}(s_{t} \mid s_{t-1}).
    \end{equation*}
For the same reasons as those used to justify the uniqueness in the $1 \Rightarrow 2$ proof,  $\hat{P}_{F}$ is necessarily equal to the forward transition probability $P_F$, associated with $P$.
     
    We now want to show that the flow $F$ associated with $P$ is Markovian, by showing the Markov property from \cref{eq:markov}. Let $\tau' \in \gT_{0, s}$ be any partial trajectory from $s_{0}$ to $s$; using the definition of conditional probability:
    \begin{equation*}
        P(s\rightarrow s' \mid \tau') = \frac{P(s_{0}\rightarrow \ldots \rightarrow s \rightarrow s')}{P(s_{0}\rightarrow \ldots \rightarrow s)}.
    \end{equation*}
    Following the same idea as above, we will now rewrite $P(s_{0} \rightarrow \ldots \rightarrow s)$, as a sum over complete trajectories that share the same prefix trajectory $\tau'$. Any such complete trajectory $\tau$ can be (uniquely) decomposed into this common prefix $\tau'$, and a partial trajectory $\tau'' \in \gT_{s,f}$ from $s$ to $s_{f}$. 
    \begin{align*}
        P(s_{0}\rightarrow \ldots \rightarrow s) &= \sum_{\tau\,:\,\tau'\subseteq \tau}P(\tau) = \sum_{\tau\,:\,\tau'\subseteq \tau}\prod_{(s_{t}\rightarrow s_{t+1})\in\tau}P_{F}(s_{t+1}\mid s_{t})\\
        &= \left[\vphantom{\sum_{\tau''\in\gT_{s',f}}}\prod_{s_{t-1} \rightarrow s_t \in \tau'}P_{F}(s_{t}\mid s_{t-1})\right]\underbrace{\left[\sum_{\tau''\in\gT_{s,f}}\prod_{(s_{t}\rightarrow s_{t+1})\in\tau''}P_{F}(s_{t+1}\mid s_{t})\right]}_{=\,1\quad \textrm{(\cref{lemma:PF-PB-extension})}}\\[-1em]
        &= \prod_{s_{t-1} \rightarrow s_t \in \tau'}P_{F}(s_{t}\mid s_{t-1}).
    \end{align*}
    Similarly, any complete trajectory $\tau$ that share the same prefix trajectory $(s_{0}, \ldots, s, s')$ can be (uniquely) decomposed into this common prefix, and a partial trajectory $\tau''\in\gT_{s',f}$ from $s'$ to $s_{f}$, leading to:
    \begin{align*}
        P(s_{0}\rightarrow \ldots \rightarrow s \rightarrow s') = P(s_{0}\rightarrow \ldots \rightarrow s)P_{F}(s'\mid s)
    \end{align*}
    Combining the two results above, we can conclude that $P$ satisfies the Markov property, and therefore that the flow $F$ is Markovian:
    \begin{equation*}
        P(s'\rightarrow s\mid \tau') = \frac{P(s_{0}\rightarrow \ldots \rightarrow s \rightarrow s')}{P(s_{0}\rightarrow \ldots \rightarrow s)} = P_{F}(s'\mid s) = P(s'\rightarrow s\mid s)
    \end{equation*}
    
    \item $\{1,2\}\Rightarrow 3$: Suppose that $F$ is a Markovian flow. We have shown above that this is equivalent to $P$ being decomposed into a product of forward transition probabilities $P_{F}$. For some complete trajectory $\tau = (s_{0}, \ldots, s_{n+1}=s_{f})$:
    \begin{equation*}
        P(\tau) = \prod_{t=1}^{n+1}P_{F}(s_{t}\mid s_{t-1}) = \prod_{t=1}^{n+1}\frac{P(s_{t-1} \rightarrow s_{t})}{P(s_{t-1})} = \prod_{t=1}^{n+1}\frac{P(s_{t-1}\rightarrow s_{t})}{P(s_{t})} = \prod_{t=1}^{n+1}P_{B}(s_{t-1}\mid s_{t}),
    \end{equation*}
    where the third equality uses the fact that $P(s_{0}) = P(s_{f}) = 1$, and using the definition of the backwards transition probability $P_{B}$. The proof of uniqueness of $P_B$ is similar to that of $P_F$ in $1 \Rightarrow 2$, and uses:
    \begin{align*}
        P(s\rightarrow s') &= \sum_{\tau\,:\,(s\rightarrow s')\in\tau}P(\tau) = \sum_{\tau\,:\,(s\rightarrow s')\in\tau}\prod_{(s_{t}\rightarrow s_{t+1})\in\tau}\hat{P}_{B}(s_{t}\mid s_{t+1})\\
        &= \underbrace{\left[\sum_{\tau'\in\gT_{0,s}}\prod_{(s_{t}\rightarrow s_{t+1})\in\tau'}\hat{P}_{B}(s_{t}\mid s_{t+1})\right]}_{=\,1\quad \textrm{(\cref{lemma:PF-PB-extension})}}\hat{P}_{B}(s\mid s')\underbrace{\left[\sum_{\tau''\in\gT_{s',f}}\prod_{(s_{t}\rightarrow s_{t+1})\in\tau''}\hat{P}_{B}(s_{t}\mid s_{t+1})\right]}_{=\,P(s')}\\
        &= P(s')\hat{P}_{B}(s\mid s'),
    \end{align*}
    
    \item $3 \Rightarrow 1$: Similar to the proof of $2 \Rightarrow 1$, $\hat{P}_B$ is necessarily equal to the backwards transition probability $P_B$ associated with $P$. Additionally, $P_{B}$ is related to the forward transition probability $P_{F}$:
    \begin{equation*}
        P(s\rightarrow s') = P_{B}(s\mid s')P(s') = P_{F}(s'\mid s)P(s).
    \end{equation*}
    We can therefore write the decomposition of $P$ in terms of $P_{F}$, instead of $P_{B}$. For some complete trajectory $\tau = (s_{0}, \ldots, s_{n+1}=s_{f})$:
    \begin{align*}
        P(\tau) &= \prod_{t=1}^{n+1}P_{B}(s_{t-1}\mid s_{t}) = \prod_{t=1}^{n+1}\frac{P(s_{t-1})}{P(s_{t})}P_{F}(s_{t+1}\mid s_{t}) = \frac{P(s_{0})}{P(s_{f})}\prod_{t=1}^{n+1}P_{F}(s_{t+1}\mid s_{t})\\
        &= \prod_{t=1}^{n+1}P_{F}(s_{t+1}\mid s_{t}),
    \end{align*}
    where we used the fact that $P(s_{0}) = P(s_{f}) = 1$. Using ``$2\Rightarrow 1$'', we can conclude that $F$ is a Markovian flow.
\end{itemize}
\end{proof}

The decomposition of \cref{eq:markov-forward-decomposition} shows how Markovian flows can be used to draw terminating states from the terminating state probability $P_T$ (\cref{eq:flow-P_T}). Namely, we have the following result:
\begin{corollary}
\label{cor:sampling-from-P_T}
Let $(G, F)$ be a Markovian flow network, and $P_F$ the corresponding forward transition probability. Consider the procedure starting from $s = s_0$, and iteratively drawing one sample from $P_F(. \mid s)$ until reaching $s_f$. Then the probability of the procedure terminating in a state $s$ is $P_T(s)$.
\end{corollary}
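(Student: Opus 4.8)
The plan is to view the procedure as generating a random complete trajectory, show that the law of that trajectory is exactly the flow probability $P$ on $\gT$, and then recognize ``terminating in $s$'' as the trajectory event whose $P$-probability is already computed in \cref{def:flow-P_T}.

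First I would check that the procedure is well posed and halts. Every state $s\neq s_f$ satisfies $s < s_f$, hence has at least one child, and $P_F(\cdot\mid s)$ is a distribution supported on $Child(s)$; so a next state is always available until $s_f$ is reached. (Along any run, every visited state has positive flow: $F(s_0)=Z$, and if $F(s_{t-1})>0$ and $s_t$ is drawn then $F(s_{t-1}{\rightarrow}s_t)>0$, so $F(s_t)>0$ by \cref{prop:state-edge-flow}; thus the ratios defining $P_F$ are meaningful along the run.) Since $G$ is a finite DAG, the visited states are strictly increasing for ``$<$'', so none repeats and the run has bounded length; and since $s_f$ is the only childless state, the run reaches $s_f$ after finitely many steps with probability $1$. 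It therefore returns a random complete trajectory $\tau=(s_0,s_1,\dots,s_n,s_{n+1}=s_f)$.

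Next I would compute the law of $\tau$. Conditioned on the states drawn so far, step $t$ picks $s_t$ from $P_F(\cdot\mid s_{t-1})$ independently of the earlier choices, so by the chain rule the probability that the procedure outputs a given complete trajectory $\tau=(s_0,\dots,s_{n+1}=s_f)$ is $\prod_{t=1}^{n+1}P_F(s_t\mid s_{t-1})$. By statement~2 of \cref{prop:markovian-equivalences}, namely \cref{eq:markov-forward-decomposition}, this product equals $P(\tau)$. Hence the procedure samples complete trajectories exactly according to the flow probability $P$.

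Finally, the procedure terminates in a state $s$ precisely when the last transition of $\tau$ is $s{\rightarrow}s_f$, i.e.\ when $\tau\in\{\tau'\in\gT : s{\rightarrow}s_f\in\tau'\}$; moreover each complete trajectory has a unique penultimate state, which lies in $\gS^f=Par(s_f)$, so these events partition $\gT$ over $s\in\gS^f$ (a consistency check against \cref{prop:P_T-probability}). Therefore the probability that the procedure terminates in $s$ is $\sum_{\tau\,:\,s{\rightarrow}s_f\in\tau}P(\tau)=P(s{\rightarrow}s_f)=P_T(s)$, the last equality being \cref{def:flow-P_T}. I do not anticipate a real obstacle: the only delicate points are the almost-sure termination of the sampling loop, which is handled by finiteness and acyclicity of $G$, and matching the informal ``iteratively draw from $P_F$'' with the forward factorization of \cref{eq:markov-forward-decomposition}; everything else is bookkeeping with trajectory events.
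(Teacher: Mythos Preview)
Your proof is correct and follows essentially the same approach as the paper: both argue termination via acyclicity, identify the law of the sampled trajectory with $P(\tau)$ via the forward factorization of \cref{eq:markov-forward-decomposition}, and then sum over trajectories containing $s\rightarrow s_f$ to recover $P_T(s)$. Your version is simply more explicit about well-posedness (positivity of flows along the run) and adds a consistency check against \cref{prop:P_T-probability}, but the argument is the same.
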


\begin{proof}
First, note that the procedure terminates with probability 1, given that $G$ is acyclic.

For the procedure to terminate in a state $s$, it means that the trajectory $\tau \in \gT$ implicitly constructed during the procedure contains the edge $s \rightarrow s_f$. The probability of the procedure terminating in $s$ is thus:
\begin{align*}
    \sum_{\tau \in \gT: s \rightarrow s_f \in \tau} \underbrace{\prod_{s' \rightarrow s'' \in \tau} P_F(s'' \mid s')}_{P(\tau) \text{, according to \cref{eq:markov-forward-decomposition}}} = P(s \rightarrow s_f) = P_T(s) 
\end{align*}
\end{proof}
The following proposition shows that, as a consequence of the \cref{prop:markovian-equivalences}, we obtain three different parametrizations of Markovian flows.
\begin{proposition}
\label{prop:flow-parametrizations}
Given a pointed DAG $G = (\gS, \sA)$, a Markovian flow on $G$ is \emph{completely} and \emph{uniquely} specified by one of the following:
\begin{enumerate}
    \item the combination of the total flow $\hat{Z}$ and the forward transition probabilities $\hat{P}_F(s' \mid s)$ for all edges $s \rightarrow s' \in \sA$,
    \item the combination of the total flow $\hat{Z}$ and the backward transition probabilities $\hat{P}_B(s \mid s')$ for all edges $s \rightarrow s' \in \sA$.
    \item the combination of the terminating flows $ \hat{F}(s{\rightarrow}s_f)$ for all terminating edges $s \rightarrow s_f \in \sA^f$
    and the backwards transition probabilities $\hat{P}_B(s \mid s')$ for all non-terminating edges $s \rightarrow s' \in \sA^{-f}$,
\end{enumerate}
\end{proposition}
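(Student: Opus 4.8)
The plan is to handle the three parametrizations in turn, using \cref{lemma:PF-PB-extension} (a consistent $\hat P_F$ or $\hat P_B$ extends to a probability measure over the relevant sets of partial trajectories) and \cref{prop:markovian-equivalences} (the forward/backward trajectory decompositions of Markovian flows) as the main tools, and reducing (3) to (2). For parametrization (1), given a total flow $\hat Z>0$ and a forward probability function $\hat P_F$ consistent with $G$, I would \emph{define} $F$ on complete trajectories by $F(\tau)\defeq \hat Z\,\prod_{t=1}^{n+1}\hat P_F(s_t\mid s_{t-1})$ for $\tau=(s_0,\dots,s_{n+1}=s_f)$. Applying \cref{eq:PF-extension-valid} of \cref{lemma:PF-PB-extension} at $s=s_0$ gives $\sum_{\tau\in\gT}\prod_t\hat P_F(s_t\mid s_{t-1})=1$, hence $F(\gT)=\hat Z$, and the induced flow probability $P$ satisfies \cref{eq:markov-forward-decomposition} with this $\hat P_F$; the implication $2\Rightarrow 1$ of \cref{prop:markovian-equivalences} then shows $F$ is Markovian with forward transition probability exactly $\hat P_F$, its total flow being $\hat Z$ by construction. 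For uniqueness, two Markovian flows $F_1,F_2$ with the same $\hat Z$ and $\hat P_F$ must, by \cref{eq:markov-forward-decomposition}, satisfy $F_i(\tau)=\hat Z\prod_t\hat P_F(s_t\mid s_{t-1})$ for every $\tau\in\gT$, hence coincide as functions on $\gT$ and therefore as measures.

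Parametrization (2) is the exact mirror image: repeat the argument with $\hat P_B$ in place of $\hat P_F$, using \cref{eq:PB-extension-valid} of \cref{lemma:PF-PB-extension} at $s'=s_f$ for the normalization and the implication $3\Rightarrow 1$ of \cref{prop:markovian-equivalences} for Markovianity and for the identification of the backward transition probability.

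For parametrization (3) I would reduce to (2). The key observation is that the terminating flows already pin down the total flow: by \cref{prop:state-edge-flow} and \cref{prop:flow-initial-state}, any Markovian flow with prescribed terminating flows $\hat F(s{\rightarrow}s_f)$ must have $Z=F(s_f)=\sum_{s\in\gS^f}\hat F(s{\rightarrow}s_f)$. Assuming this sum is positive (if it vanishes, every terminating flow is zero, forcing the identically-zero flow and making the claim trivial), I extend the given partial backward function by $\hat P_B(s\mid s_f)\defeq \hat F(s{\rightarrow}s_f)/Z$ for $s\in\gS^f=Par(s_f)$; this is non-negative and sums to $1$ over $Par(s_f)$, so together with the given values on $\sA^{-f}$ it is a backward probability function consistent with $G$. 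Parametrization (2), applied to $\hat Z=Z$ and this $\hat P_B$, produces a unique Markovian flow $F$; since its backward transition probability satisfies $P_B(s\mid s_f)=F(s{\rightarrow}s_f)/F(s_f)=F(s{\rightarrow}s_f)/Z$ and equals $\hat P_B(s\mid s_f)$, we recover $F(s{\rightarrow}s_f)=\hat F(s{\rightarrow}s_f)$, so $F$ realizes the prescribed data, with its backward transitions on $\sA^{-f}$ the given ones. Uniqueness follows because any Markovian flow matching the data has total flow $Z$ and the same \emph{complete} backward transition function (its value on terminating edges being forced as above), whence uniqueness from (2).

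The only step that is more than bookkeeping is the reduction in (3): recognizing that the one piece of data missing from the list — the backward transition out of the sink $s_f$ — is not free but is determined, up to the overall scale $Z$, by the terminating flows, together with disposing of the degenerate $Z=0$ case. Parts (1) and (2) are essentially immediate consequences of \cref{lemma:PF-PB-extension} and \cref{prop:markovian-equivalences}.
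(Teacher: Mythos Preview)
Your proposal is correct and follows essentially the same route as the paper: define $F(\tau)$ via $\hat Z$ times the product of $\hat P_F$ (resp.\ $\hat P_B$), invoke \cref{lemma:PF-PB-extension} for normalization and \cref{prop:markovian-equivalences} for Markovianity and uniqueness, and reduce (3) to (2) by setting $\hat Z=\sum_{s\in\gS^f}\hat F(s{\rightarrow}s_f)$ and extending $\hat P_B$ to terminating edges accordingly. Your treatment of (3) is in fact slightly more complete than the paper's, as you explicitly verify recovery of the terminating flows and dispose of the degenerate $Z=0$ case.
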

\begin{proof}
In the first two settings, we define a flow function $F:\gT \rightarrow \R^+$, at a trajectory $\tau =(s_0, s_1, \ldots, s_n, s_{n+1}=s_f)$ as:
\begin{enumerate}
    \item $F(\tau) \defeq \hat{Z}  \prod_{t=1}^{n+1} \hat{P}_F(s_t \mid s_{t-1})$,
    \item $F(\tau) \defeq \hat{Z} \prod_{t=1}^{n+1} \hat{P}_B(s_{t-1} \mid s_t)$
\end{enumerate}
We need to prove that it is the only Markovian flow that can be defined for both settings. The proof for the third setting will follow from that of the second setting.

\textbf{First setting:} 

First, we need to show that the total flow $Z$ associated with the flow function $F$ (\cref{eq:Z}) matches $\hat{Z}$. This is a consequence of \cref{lemma:PF-PB-extension}:
\begin{equation*}
    Z = \sum_{\tau \in \gT} F(\tau) = \hat{Z} \underbrace{\sum_{\tau = (s_0, s_1, \dots, s_{n+1}=s_f) \in \gT} \prod_{t=1}^{n+1} \hat{P}_F(s_t \mid s_{t-1})}_{=1 \ \ \text{, according to \cref{lemma:PF-PB-extension}}} = \hat{Z}
\end{equation*}

Then, we need to show that the forward transition probability function $P_F$ associated with $F$ (\cref{eq:trans-p}) matches $\hat{P}_F$, and that the flow $F$ is Markovian. To this end, note that the corresponding flow probability $P$ satisfies \cref{eq:markov-forward-decomposition}. Thus, as a consequence of \cref{prop:markovian-equivalences}, $F$ is a Markovian flow, and its forward transition probability function is $\hat{P}_F$.

As a last requirement, we need to show that if a Markovian flow $F'$ has a partition function $Z' = \hat{Z}$ and a forward transition probability function $P'_{F} = \hat{P}_F$, then it is necessarily equal to $F$. This is a direct consequence of \cref{prop:markovian-equivalences}, given that for any $\tau = (s_0, \dots, s_{n+1}=s_f) \in \gT$:
\begin{equation*}
    F'(\tau) = Z' \prod_{t=1}^{n+1} P'_{F}(s_t \mid s_{t-1}) = \hat{Z} \prod_{t=1}^{n+1} \hat{P}_{F}(s_t \mid s_{t-1}) = F(\tau)
\end{equation*}

\textbf{Second setting:}

First, we show that as a consequence of \cref{lemma:PF-PB-extension}, the total flow $Z$ associated with $F$ matches $\hat{Z}$:
\begin{equation*}
    Z = \sum_{\tau \in \gT} F(\tau) = \hat{Z} \underbrace{\sum_{\tau = (s_0, s_1, \dots, s_{n+1}=s_f) \in \gT} \prod_{t=1}^{n+1} \hat{P}_B(s_{-1} \mid s_{t})}_{=1 \ \ \text{, according to \cref{lemma:PF-PB-extension}}} = \hat{Z}
\end{equation*}

Second, we note that the flow probability $P$ associated with $F$ satisfies \cref{eq:markov-backwards-decomposition}. Thus, as a consequence of \cref{prop:markovian-equivalences}, $F$ is a Markovian flow, and its backward transition probability function is $\hat{P}_B$.

Finally, if a Markovian flow $F'$ has a partition function $Z' = \hat{Z}$ and a backward transition probability function $P_B' = \hat{P}_B$, then following \cref{prop:markovian-equivalences}, $\forall \tau \in \gT, \ F'(\tau) = F(\tau)$.

\textbf{Third setting}:

From the terminating flows $\hat{F}(s \rightarrow s_f)$ and the backwards transition probabilities $\hat{P}_B(s \mid s')$ for non-terminating edges, we can uniquely define a total flow $\hat{Z}$, and extend $\hat{P}_B$ to all edges as follows:
\begin{align*}
    \hat{Z} &\defeq \sum_{s \in Par(s_f)} \hat{F}(s \rightarrow s_f) \\
    \hat{P}_B(s \mid s') &\defeq \begin{cases}
    \hat{P}_B(s \mid s') \quad \text{if } s' \neq s_ f \\
    \frac{\hat{F}(s \rightarrow s_f)}{\hat{Z}} \quad \text{otherwise.}
    \end{cases}
\end{align*}
This takes us back to the second setting, for which we have already proven that with $\hat{Z}$ and $\hat{P}_B$ defined for all edges, a Markovian flow is uniquely defined.

\end{proof}



\subsection{Flow Matching Conditions}
\label{sec:flow-matching-conditions}
In \cref{prop:flow-parametrizations}, we saw how forward and backward probability functions can be used to uniquely define a Markovian flow. We will show in the next proposition how non-negative functions of states and edges can be used to define a Markovian flow. Such functions cannot be unconstrained (as $\hat{P}_F$ and $\hat{Z}$ in \cref{prop:flow-parametrizations} e.g.), as we have seen in \cref{prop:state-edge-flow}.
\begin{proposition}
\label{prop:flow-matching}
Let $G=(\gS, \sA)$ be a pointed DAG. Consider a non-negative function $\hat{F}$ taking as input either a state $s \in \gS$ or a transition $s{\rightarrow} s' \in \sA$. 
Then $\hat{F}$ corresponds to a flow if and only if the \textbf{flow matching conditions}: 
\begin{align}
    \forall s'>s_0,\;\; \hat{F}(s') & = \sum_{s\in Par(s')} \hat{F}(s{\rightarrow}s') \nonumber \\
    \forall s'<s_f,\;\; \hat{F}(s') &= \sum_{s''\in Child(s')} \hat{F}(s'{\rightarrow} s'')
\label{eq:flow-match}
\end{align}
are satisfied. More specifically, $\hat{F}$ uniquely defines a Markovian flow $F$ matching $\hat{F}$ on states and transitions: 
\begin{equation}
\label{eq:corresponding-flow}    
\forall \tau = (s_0, \dots, s_{n+1}=s_f) \in \gT \ \ F(\tau)=\frac{\prod_{t=1}^{n+1} \hat{F}(s_{t-1}{\rightarrow}s_t)}{\prod_{t=1}^{n}\hat{F}(s_t)}.
\end{equation}
\end{proposition}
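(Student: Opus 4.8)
The plan is to prove the two directions of the equivalence separately and then extract the uniqueness claim and the closed form \cref{eq:corresponding-flow} as a byproduct. The ``only if'' direction is immediate: if $\hat F$ is the state-and-edge flow of some flow network $(G,F)$ — that is, $\hat F(s)=F(s)$ for all $s$ and $\hat F(s{\to}s')=F(s{\to}s')$ for all transitions in the senses of \cref{eq:Fs} and \cref{eq:Fss'} — then, because $G$ is a pointed DAG, $s'>s_0$ is the same as $s'\in\gS\setminus\{s_0\}$ and $s'<s_f$ the same as $s'\in\gS\setminus\{s_f\}$, so the flow matching conditions \cref{eq:flow-match} are literally \cref{eq:state-flow-incoming-edge-flows} and \cref{eq:state-flow-outgoing-edge-flows} of \cref{prop:state-edge-flow}.

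For the ``if'' direction, assume \cref{eq:flow-match}. I would set $\hat Z \defeq \hat F(s_0)$ and, for each $s\in\gS\setminus\{s_f\}$, define $\hat P_F(s'\mid s)\defeq \hat F(s{\to}s')/\hat F(s)$ whenever $\hat F(s)>0$, and let $\hat P_F(\cdot\mid s)$ be any fixed probability distribution supported on $Child(s)$ whenever $\hat F(s)=0$ (the outgoing condition then forces $\hat F(s{\to}s')=0$ for every child, so the choice is immaterial). By the outgoing flow matching condition, $\sum_{s'\in Child(s)}\hat P_F(s'\mid s)=1$, so $\hat P_F$ is a forward probability function consistent with $G$; together with $\hat Z\ge 0$, \cref{prop:flow-parametrizations} (first parametrization) produces a unique Markovian flow $F$ with $F(\tau)=\hat Z\prod_{t=1}^{n+1}\hat P_F(s_t\mid s_{t-1})$ and $P_F=\hat P_F$.

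It then remains to check that this $F$ reproduces $\hat F$ on states and transitions, which I would do by strong induction along a topological order of $G$ (equivalently, on the length of the longest trajectory from $s_0$ to a state). The base case $s=s_0$ is \cref{prop:flow-initial-state}: $F(s_0)=Z=\hat Z=\hat F(s_0)$. For $s'\neq s_0$, \cref{prop:state-edge-flow} and \cref{def:transitionprob} give $F(s')=\sum_{s\in Par(s')}F(s{\to}s')=\sum_{s\in Par(s')}F(s)\,P_F(s'\mid s)$; every parent precedes $s'$ topologically, so the induction hypothesis $F(s)=\hat F(s)$ applies, and with $P_F=\hat P_F$ and the definition of $\hat P_F$ each summand equals $\hat F(s{\to}s')$ (the terms with $\hat F(s)=0$ vanish on both sides, since then $\hat F(s{\to}s')\le \hat F(s)=0$). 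The incoming flow matching condition then gives $F(s')=\sum_{s\in Par(s')}\hat F(s{\to}s')=\hat F(s')$, and the same splitting yields $F(s{\to}s')=F(s)\,P_F(s'\mid s)=\hat F(s{\to}s')$ for every edge. Uniqueness among Markovian flows is immediate: any Markovian $F'$ with $F'(s)=\hat F(s)$ and $F'(s{\to}s')=\hat F(s{\to}s')$ has $Z'=F'(s_0)=\hat Z$ and $P'_F=\hat P_F$ on the positive-flow support (the choices on zero-flow states affect no $F'(\tau)$), so \cref{prop:flow-parametrizations} forces $F'=F$. Finally, substituting $\hat P_F(s_t\mid s_{t-1})=\hat F(s_{t-1}{\to}s_t)/\hat F(s_{t-1})$ into $F(\tau)=\hat Z\prod_{t=1}^{n+1}\hat P_F(s_t\mid s_{t-1})$ and cancelling $\hat Z=\hat F(s_0)$ against the $t{=}1$ denominator gives exactly \cref{eq:corresponding-flow}.

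The conceptual step — reducing everything to \cref{prop:flow-parametrizations} via $\hat P_F(s'\mid s)=\hat F(s{\to}s')/\hat F(s)$ and $\hat Z=\hat F(s_0)$ — is short; the real work, and the main obstacle, is bookkeeping around states $s$ with $\hat F(s)=0$: making $\hat P_F(\cdot\mid s)$ well-defined and consistent there, checking that the corresponding summands cancel in the induction, and interpreting \cref{eq:corresponding-flow} on trajectories passing through such a state (where the numerator also vanishes and $F(\tau)=0$). One must also phrase the induction so that ``parents come earlier'' is genuinely justified. Neither point is deep, but both must be spelled out precisely because a priori $\hat F$ is only assumed non-negative.
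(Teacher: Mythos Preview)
Your proof is correct and follows essentially the same architecture as the paper: both define $\hat P_F(s'\mid s)=\hat F(s{\to}s')/\hat F(s)$ and $\hat Z=\hat F(s_0)$, invoke \cref{prop:flow-parametrizations} to obtain the unique Markovian flow $F$, and then verify that $F$ reproduces $\hat F$ on states and edges. The one substantive difference is in that verification step: the paper computes $F(s')$ directly by introducing $\hat P_B(s\mid s')\defeq\hat F(s{\to}s')/\hat F(s')$ and applying \cref{lemma:PF-PB-extension} to collapse the sum over $\gT_{0,s'}$, whereas you argue by strong induction along a topological order; both are clean, with the paper's being a one-line telescoping and yours being more self-contained. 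You are also more careful than the paper about states with $\hat F(s)=0$, which the paper's definition of $\hat P_F$ tacitly assumes away.
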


\begin{proof}
Necessity is a direct consequence of \cref{prop:state-edge-flow}. Let's show sufficiency. Let $\hat{P}_F$  be the forward probability function defined by:
\begin{align*}
    \forall s \rightarrow s' \in \sA \ \ \hat{P}_F(s' \mid s) \defeq \frac{\hat{F}(s \rightarrow s')}{\hat{F}(s)}.
\end{align*}
$\hat{P}_F$ is consistent with $G$ given that $\hat{F}$ satisfies the flow matching conditions (\cref{eq:flow-match}). Let $\hat{Z} = \hat{F}(s_0)$. According to \cref{prop:flow-parametrizations}, there exists a unique Markovian flow $F$ with forward transition probability function $P_F = \hat{P}_F$ and partition function $Z = \hat{Z}$, and such that for a trajectory $\tau = (s_0, \dots, s_{n+1}=s_f) \in \gT$:
\begin{equation}
\label{eq:corresponding-flow-proof}    
\forall \tau = (s_0, \dots, s_{n+1}=s_f) \in \gT \ \ F(\tau)= \hat{Z} \prod_{t=1}^{n+1} \hat{P}_F(s_t \mid s_{t-1}) = \frac{\prod_{t=1}^{n+1} \hat{F}(s_{t-1}{\rightarrow}s_t)}{\prod_{t=1}^{n}\hat{F}(s_t)}.
\end{equation}
Additionally, similar to the proof of \cref{prop:markovian-equivalences}, we can write for any state $s' \neq s_0$:
\begin{align*}
    F(s') &= \hat{Z} \sum_{\tau \in \gT_{0, s'}} \prod_{(s_t \rightarrow s_{t+1}) \in \tau} \hat{P}_F(s_{t+1} \mid s_t) \\
    &= \hat{Z} \frac{\hat{F}(s')}{\hat{F}(s_0)} \underbrace{\sum_{\tau \in \gT_{0, s'}} \prod_{(s_t \rightarrow s_{t+1}) \in \tau} \hat{P}_B(s_{t} \mid s_{t+1})}_{=1 \ \text{, according to \cref{lemma:PF-PB-extension}}} \\
    &= \hat{F}(s'),
\end{align*}
where $\hat{P}_B(s' \mid s) \defeq \frac{\hat{F}(s \rightarrow s')}{\hat{F}(s')}$ defines a backward probability function consistent with $G$. 
And because $\forall s \rightarrow s' \in \sA \ \ P_F(s' \mid s) = \hat{P}_F(s' \mid s)$, it follows that $\forall s \rightarrow s' \in \sA \ \ F(s \rightarrow s') = \hat{F}(s \rightarrow s')$.

To show uniqueness, let's consider a Markovian flow $F'$ that matches $\hat{F}$ on states and edges. Following \cref{prop:markovian-equivalences}, for any trajectory $\tau = (s_0, \dots, s_{n+1}=s_f) \in \gT$
\begin{align*}
    F'(\tau)= \hat{Z} \prod_{t=1}^{n+1} \hat{P}_F(s_t \mid s_{t-1}) = \frac{\prod_{t=1}^{n+1} \hat{F}(s_{t-1}{\rightarrow}s_t)}{\prod_{t=1}^{n}\hat{F}(s_t)} = F(\tau).
\end{align*}
 
\end{proof}
Note how~\cref{eq:flow-match} can be used to recursively define the flow in all the states if $Z$ is given and either the forward or the backwards transition probabilities are given. Either way, we would start from the flow at one of the extreme states $s_0$ or $s_f$ and then distribute it recursively through the directed acyclic graph of the flow network, either going forward or going backward. A setting of particular interest, that will be central in \cref{sec:GFlowNets-Learning-a-Flow}, is when we are given all the terminal flows $F(s{\rightarrow}s_f)$, and we would like to deduce a state flow function $F(s)$ and a forward transition probability function $P_F(s' \mid s)$ for the rest of the flow network. 

Next, we will see how to parametrize Markovian flows using forward and backward probability functions consistent with the DAG.
Unlike the condition in~\cref{prop:flow-matching}, the new condition does not involve a sum over transitions, which could be problematic if each state can have a large number of successors or if the state-space is continuous. Interestingly, the resulting condition is analogous to the {\em detailed balance} condition of Monte-Carlo Markov chains.

\begin{definition}
\label{def:compatible}
Given a pointed DAG $G=(\gS, \sA)$, a forward transition probability function $\hat{P}_F$ and a backward transition probability function $\hat{P}_B$ consistent with $G$,  $\hat{P}_F$ and $\hat{P}_B$ are {\bf compatible} if there exists an edge flow function $\hat{F}:\sA \rightarrow \R^+$ such that 
\begin{align}
\label{eq:comp-cond}
    \forall s\rightarrow s' \in \sA \ \ \hat{P}_F(s' \mid s) &=\frac{\hat{F}(s{\rightarrow}s')}{\sum_{s' \in Child(s)}\hat{F}(s{\rightarrow}s')}, \ \ \hat{P}_B(s \mid s') &=\frac{\hat{F}(s{\rightarrow}s')}{\sum_{s'' \in Par(s')}\hat{F}(s''{\rightarrow}s')}
\end{align}
\end{definition}

\begin{proposition}
\label{prop:detailed-balance}
Let $G = (\gS, \sA)$ be a pointed DAG. Consider a non-negative function $\hat{F}$ over states, a forward transition probability function $\hat{P}_F$ and a backwards transition probability function $\hat{P}_B$ consistent with $G$. Then, $\hat{F}, \hat{P}_B,$ and $\hat{P_F}$ jointly correspond to a flow if and only if the \textbf{detailed balance conditions} holds:
\begin{equation}
    \forall s{\rightarrow}s' \in \sA  \quad \hat{F}(s)\hat{P}_F(s' \mid s)=\hat{F}(s')\hat{P}_B(s \mid s').
\label{eq:detailed-balance}
\end{equation}
More specifically, $\hat{F}, \hat{P}_F, $ and $\hat{P}_B$ uniquely define a Markovian flow $F$ matching $\hat{F}$ on states, and with transition probabilities matching $\hat{P}_F$ and $\hat{P}_B$. 
Furthermore, when this condition is satisfied, the forward and backward transition probability functions $\hat{P}_F$ and $\hat{P}_B$ are compatible.
\end{proposition}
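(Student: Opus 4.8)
The plan is to reduce everything to the flow matching characterization of \cref{prop:flow-matching} by turning $\hat F$ and $\hat P_F$ into a candidate edge-flow function. For necessity, suppose $\hat F$, $\hat P_F$, $\hat P_B$ jointly correspond to a flow, i.e. there is a flow $F$ with $F(s) = \hat F(s)$ for every state, $P_F = \hat P_F$, and $P_B = \hat P_B$. By the definition of the transition probabilities (\cref{def:transitionprob}, \cref{eq:trans-p,eq:trans-pb}), for every edge $s{\rightarrow}s'\in\sA$ one has $F(s{\rightarrow}s') = F(s)P_F(s'\mid s) = F(s')P_B(s\mid s')$, and substituting the hatted quantities yields exactly the detailed balance condition \cref{eq:detailed-balance}. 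The only point requiring care is the degenerate case $\hat F(s)=0$: then $F(s{\rightarrow}s')\le F(s)=0$, so $F(s{\rightarrow}s')=0$, and both sides of \cref{eq:detailed-balance} vanish (and similarly if $\hat F(s')=0$).

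For sufficiency, assume \cref{eq:detailed-balance}. Define an edge-flow function $\hat F(s{\rightarrow}s')\defeq\hat F(s)\hat P_F(s'\mid s)$ for all $s{\rightarrow}s'\in\sA$; by \cref{eq:detailed-balance} this equals $\hat F(s')\hat P_B(s\mid s')$ as well. I then verify the flow matching conditions \cref{eq:flow-match} for the resulting $\hat F$ on states and transitions: for $s'<s_f$, $\sum_{s''\in Child(s')}\hat F(s'{\rightarrow}s'') = \hat F(s')\sum_{s''\in Child(s')}\hat P_F(s''\mid s') = \hat F(s')$ since $\hat P_F$ is consistent with $G$; and for $s'>s_0$, $\sum_{s\in Par(s')}\hat F(s{\rightarrow}s') = \hat F(s')\sum_{s\in Par(s')}\hat P_B(s\mid s') = \hat F(s')$ since $\hat P_B$ is consistent with $G$. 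Hence \cref{prop:flow-matching} applies and produces a unique Markovian flow $F$ matching $\hat F$ on states and edges, given by \cref{eq:corresponding-flow}.

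It remains to pin down the transition probabilities, uniqueness, and compatibility. Since $F$ matches $\hat F$ on states and edges, $P_F(s'\mid s) = F(s{\rightarrow}s')/F(s) = \hat F(s)\hat P_F(s'\mid s)/\hat F(s) = \hat P_F(s'\mid s)$ and $P_B(s\mid s') = F(s{\rightarrow}s')/F(s') = \hat F(s')\hat P_B(s\mid s')/\hat F(s') = \hat P_B(s\mid s')$ (the cases $\hat F(s)=0$ or $\hat F(s')=0$ force $\hat F(s{\rightarrow}s')=0$ and are handled as above). For uniqueness, any Markovian flow $F'$ with state flow $\hat F$ and transition probabilities $\hat P_F,\hat P_B$ satisfies $F'(s{\rightarrow}s') = F'(s)P'_F(s'\mid s) = \hat F(s)\hat P_F(s'\mid s) = \hat F(s{\rightarrow}s')$, so $F'$ matches $\hat F$ on states and edges and therefore $F'=F$ by the uniqueness clause of \cref{prop:flow-matching}. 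Finally, the edge-flow function $\hat F(s{\rightarrow}s')$ just constructed witnesses compatibility in the sense of \cref{def:compatible}: the two sums just computed give $\sum_{s''\in Child(s)}\hat F(s{\rightarrow}s'')=\hat F(s)$ and $\sum_{s''\in Par(s')}\hat F(s''{\rightarrow}s')=\hat F(s')$, so \cref{eq:comp-cond} holds by construction.

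The main obstacle is not conceptual—everything is a substitution into \cref{prop:flow-matching}—but the bookkeeping around states $s$ with $\hat F(s)=0$, where the defining ratios in \cref{eq:trans-p,eq:trans-pb} are formally $0/0$; these are dispatched by the inequality $F(s{\rightarrow}s')\le F(s)$ each time they arise.
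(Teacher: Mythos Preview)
Your proof is correct and follows essentially the same route as the paper: define the edge flow $\hat F(s{\rightarrow}s')\defeq\hat F(s)\hat P_F(s'\mid s)$, verify both flow-matching identities of \cref{eq:flow-match} using the normalization of $\hat P_F$ and $\hat P_B$ (the latter via \cref{eq:detailed-balance}), and invoke \cref{prop:flow-matching} for existence and uniqueness of the Markovian flow, then read off compatibility from the same edge-flow witness. Your extra bookkeeping around $\hat F(s)=0$ is a nice touch of care that the paper's proof elides.
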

\begin{proof}
For necessity, consider a flow $F$, with state flow function denoted $F$, and forward and backward transitions $P_F$ and $P_B$. It is clear from the definition of $P_F$ and $P_B$ (\cref{def:transitionprob}) that \cref{eq:detailed-balance} holds. We prove the sufficiency of the condition by first defining the edge flow
\begin{equation}
    \forall s\rightarrow s' \in \sA \ \ \hat{F}(s{\rightarrow}s') \defeq \hat{F}(s) \hat{P}_F(s' \mid s).
\label{eq:def-edge-flow}
\end{equation}
We then sum both sides of~\cref{eq:detailed-balance} over $s$, yielding
\begin{equation}
    \forall s' > s0 \ \ \sum_{s \in Par(s')} \hat{F}(s) \hat{P}_F(s' \mid s) = \hat{F}(s')\sum_{s \in Par(s')} \hat{P}_B(s \mid s') = \hat{F}(s')
\label{eq:sum-pred}
\end{equation}
where we used the fact that $\hat{P}_B$ is a normalized probability distribution. Combining this with~\cref{eq:def-edge-flow}, we get
\begin{equation}
    \forall s' >s_0 \ \ \hat{F}(s') = \sum_{s \in Par(s')} \hat{F}(s{\rightarrow}s')
\label{eq:flow-match-1}
\end{equation}
which is the first equality of the flow-matching condition (\cref{eq:flow-match}) of~\cref{prop:flow-matching}.
We can obtain the second equality by first using the normalization of $\hat{P}$, and then using our definition of the edge flow (\cref{eq:def-edge-flow}):
\begin{align}
    \forall s' > s_0 \ \ \hat{F}(s') &= \hat{F}(s')\sum_{s'' \in Child(s')}\hat{P}_F(s'' \mid s') \nonumber\\
     &= \sum_{s'' \in Child(s')} \hat{F}(s')\hat{P}_F(s'' \mid s')\nonumber\\
      &= \sum_{s'' \in Child(s')} \hat{F}(s'{\rightarrow}s'').
\label{eq:flow-match-2}
\end{align}
Following \cref{prop:flow-matching}, there exists a unique Markovian flow $F$ with state and edge flows given by $\hat{F}$. Using \cref{eq:def-edge-flow} and \cref{eq:detailed-balance}, it follows that $F$ has transition probabilities $\hat{P}_F$ and $\hat{P}_B$ as required. The uniqueness is also a consequence of \cref{eq:def-edge-flow}. This proves sufficiency.

To show that $\hat{P}_F$ and $\hat{P}_B$ are compatible (\cref{def:compatible}), we first combine~\cref{eq:def-edge-flow} and~\cref{eq:flow-match-2} (with relabeling of variables) to obtain
$$
  \forall s \rightarrow s' \in \sA \ \ \hat{P}_F(s' \mid s) = \frac{\hat{F}(s{\rightarrow}s')}{\sum_{s' \in Child(s)}\hat{F}(s{\rightarrow}s')},
$$
we then isolate $\hat{P}_B$ in~\cref{eq:detailed-balance}, yielding
$$
 \forall s \rightarrow s' \in \sA \ \  \hat{P}_B(s \mid s') = \frac{\hat{F}(s)}{\hat{F}(s')} \hat{P}_F(s' \mid s) = \frac{\hat{F}(s{\rightarrow}s')}{\hat{F}(s')} = \frac{\hat{F}(s{\rightarrow}s')}{\sum_{s'' \in Par(s')}\hat{F}(s'' \rightarrow s')},
$$
We thus get ~\cref{eq:comp-cond} of~\cref{def:compatible}, as desired.
\end{proof}

At first glance, it may seem that when $\hat{P}_B$ is unconstrained, the detailed balance condition can trivially be achieved by setting
\begin{equation}
\label{eq:DB-solution}
 \forall s \rightarrow s' \in \sA \ \ \hat{P}_B(s \mid s') = \frac{\hat{P}_F(s' \mid s)\hat{F}(s)}{\hat{F}(s')}
\end{equation}
However, because we also have the constraint $\sum_{s \in Par(s')} \hat{P}_B(s \mid s')=1$, then
\cref{eq:DB-solution} can only be satisfied if the flows are consistent with the forward transition:
$$
 \sum_{s \in Par(s')} \hat{P}_F(s' \mid s )\hat{F}(s) = \hat{F}(s').
$$

\subsection{Backwards Transitions can be Chosen Freely}
\label{sec:free-P_B}

Consider the setting in which we are given terminating flows to be matched, i.e. where the goal is to find a flow function with the right terminating flows. This is the setting introduced in~\citet{bengio2021flow}, and that will be studied in \cref{sec:GFlowNets-Learning-a-Flow}. In this case,~\cref{prop:flow-parametrizations} tells us that in order to fully determine the forward transition probabilities and the state or state-action flows, it is not sufficient in general to specify only the terminating flows; it is also necessary to specify the backwards transition probabilities on the edges other than the terminal ones (the latter being given by the terminating flows).

What this means is that the terminating flows do not specify the flow completely, e.g., because many different paths can land in the same terminating state. The preference over such different ways to achieve the same final outcome is specified by the backwards transition probability $P_B$ (except for $P_B(s \mid s_f)$ which is a function of the terminating flows and $Z$). For example, we may want to give equal weight to all parents of a node $s$, or we may prefer shorter paths, which can be achieved if we keep track in the state $s$ of the length of the shortest path to the node $s$, or we may let a learner discover a $P_B$ that makes learning $P_F$ or $F$ easier.

\subsection{Equivalence Between Flows}
\label{sec:equivalence-flows}
In the previous sections, we have seen that Markovian flows have the property that trajectory flows or probabilities factorize according to the DAG, and we have seen different ways of characterizing Markovian flows. In \cref{sec:GFlowNets-Learning-a-Flow}, we show how to approximate Markovian flows in order to define probability measures over terminating states. In this section, through an equivalence relation between trajectory flows, we justify the focus on Markovian flows. Given a pointed DAG $G = (\gS, \sA)$, we denote by:

\begin{itemize}
    \item $\gF(G)$: the set of flows on $G$, i.e. the set of functions from $\gT$, the set of complete trajectories in $G$, to $\R^+$,
    \item $\gF_{Markov}(G)$: the set of flows in $\gF(G)$ that are Markovian.
\end{itemize}

\begin{definition}
Let $G = (\gS, \sA)$ be a pointed DAG, and $F_1, F_2 \in \gF(G)$ two trajectory flow functions. We say that $F_1$ and $F_2$ are equivalent if they coincide on edge-flows, i.e.:
\begin{equation*}
   \forall s\rightarrow s' \in \sA \quad F_1(s \rightarrow s') = F_2(s \rightarrow s') 
\end{equation*}

\end{definition}
\cref{fig:equivalentflows} shows four flow functions in a simple pointed DAG that are pairwise equivalent.

\begin{figure}[ht]
\centering
\begin{minipage}{0.25\linewidth}
\includegraphics[width=\linewidth]{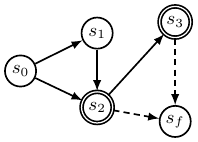}
\end{minipage}\hfill
\begin{minipage}{0.6\linewidth}
\begin{tabular}{ccccc}
    \toprule
    $\tau$ & $F_1(\tau)$ & $F_2(\tau)$ & $F_3(\tau)$ & $F_4(\tau)$  \\
    \midrule
    $s_{0}, s_{2}, s_{f}$ & $1$ & $4/5$ & $1$ & $6/5$ \\
    $s_{0}, s_{1}, s_{2}, s_{f}$ & $1$ & $6/5$ & $1$ & $4/5$ \\
    $s_{0}, s_{2}, s_{3}, s_{f}$ & $1$ & $6/5$ & 2 & $9/5$\\
    $s_{0}, s_1, s_{2}, s_{3}, s_{f}$ & $2$ & $9/5$ & $1$ & $6/5$ \\
    \bottomrule
\end{tabular}
\end{minipage}
    \caption{Equivalent flows and Markovian flows. Flows $F_1$ and $F_2$ are equivalent. $F_3$ and $F_4$ are equivalent, but not equivalent to $F_1$ and $F_2$. $F_2$ and $F_4$ are Markovian. $F_1$ and $F_3$ are not Markovian. $F_1, F_2, F_3$ and $F_4$ coincide on the terminating flows i.e. at $s_2 \rightarrow s_f$ and $s_3 \rightarrow s_f$.}
\label{fig:equivalentflows}
\end{figure}

This defines an equivalence relation (i.e., a relation that is reflexive, symmetric, and transitive). Hence, each flow $F$ belongs to an equivalence class, and the set of flows $\gF(G)$ can be partitioned into equivalence classes. Note that if two flows are equivalent, then the corresponding state flow functions also coincide (as a direct consequence of \cref{prop:state-edge-flow}).

\begin{proposition}
\label{prop:Markoveq}
Given a pointed DAG $G$. If two flow function $F_1, F_2 \in \gF_{Markov}(G)$ are equivalent, then they are equal. Additionally, for any flow function $F' \in \gF(G)$, there exists a unique Markovian flow function $F \in \gF_{Markov}(G)$ such that $F$ and $F'$ are equivalent.
\end{proposition}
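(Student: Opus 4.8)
The plan is to prove the two assertions in turn, both by reducing to facts already established about Markovian flows and their parametrizations. For the first assertion, suppose $F_1, F_2 \in \gF_{Markov}(G)$ are equivalent. By definition they coincide on all edge flows, hence (as noted after the definition of equivalence, via \cref{prop:state-edge-flow}) on all state flows as well; in particular $F_1(s_0) = F_2(s_0)$, so their total flows agree, $Z_1 = Z_2$. Moreover their forward transition probabilities $P_F^{(1)}(s'\mid s) = F_i(s\rightarrow s')/F_i(s)$ coincide edge by edge. By \cref{prop:markovian-equivalences}, each $F_i$ factorizes as $F_i(\tau) = Z_i \prod_t P_F^{(i)}(s_t\mid s_{t-1})$ over complete trajectories, and since the total flows and all forward transition probabilities agree, $F_1(\tau) = F_2(\tau)$ for every $\tau \in \gT$, i.e. $F_1 = F_2$. (Equivalently, one invokes the uniqueness clause of \cref{prop:flow-parametrizations}, first parametrization.)

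For the second assertion, let $F' \in \gF(G)$ be arbitrary. The idea is to read off the edge-flow data from $F'$ and feed it into \cref{prop:flow-matching} to manufacture a Markovian flow equivalent to $F'$. Define $\hat F$ on states by $\hat F(s) \defeq F'(s)$ and on edges by $\hat F(s\rightarrow s') \defeq F'(s\rightarrow s')$. By \cref{prop:state-edge-flow}, $F'$ itself satisfies the flow-matching conditions \cref{eq:flow-match} (the state flow equals the sum of incoming edge flows, and equals the sum of outgoing edge flows), so $\hat F$ satisfies the hypotheses of \cref{prop:flow-matching}. That proposition then yields a unique Markovian flow $F \in \gF_{Markov}(G)$ that matches $\hat F$ on states and transitions; in particular $F(s\rightarrow s') = \hat F(s\rightarrow s') = F'(s\rightarrow s')$ for every edge, so $F$ and $F'$ are equivalent, giving existence. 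Uniqueness of the Markovian representative follows from the first assertion: if $F$ and $\tilde F$ are both Markovian and equivalent to $F'$, then by transitivity of the equivalence relation they are equivalent to each other, hence equal.

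I do not anticipate a genuine obstacle here: the statement is essentially a bookkeeping consequence of \cref{prop:state-edge-flow}, \cref{prop:markovian-equivalences}, and \cref{prop:flow-matching}. The one point requiring a little care is verifying that an arbitrary (possibly non-Markovian) flow $F'$ really does satisfy the flow-matching conditions on its induced state and edge flows — but this is exactly the content of \cref{prop:state-edge-flow}, which was proved without any Markovian assumption, so it applies verbatim. A secondary subtlety worth a sentence is that \cref{prop:flow-matching} requires a single non-negative function defined consistently on states and edges; since $F'(s)$ and $F'(s\rightarrow s')$ are derived from the same trajectory measure, they automatically cohere, so there is no ambiguity in the definition of $\hat F$.
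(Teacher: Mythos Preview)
Your proposal is correct and follows essentially the same approach as the paper: both parts reduce to \cref{prop:state-edge-flow}, \cref{prop:markovian-equivalences}, and \cref{prop:flow-matching} in exactly the way you describe. The paper's proof of the first assertion is phrased slightly more compactly by writing the product formula $F_i(\tau)=\prod F_i(s_{t-1}\rightarrow s_t)/\prod F_i(s_t)$ directly rather than passing through $Z_i$ and $P_F^{(i)}$, but this is a purely cosmetic difference.
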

\begin{proof}
Because $F_1$ and $F_2$ are Markovian, then for any trajectory $\tau = (s_0, \dots, s_{n+1} = s_f)$:
\begin{align*}
    F_1(\tau) &= \frac{\prod_{t=1}^{n+1} F_1(s_{t-1} \rightarrow s_t)}{\prod_{t=1}^{n}F_1(s_t)} \\
    &= \frac{\prod_{t=1}^{n+1} F_2(s_{t-1} \rightarrow s_t)}{\prod_{t=1}^{n}F_2(s_t)} \\
    &= F_2(\tau),
\end{align*}
where we combined the definition of equivalent flows and \cref{prop:markovian-equivalences}.

Given a flow function $F'$, because its state and edge flow functions satisfy the flow matching conditions (as a consequence of \cref{prop:state-edge-flow}), then according to \cref{prop:flow-matching}, the flow $F$ defined by: 
\begin{equation*}
    \forall \tau \defeq (s_0, \dots, s_{n+1} = s_f) \in \gT \quad F(\tau) = \frac{\prod_{t=1}^{n+1} F'(s_{t-1} \rightarrow s_t)}{\prod_{t=1}^{n} F'(s_t)}
\end{equation*}
is Markovian, and coincides with $F'$ on state and edge flows. Combining this with the statement above, we conclude that $F$ is the unique Markovian flow that is equivalent to $F'$.
\end{proof}
The previous proposition shows that in each equivalence class stands out a particular flow function, that has a property the other flows in the same equivalence class don't have: it is Markovian.

A consequence of this is that, if we care essentially about state and edge flows, instead of dealing with the full set of flows $\gF(G)$, it suffices to restrict any flow learning problem to the set of Markovian flows $\gF_{Markov}(G)$. The advantage of this restriction is that defining a flow requires the specification of $F(\tau)$ for all trajectories $\tau \in \gT$, whereas defining a Markovian flow requires the specification of $F(s \rightarrow s')$ for all edges $s \rightarrow s' \in \sA$, which is generally exponentially smaller than $\gT$ (note that the edge flows still need to satisfy the flow-matching conditions in \cref{prop:flow-matching}). Thus, in order to approximate or learn a flow function that satisfies some conditions on its edge or state values, it suffices to approximate or learn a Markovian flow, by learning the edge flow function, which is a much smaller object than the actual flow function.

\section{GFlowNets: Learning a Flow}
\label{sec:GFlowNets-Learning-a-Flow}

With the theoretical preliminaries established in \cref{sec:introduction} and \cref{sec:measures-over-markovian-flows}, we now consider the general class of problems introduced by~\citet{bengio2021flow} where some constraints or preferences over flows are given. Our goal is to find functions such as the state flow function $F(s)$ or the transition probability function $P(s{\rightarrow}s' \mid s)$ that best match these desiderata using corresponding estimators $\hat{F}(s)$ and $\hat{P}(s{\rightarrow}s' \mid s)$ which may not correspond to a proper flow. Such learning machines are called Generative Flow Networks (or GFlowNets for short). We focus on scenarios where we are given a target {\em reward function } $R:\gS^{f} \rightarrow \R^+$, and aim at estimating flows $F$ that satisfy:
\begin{equation}
\label{eq:gfn-condition-terminating}
    \forall s \in \gS^f \quad F(s \rightarrow s_f) = R(s)
\end{equation}

 Because of the equivalences that exist in the set of flows, then without loss of generality, we choose GFlowNets to approximate Markovian flows only. We are thus interested in the following set of flows:

 \begin{equation}
    \gF_{Markov}(G, R) = \{F \in \gF_{Markov}(G), \ \forall s \in \gS^f \ \ F(s\rightarrow s_f) = R(s) \}
\end{equation}
For now, we informally define a \textbf{GFlowNet} as an \textbf{estimator of a Markovian flow} function $F \in \gF_{Markov}(G, R)$. We provide a more formal definition later-on.

With an estimator $\hat{F}$ of such a Markovian flow $F$, we can define an approximate forward transition probability function $\hat{P}_F$, as in \cref{prop:markovian-equivalences}, in order to draw trajectories $\tau \in \gT$ (the set of complete trajectories in $G$) by iteratively sampling each state given the previous one, starting at $s_0$ and then with $s_{t+1} \sim \hat{P}_F(. \mid s_t)$ until we reach the sink state $s_{n+1}=s_f$ for some $n$.

Next, we will clarify how such an estimator can be obtained.
\subsection{GFlowNets as an Alternative to MCMC Sampling}
\label{sec:MCMC}

The main established methods to approximately sample from the distribution associated with an energy function $\cal E$ are Monte-Carlo Markov chain (MCMC) methods, which require significant computation (running a potentially very long Markov chain) to obtain samples. Instead, the GFlowNet approach amortizes upfront computation to train a generator that yields very efficient computation (a single configuration is constructed, no chain needed) for each new sample. For example, ~\citet{bengio2021flow} build a GFlowNet that constructs a molecule via a small sequence of actions, each of which adds an atom or a molecular substructure to an existing molecule represented by a graph, starting from an empty graph. Only one such configuration needs to be considered, in contrast with MCMC methods, which require potentially very long chains of such configurations, and suffer from the challenge of mode-mixing~\citep{jasra2005markov,bengio2013better,pompe2020framework}, which can take time exponentially long in the distance between modes. In GFlowNets, this computational challenge is avoided but the computational demand is converted to that of training the GFlowNet. To see how this can be extremely beneficial, consider having already constructed some configurations $x$ and obtained their unnormalized probability or reward $R(x)$. With these pairs $(x,R(x))$, a machine learning system could potentially generalize about the value of $R$ elsewhere, and if it is a generative model, sample new $x$'s in places of large $R(x)$. Hence, if there is an underlying statistical structure in how the modes of $R$ are related to each other, a generative learner that generalizes could guess the presence of modes it has not visited yet, taking advantage of the patterns it has already uncovered from the $(x,R(x))$ pairs it has seen. On the other hand, if there is no structure (the modes are randomly placed), then we should not expect GFlowNets to do significantly better than MCMC because training becomes intractable in high-dimensional spaces (since it requires visiting every area of the configuration space to ascertain its reward).

\subsection{GFlowNets and flow-matching losses}
\label{sec:flow-matching-losses}
We have seen in \cref{sec:markovian-flows} and \cref{sec:flow-matching-conditions} different ways of parametrizing a flow. For example, with a partition function and forward transition probabilities, or with edge flows that satisfy the flow matching conditions. Because there are many ways to parametrize GFlowNets, we start with an abstract formulation for them, where $o \in \gO$ represents a parameter configuration (e.g., resulting from or while training of a GFlowNet), $\Pi(o)$ gives the corresponding probability measure over trajectories $\tau \in \gT$, and $\gH$ maps a Markovian flow $F$ to its parametrization $o$. In the following definition, we show what conditions should be satisfied in order for such a parametrization to be valid.
\begin{definition}
Given a pointed DAG $G=(\gS, \sA)$, with an initial and sink states $s_0$ and $s_f$ respectively, and a target reward function $R:\gS^{f} \rightarrow \R^+$, we say that the triplet $(\gO, \Pi, \gH)$ is a \textbf{flow parametrization} of $(G, R)$ if:
\begin{enumerate}
    \item $\gO$ is a non-empty set,
    \item $\Pi$ is a function mapping each object $o \in \gO$ to an element $\Pi(o) \in \Delta(\gT)$, the set of probability distributions on $\gT$,
    \item $\gH$ is an injective functional from $\gF_{Markov}(G, R)$ to $\gO$,
    \item For any $F \in \gF_{Markov}(G, R)$, $\Pi(\gH(F))$ is the probability measure associated with the flow $F$ (\cref{def:p}).
\end{enumerate}
To each object $o \in \gO$, the distribution $\Pi(o)$ implicitly defines a \textbf{terminating state probability} measure:
\begin{equation}
\label{eq:P_T_o}
    \forall s \in \gS^f \quad P_T(s) \defeq \sum_{\tau \in \gT: s \rightarrow s_f \in \tau} \Pi(o)(\tau),
\end{equation}
where the dependence on $o$ in $P_T$ is omitted for clarity.
\end{definition}

The intuition behind the introduction of $(\gO, \Pi, \gH)$ is that we can define a probability measure over $\gT$ for each object $o \in \gO$, but only some of these objects correspond to a Markovian flow with the right terminating flows. For such objects $o$ (i.e. those that can be written as $o = \gH(F)$ for some flow $F \in \gF_{Markov}(G, R)$), the probability measure $P_T$ corresponds to the distribution of interest, according to \cref{def:flow-P_T}, i.e.:
\begin{equation*}
    \forall s \in \gS^f \quad P_T(s) \propto R(s)
\end{equation*}
GFlowNets thus provide a solution to the generally intractable problem of sampling from a target reward function $R$, or its associated \textbf{energy function}:
\begin{equation}
    \forall s \in \gS^f \quad \gE(s) \defeq - \log R(s)
\end{equation}
Directly approximating flows $F\in \gF_{Markov}(G, R)$ is a hard problem, whereas with some sets $\gO$, searching for an object $o \in \gH(\gF_{Markov}(G, R)) \subseteq \gO$ is a simpler problem that can be tackled with function approximation techniques.

Note that not the set $\gO$ cannot be arbitrary, as there needs to be a way to define an injective function from $\gF_{Markov}(G, R)$ to $\gO$. Below, for a given DAG $G$, we show three examples clarifying the abstract concept of parametrization:

\begin{example}
\label{ex:EDGEparam}
 \emph{Edge-flow parametrization: } Consider $\gO_{edge} = \gF(\sA^{-f}, \R^+)$, the set of functions from $\sA^{-f}$ to $\R^+$,
and the functionals $\gH_{edge}: \gF_{Markov}(G, R) \rightarrow \gO_{edge}$ and $\Pi_{edge}: \gO_{edge} \rightarrow \Delta(\gT)$ defined by:
 \begin{align*}
     \gH_{edge}(F) : (s\rightarrow s') \in \sA^{-f} \mapsto F(s\rightarrow s'),
 \end{align*}
\begin{align*}
    \forall \tau=(s_0, \dots, s_n=s_f) \in \gT \quad \Pi_{edge}(\hat{F})(\tau) \propto \prod_{t=1}^n P_{\hat{F}}(s_t \mid s_{t-1}),
\end{align*}
where 
\begin{equation}
    P_{\hat{F}}(s' \mid s) = \begin{cases}
    \frac{\hat{F}(s\rightarrow s')}{\sum_{s'' \neq s_f} \hat{F}(s\rightarrow s'') + R(s)} \quad \text{if } s' \neq s_f \\
    \frac{R(s)}{\sum_{s'' \neq s_f} \hat{F}(s\rightarrow s'') + R(s)} \quad \text{if } s' = s_f
    \end{cases}
\label{eq:FM-P-hat-F}
\end{equation}

The injectivity of $\gH_{edge}$ follows directly from \cref{prop:Markoveq} (two Markovian flows that coincide on both their terminating and non-terminating edge flow values are equal). And for any Markovian flow $F \in \gF_{Markov}(G, R)$, $\Pi_{edge}(\gH_{edge}(F))$ equals the probability measure associated with $F$, as is shown in \cref{prop:markovian-equivalences}.

$(\gO_{edge}, \Pi_{edge}, \gH_{edge})$ is thus a valid flow parametrization of $(G, R)$.
\end{example}

\begin{example}
\label{ex:PFparam}
 \emph{Forward transition probability parametrization: } Consider the set $\gO_{PF} = \gO_1 \times \gO_2$, where $\gO_1=\gF(\gS\setminus \{s_f\}, \R^+)$ is the set of function from $\gS\setminus \{s_f\}$ to $\R^+$ and $\gO_2$ is the set of forward probability functions $\hat{P}_F$ consistent with $G$
, and the functionals $\gH_{PF}: \gF_{Markov}(G, R) \rightarrow \gO_{PF}$ and $\Pi_{PF}: \gO_{PF}  \rightarrow \Delta(\gT)$ defined by:
 \begin{align*}
     \gH_{PF}(F) = \left(s \in \gS \setminus \{s_f\} \mapsto F(s), (s\rightarrow s') \in \sA \mapsto P_F(s' \mid s) \right),
 \end{align*}
\begin{align*}
    \forall \tau=(s_0, \dots, s_n=s_f) \in \gT \quad \Pi_{PF}(\hat{F}, \hat{P}_F)(\tau) \propto \prod_{t=1}^n \hat{P}_F(s_t \mid s_{t-1}),
\end{align*}
 where $P_F$ is the forward transition probability function associated with $F$ (\cref{eq:trans-p}). 
To verify that $\gH_{PF}$ is injective, consider $F_1, F_2 \in \gF_{Markov}(G, R)$ such that $\gH_{PF}(F_1) = \gH_{PF}(F_2)$. It means that $\forall s\in \gS^f$, $F_1(s) = F_2(s)$, and $\forall s\rightarrow s' \in \sA$, $\frac{F_1(s\rightarrow s')}{F_1(s)} = \frac{F_2(s\rightarrow s')}{F_2(s)}$. It follows that $\forall  s\rightarrow s' \in \sA$, $F_1(s\rightarrow s') = F_2(s\rightarrow s')$. Which, according to \cref{prop:Markoveq}, means that $F_1 = F_2$. And for any Markovian flow $F \in \gF_{Markov}(G, R)$, $\Pi_{PF}(\gH_{PF}(F))$ equals the probability measure associated with $F$, as is shown in \cref{prop:markovian-equivalences}.

$(\gO_{PF}, \Pi_{PF}, \gH_{PF})$ is thus a valid flow parametrization of $(G, R)$.

\end{example}

\begin{example}
\label{ex:PFBparam}
 \emph{Transition probabilities parametrization: }Similar to \cref{ex:PFparam}, we can parametrize a Markovian flow using the state-flow function and both its forward and backward transition probabilities, i.e. with $\gO_{PFB}=\gO_{PF} \times \gO_3$, $\gH_{PFB}$, and $\Pi_{PFB}$ defined as:
 \begin{align*}
    \gH_{PFB}(F) = \left(\gH_{PF}(F), (s\rightarrow s') \in \sA^{-f} \mapsto P_B(s \mid s'),  \right),
 \end{align*}
 \begin{align*}
    \forall \tau=(s_0, \dots, s_n=s_f) \in \gT \quad \Pi_{PFB}(\hat{F}, \hat{P}_F, \hat{P}_B)(\tau) \propto \prod_{t=1}^n \hat{P}_F(s_t \mid s_{t-1}),
 \end{align*}
 where $P_B$ is the function defined by \cref{eq:trans-pb}. and $\gO_3$ is the set of backward probability functions $\hat{P}_B$ consistent with $G$.
 The injectivity of $\gH_{PFB}$ is a direct consequence of that of $\gH_{PF}$. And for any Markovian flow $F$, $\Pi_{PFB}(\gH_{PFB}(F))$ equals the probability measure associated with $F$, as is shown in Prop.3.
 
$(\gO_{PFB}, \Pi_{PFB}, \gH_{PFB})$ is thus a valid flow parametrization of $(G, R)$.
\end{example}

We now have all the ingredients to formally define a GFlowNet:
\begin{definition}
\label{def:formal-gfn}
A \textbf{GFlowNet} is a tuple $(G, R, \gO, \Pi,  \gH)$, where:
\begin{itemize}
    \item $G=(\gS, \sA)$ is a pointed DAG with initial state $s_0$ and sink state $s_f$, 
    \item $R:\gS^f \rightarrow \R^+$ a target reward function,
    \item $(\gO, \Pi, \gH)$ a flow parametrization of $(G, R)$.
\end{itemize}

Each object $o \in \gO$ is called a GFlowNet configuration. When it is clear from context, we will use the term GFlowNet to refer to both $(G, R, \gO, \Pi, \gH)$ and a particular configuration $o$; similar to how the term ``Neural Network" refers to both the class of functions that can be represented with a particular architecture, and to a particular element of that class / weight configuration.

If $o \in \gH(\gF_{Markov}(G, R))$, then the corresponding terminating state probability measure (\cref{eq:P_T_o}) is proportional to the target reward $R$.
\end{definition}

Once we have a GFlowNet $(G, R, \gO, \Pi,  \gH)$, we still need a way to find objects $o \in \gH(\gF_{Markov}(G, R)) \subseteq \gO$. To this end, it suffices to design a \textbf{loss function} $\gL$ on $\gO$ that equals zero on objects  $o \in \gH(\gF_{Markov}(G, R))$ and only on those objects.
If our loss function $\gL$ is chosen to be non-negative, then an approximation of the target distribution (on $\gS^f$) is obtained by approximating the minimum of the function $\gL$. This provides a recipe for casting the search problem of interest to a minimization problem, as we typically do in machine learning. Such loss functions can be easily designed for the natural parametrizations we considered in \cref{ex:EDGEparam}, \cref{ex:PFparam}, and \cref{ex:PFBparam}, as we will illustrate below.

\begin{definition}
\label{def:flow-matching-loss}
Let $(G, R, \gO, \Pi,  \gH)$ be a GFlowNet. A \textbf{flow-matching loss} is any function $\gL:\gO \rightarrow \R^+$ such that:
\begin{align}
 \label{eq:flow--matching-loss}
\forall o \in \gO \quad \gL(o) = 0 \ \Leftrightarrow \ \exists F \in \gF_{Markov}(G, R) \ \ o = \gH(F)
\end{align}
We say that $\gL$ is \textbf{edge-decomposable}, if there exists a function $L:\gO \times \sA \rightarrow \R^+$ such that:
\begin{equation*}
    \forall o \in \gO \quad \gL(o) = \sum_{s\rightarrow s' \in \sA} L(o, s \rightarrow s'),
\end{equation*}
We say that $\gL$ is \textbf{state-decomposable}, f there exists a function $L:\gO \times \gS \rightarrow \R^+$ such that:
\begin{equation*}
    \forall o \in \gO \quad \gL(o) = \sum_{s \in \gS} L(o, s),
\end{equation*}
We say that $\gL$ is \textbf{trajectory-decomposable} if there exists a function $L:\gO \times \gT \rightarrow \R^+$ such that:
\begin{equation*}
    \forall o \in \gO \quad \gL(o) = \sum_{\tau \in \gT} L(o, \tau)
\end{equation*}
\end{definition}

As mentioned above, with a such a loss function, our search problems can be written as minimization problems of the form
\begin{equation}
\label{eq:minimizeLoss}
    \min_{o \in \gO} \gL(o),
\end{equation}
which can be tackled with gradient-based learning if the function $\gL$ is differentiable. 
Note that with an edge-decomposable flow-matching loss, the minimization problem in \cref{eq:minimizeLoss} is equivalent to:
\begin{equation}
    \label{eq:minimizeDecomposableLoss}
    \min_{o \in \gO} \mathbb{E}_{(s\rightarrow s') \sim \pi_T} [L(o, s\rightarrow s')], 
\end{equation}
where $\pi_T$ is any full support probability distribution on $\sA$, i.e. a probability distribution such that $\forall s\rightarrow s' \in \sA \ \ \pi_T(s \rightarrow s') > 0$. A similar statement can be made for state-decomposable or trajectory-decomposable flow-matching losses.

\begin{example}
\label{ex:fm-loss}
 Consider the edge-flow parametrization $(\gO_{edge}, \Pi_{edge}, \gH_{edge})$, and the function $L_{FM}:\gO_{edge} \times \gS \rightarrow \R^+$ defined for each $\hat{F} \in \gO_{edge}$ and $s' \in \gS$ as
 \begin{align*}
 L_{FM}(\hat{F}, s') = \begin{cases} \left( \log \left( \frac{\delta + \sum_{s \in Par(s')} \hat{F}(s \rightarrow s')}{\delta + R(s') + \sum_{s'' \in Child(s') \setminus \{s_f\}}\hat{F}(s' \rightarrow s'')} \right) \right)^2 \quad \text{if } s' \neq s_f, \\
 0 \quad \text{otherwise}
 \end{cases}
 \end{align*}
where $\delta \geq 0$ is a hyper-parameter. 
The function $\gL_{FM}$ mapping each $\hat{F} \in \gO_{edge}$ to 
\begin{equation}
    \label{eq:flow-matching-log-loss}
    \gL_{FM}(\hat{F}) = \sum_{s \in \gS} L_{FM}(\hat{F}, s)
\end{equation}
is a flow-matching loss, that is (by definition) state-decomposable. 

To see this, let $\hat{F} \in \gO_{edge}$ such that $\gL_{FM}(\hat{F}) = 0$, and extend it to terminating edge:
\begin{equation*}
    \forall s \in \gS^f \ \ \hat{F}(s \rightarrow s_f) \defeq R(s)
\end{equation*}
Now that $\hat{F}$ is defined for all edges in $G$, we can write that 
\begin{equation*}
\forall s' \in \gS \ \sum_{s \in Par(s')} \hat{F}(s \rightarrow s') = \sum_{s'' \in Child(s)} \hat{F}(s' \rightarrow s'').
\end{equation*} 
Which, according to \cref{prop:flow-matching}, means that there exists a Markovian flow $F \in \gF_{Markov}(G, R)$ such that $\gH_{edge}(F) = \hat{F}$. The converse
\begin{equation*}
\forall F \in \gF_{Markov}(G, R) \ \ \gL_{FM}(\gH_{edge}(F)) = 0
\end{equation*} 
 is a trivial consequence of \cref{prop:flow-matching}.

This is the loss function proposed in~\cite{bengio2021flow}. $\delta$ allows to reduce the importance given to small flows (those smaller than $\delta$), and the usage of the square of the log-ratio is justified as a way to ensure that states with large flows do not contribute to the gradients of $\gL_{FM}$ much more than states with small flows.

\end{example}

\begin{example}
\label{ex:db-loss}
 {\em Detailed-balance loss:} Consider the transition probabilities parametrization $(\gO_{PFB}, \Pi_{PFB}, \gH_{PFB})$, and the function $L_{DB}: \gO_{PFB} \times \sA \rightarrow \R^+$ defined for each $(\hat{F}, \hat{P}_F, \hat{P}_B) \in \gO_{PFB}$ and $s \rightarrow s' \in \sA$ as
 \begin{align*}
     L_{DB}(\hat{F}, \hat{P}_F, \hat{P}_B, s\rightarrow s') = \begin{cases} \left( \log \left( \frac{\delta + \hat{F}(s) \hat{P}_F(s' \mid s) }{\delta + \hat{F}(s') \hat{P}_B(s \mid s')} \right)\right)^2 \quad \text{ if } s' \neq s_f, \\
 \left( \log \left( \frac{\delta + \hat{F}(s) \hat{P}_F(s' \mid s) }{\delta + R(s)} \right)\right)^2 \quad \text{otherwise},
     \end{cases}
 \end{align*}
 where $\delta \geq 0$ is a hyper-parameter. The function $\gL_{DB}$ mapping each $(\hat{F}, \hat{P}_F, \hat{P}_B) \in \gO_{PFB}$ to 
 \begin{equation*}
     \gL_{DB}(\hat{F}, \hat{P}, \hat{P}_B)) = \sum_{s\rightarrow s' \in \sA} L_{DB}(\hat{F}, \hat{P}, \hat{P}_B, s\rightarrow s')
 \end{equation*}
 is a flow-matching loss  that is (by definition) edge-decomposable. 
 The proof of this statement is similar to the one of the example above, using \cref{prop:detailed-balance}.
\end{example}

According to \cref{sec:free-P_B}, the reward function does not completely specify the flow. Thus, the detailed-balance loss of \cref{ex:db-loss} can be used with the $(\gO_{PF}, \Pi_{PF}, \gH_{PF})$ parametrization, using any function $\hat{P}_B \in \gO_3$ as input to the detailed-balance loss.

\begin{example}
\label{ex:tb-loss}
\emph{Trajectory-balance loss:} This loss has been introduced in~\citet{malkin2022trajectory} for the parametrization $(\gO_{TB}, \Pi_{TB}, \gH_{TB})$, where $\gO_{TB} = \gO_1 \times \gO_2 \times \gO_3$, with $\gO_1 = \R^+$ parametrizes the partition function $\hat{Z}$, and $\gO_2$ and $\gO_3$ introduced in \cref{ex:PFparam} and \cref{ex:PFBparam} (the set of forward and backward probabilities consistent with $G$). $\gH_{TB}$ maps a Markovian flow in $\gF_{Markov}(G, R)$ to the corresponding triplet $(Z, P_F, P_B)$, and $\Pi_{TB}$ maps a parametrization $(\hat{Z}, \hat{P}_F, \hat{P}_B)$ to a probability over trajectories defined by $\hat{P}_F$ as in \cref{ex:PFparam}. \cref{prop:flow-parametrizations} justifies the validity of this parametrization. The loss $\mathcal{L}_{TB}$ maps each $(\hat{Z}, \hat{P}_F, \hat{P}_B) \in \gO_{TB}$ to:
\begin{equation*}
    \gL_{TB}(\hat{Z}, \hat{P}_F, \hat{P}_B) = \sum_{\tau \in \gT} L_{TB}(\hat{Z}, \hat{P}_F, \hat{P}_B, \tau),
\end{equation*}
where
\begin{equation}
    \forall \tau=(s_0, \dots, s_{n+1}=s_f) \in \gT \ \ L_{TB}(\hat{Z}, \hat{P}_F, \hat{P}_B, \tau) = \left( \log \frac{\hat{Z} \prod_{t=1}^{n+1} \hat{P}_F(s_t \mid s_{t-1})}{R(s_n) \prod_{t=1}^n \hat{P}_B(s_{t-1} \mid s_{t})} \right)^2.
\label{eq:L_TB}
\end{equation}
\citet{malkin2022trajectory} prove that $\gL_{TB}$ is a flow-matching loss and call it {\em trajectory balance}. It is trajectory-decomposable by definition.
\end{example}

\paragraph{Training by stochastic gradient descent:}
\label{sec:practical}
In the examples of the previous section, given a GFlowNet $(G, R, \gO, \Pi, \gH)$ and a flow-matching loss $\gL$, objects $o \in \gO$ are themselves functions or combinations of functions, and we can thus parametrize $\gO$ with function approximators such as Neural Networks. However, most of the times, the evaluation (let alone the minimization) of $\gL(o)$ is intractable, given that even with a full support distribution, only a subset of edges (or states or trajectories) can be visited in finite time. In practice, with an edge-decomposable loss e.g., we resort to a stochastic gradient, such as
\begin{equation}
    \label{eq:StochGradDecomposableLoss}
    \nabla_o L(o, s\rightarrow s'), \ \ s \rightarrow s' \sim \pi_o
\end{equation}
for edge-decomposable losses, or
\begin{equation}
    \label{eq:StochGradDecomposableLoss-trajectories}
    \nabla_o L(o, \tau), \ \ \tau \sim \pi_o
\end{equation}
for trajectory-decomposable losses, where $\pi_o$, called the \textbf{training distribution}, is a distribution over edges or trajectories that can be associated with $\Pi(o)$, corresponding to the online setting in RL, or defined in other ways, corresponding to the behavior policy in offline RL, see \cref{sec:offline} below.

\subsection{Extensions}
In this section, we discuss possible relaxations to the GFlowNet training paradigm introduced thus far. 
\subsubsection{Introducing Time Stamps to Allow Cycles}
Note that the state-space of a GFlowNet can easily be modified to accommodate an underlying state space for which the transitions do not form a DAG, e.g., to allow cycles. Let ${\cal S}$ be such an underlying state-space. Define the augmented state space ${\cal S}'={\cal S} \times \mathbb{N}$, where $\mathbb{N}=\{0,1,2,\ldots\}$ is the set of natural numbers, and $s'_t = (s_t, t)$ is the augmented state, where $t$ is the position of the state $s_t$ in the trajectory. With this augmented state space, we automatically avoid cycles. Furthermore, we may design or train the backwards transition probabilities $P_B(s'_t \mid s'_{t+1}=(s_{t+1}, t+1))$ to create a preference for shorter paths towards $s_{t+1}$, as discussed in~\cref{sec:free-P_B}. Note that we can further generalize this setup by replacing $\mathbb{N}$ with any totally ordered indexing set; the augmented state space will still have an associated DAG. The ordering ``$<$'' in the original state-space is lifted to the augmented state-space: $(s_t, t) < (s'_{t'}, t')$ if and only if $t < t'$ and $s_t < s'_t$.

\subsubsection{Stochastic Rewards}
\label{sec:stochastic-rewards}
We also consider the setting in which the given reward is stochastic rather than being a deterministic function of the state, yielding training procedures based on stochastic gradient descent. For example, with the trajectory balance loss of \cref{eq:L_TB}, if $R(s)$ is stochastic (even when given $s$), we can think of what is being really optimized is the squared loss with $\log R(s)$ replaced by its expectation (given $s$). This is a straightforward consequence of minimizing the expected value of a squared error loss (as for example in neural networks trained with a squared error loss and a stochastic target output, where the neural network effectively tries to estimate the expected value of that target).


\subsubsection{GFlowNets can be trained offline}
\label{sec:offline}
 
As discussed in \cref{sec:practical}, we do not need to train a GFlowNet using samples from its own trajectory distribution $\hat{P} = \Pi(o)$. Those training trajectories can be drawn from any training distribution $\pi_T$ with full support, as already shown by~\citet{bengio2021flow}. It means that a GFlowNet can be trained offline, as in offline reinforcement learning~\citep{ernst2005tree,riedmiller2005neural,lange2012batch}.

It should also be noted that with a proper adaptive choice of $\pi_T$, and assuming that computing $R$ is cheaper or comparable in cost to running the GFlowNet on a trajectory, it should be more efficient to continuously draw new training samples from $\pi_T$ than to rehearse the same trajectories multiple times. An exception would be rehearsing the trajectories leading to high rewards if these are rare.

How should one choose the training distribution $\pi_T$? It needs to cover the support of $R$ but if it were uniform it would be very wasteful and if it were equal to the current GFlowNet policy $\pi$ it might not have sufficient effective support and thus miss modes of $R$, i.e., regions where $R(x)$ is substantially greater than 0 but $R(x)\gg P_T(x)$. Hence the training distribution should be sampled from an exploratory policy that visits places that have not been visited yet and may have a high reward. High epistemic uncertainty around the current policy would make sense and the literature on acquisition functions for Bayesian optimization~\citep{srinivas2009gaussian} may be a good guide. More generally, this means the training distribution should be adaptive. For example, $\pi_T$ could be the policy of a second GFlowNet trained mostly to match a different reward function that is high when the losses observed by the main GFlowNet are large. It would also be good to regularly visit those trajectories corresponding to known large $R$, i.e., according to samples from $\pi$, to make sure those are not forgotten, even temporarily.

\subsection{Exploiting Data as Known Terminating States}
In some applications we may have access to a dataset of $(s,R(s))$ pairs and we would like to use them in a purely offline way to train a GFlowNet, or we may want to combine such data with queries of the reward function $R$ to train the GFlowNet. For example, the dataset may contain examples of some of the high-reward terminating states $s$ which would be difficult to obtain by sampling from a randomly initialized GFlowNet. How can we compute a gradient update for the GFlowNet parameters using such $(s,R(s))$ pairs?

If we choose to parametrize the backwards transition probabilities $P_B$ (which is necessary for implementing the detailed balance loss), then we can just sample a trajectory $\tau$ leading to $s$ using $P_B$ and use these trajectories to update the flows and forward transition probabilities along the traversed transitions. However, this alone is not guaranteed to produce the correct GFlowNet sampling distribution because the empirical distribution over training trajectories $\tau$ defined as above does not have full support. Suppose for example that the dataset only contains high-reward terminating states with $R(s)=1$. The GFlowNet could then just sample trajectories uniformly (which would be wrong, we would like the probability of most states not in the training set to be very small). On the other hand, if we combine the distribution of trajectories leading to terminal transitions in the dataset with a training distribution whose support covers all possible trajectories, then the offline property of GFlowNet guarantees that we can recover a flow-matching model.

\section{Conditional Flows and Free energies}
\label{sec:conditoinal-flows}
A remarkable property of flow networks is that we can recover the normalizing constant $Z$ from the initial state flow $F(s_0)$ (\cref{prop:flow-initial-state}). $Z$ also gives us the partition function associated with a given terminal reward function $R$ specifying the terminating flows.

What about internal states $s$ with $s_0<s<s_f$? If we had something like a normalizing constant for only the terminating flows achievable from $s$, we would be able to obtain a form of marginalization given state $s$, i.e., a conditional probability for terminating states $s'\geq s$, given $s$. Naturally, one could ask: does the flow $F(s)$ through state $s$ give us that kind of marginalization over only the downstream terminating flows? Unfortunately in general, the answer to this question is \emph{no}, as illustrated in \cref{fig:state-conditional-flow}: in this example $F(s_{2}) = 4$, whereas the sum of terminating flows achievable from $s_{2}$ is $6$ (the terminating states reachable from $s_{2}$ are $\{s_{5}, s_{6}, s_{7}\}$). The discrepancy is caused by the flow through $(s_{0}, s_{1}, s_{5})$ that contributes to the terminating flow $F(s_{5}{\rightarrow}s_{f})$, but not to $F(s_{2})$ since there is no order relation between $s_{1}$ and $s_{2}$.


\begin{figure}[ht]
     \centering
     \begin{subfigure}[b]{0.3\textwidth}
         \centering
         \includegraphics[width=\textwidth]{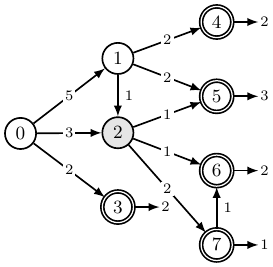}
         \caption{}
         \label{fig:state-conditional-flow-1}
     \end{subfigure}
     \hfill
     \begin{subfigure}[b]{0.3\textwidth}
         \centering
         \includegraphics[width=\textwidth]{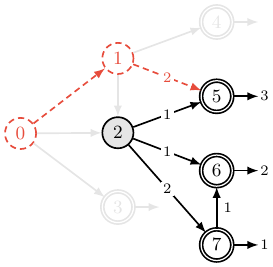}
         \caption{}
         \label{fig:state-conditional-flow-2}
     \end{subfigure}
     \hfill
     \begin{subfigure}[b]{0.3\textwidth}
         \centering
         \includegraphics[width=\textwidth]{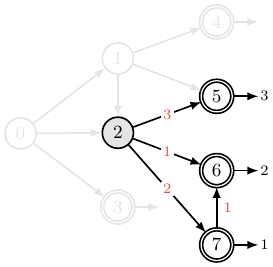}
         \caption{}
         \label{fig:state-conditional-flow-3}
     \end{subfigure}
        \caption{Example of a state-conditional flow network. (a) The original (Markovian) flow network. (b) The subgraph of states reachable from $s_{2}$; there is a flow through $(s_{0}, s_{1}, s_{5})$ that contributed to $F(s_{5}{\rightarrow}s_{f})$, but not to $F(s_{2})$, showing that $F(s_{2})$ does not marginalize the rewards of its descendant. (c) State-conditional flow network $F_{s_{2}}$, which differs from the original flow $F$ on the subgraph, but satisfies the desired marginalization property.}
        \label{fig:state-conditional-flow}
\end{figure}


In \cref{sec:marginalizing}, we show how GFlowNets applied to sampling sets of random variables can be used to estimate the marginal probability for the values given to a subset of the variables. It requires computing the kind of intractable sum discussed above (over the rewards associated with all the descendants of a state $s$, with $s$ corresponding to such a subset of variables and a descendant to a full specification of all the variables). That motivates the following definition:
\begin{definition}
\label{def:free-energy}
Given a pointed DAG $G = (\gS, \sA)$, the corresponding partial order denoted by $\geq$, and a function $\gE:\gS \rightarrow \R$, called the {\bf energy function}, we define the {\bf free energy} ${\cal F}(s)$ of a state $s$ as:
\begin{equation}
\label{eq:free-energy}
  e^{-{\cal F}(s)}\defeq \sum_{s':s'\geq s} e^{-{\cal E}(s')}.
\end{equation}
\end{definition}
Free energies are generic formulations for the marginalization operation (i.e. summing over a large number of terms) associated with energy functions, and we find their estimation to open the door to interesting applications where expensive MCMC methods would typically be the main approach otherwise. 

\subsection{Conditional flow networks}
In \cref{sec:trajectories-and-flows}, we defined a flow network as a DAG, augmented with some function $F$ over the set of complete trajectories $\gT$. We can extend this notion of flow networks by conditioning each component on some information $x$. In general, this conditioning variable can represent any conditioning information, either external to the flow network (but influencing the terminating flows), or internal (e.g., $x$ can be a property of complete trajectories over another flow network, like passing through a particular state).

\begin{definition}
\label{def:conditional-flow-net}
Let $\mathcal{X}$ be a set of conditioning variables. We consider a family of DAGs $G_x = (\gS_x, \mathcal{A}_x)$ indexed by $x \in \mathcal{X}$, along with a family of initial and terminal states denoted by $(s_{0}\mid x) \in \gS_x$ and $(s_{f}\mid x) \in \gS_x$ respectively. For each DAG $G_x$, we denote by $\gT_x$ the set of complete trajectories in $G_x$, and we denote by $\gT$ their union: 
\begin{equation*}
\gT = \bigcup_{x\in \mathcal{X}} \gT_x.
\end{equation*}
A \textbf{conditional flow network} is the specification of $\mathcal{X}$, the family $\{G_x, \ x \in \mathcal{X}\}$, along with a conditional flow function $F$, i.e. a function $F: \mathcal{X} \times \gT \rightarrow \R^+$ such that $F(x, \tau) = 0$ if $\tau \notin \gT_x$. For clarity, we will denote, for each $x \in \mathcal{X}$, by $F_x$ the function mapping each $\tau \in \gT_x$ to $F(x, \tau)$. Similar to \cref{sec:trajectories-and-flows}, $F_x$ induces a measure of the $\sigma$-algebra $2^{\gT_x}$ for each $x$.
\end{definition}

Conditional flow networks effectively represent a family of flow networks, indexed by the value of $x$. Since conditional flow networks are defined using the same components as an unconditional flow network, they inherit from all the properties of flow networks for all DAGs $G_{x}$ and flow functions $F_{x}$. In particular, we can directly extend the notion of probability distribution over flows, state and edge flows, forward and backward transition probabilities (\cref{sec:flow-probability}), of Markovian flows (\cref{sec:markovian-flows}), and any flow matching condition (\cref{sec:flow-matching-conditions}) to conditional flows; the only difference is that now every term explicitly depends of the conditioning variable $x$.

In \cref{subsec:rewardcond-flownet} and \cref{subsec:statedcond-flownet}, we will elaborate two important examples of conditional flow networks: flow networks conditioned on external information that changes the reward $R(s \mid x)$, and state-conditional flow networks that depend on internal information, i.e., previously visited states.

\subsection{Reward-conditional flow networks}
\label{subsec:rewardcond-flownet}
\begin{definition}
\label{def:reward-cond-flow-net}
Let $\mathcal{X}$ be a set of conditioning variables. Consider a flow network given by a pointed DAG $G = (\gS, \sA)$ and a flow function $F$. Consider a family $\mathcal{R}$ of non-negative functions of $\gS$: $\{R_x: \gS^f \rightarrow \R^+, \ x \in \mathcal{X} \}$. A \textbf{reward-conditional flow network} compatible with the family $\mathcal{R}$ is a conditional flow network (\cref{def:conditional-flow-net}), with $G_x=G$ for every $x\in\mathcal{X}$, such that the edge-flow functions induced by the conditional flow function $F$ satisfy:
\begin{align*}
    \forall x \in \mathcal{X}  \ \ \forall s \in \gS^f \ \ F_x(s \rightarrow s_f) = R_x(s).
\end{align*}
We will use the notations $R_x(s)$ and $R(s \mid x)$ interchangeably.
\end{definition}
Note that the definition above implies that all the DAGs of a reward-conditional flow network are identical, and only the terminating flows differ amongst the members of the family.

\begin{example}
We will see in \cref{sec:conditional-gflownets} that we can estimate a conditional flow network using a GFlowNet (\cref{sec:GFlowNets-Learning-a-Flow}), given a reward function $R(s \mid x)$. In an Energy-Based Model, the model $P_{\theta}(s)$ is associated with a given energy function $\gE_{\theta}(s)$, parametrized by $\theta$, with
\begin{equation*}
P_{\theta}(s) = \frac{\exp(-\gE_{\theta}(s))}{Z(\theta)}.
\end{equation*}
This model can be parametrized using a reward-conditional flow network, conditioned on $\theta$ with the reward function $R(s \mid \theta) = \exp(-\gE_{\theta}(s))$. We show in \cref{sec:energy-based-gfn} how to use such a conditional flow-network to learn an Energy-Based Model.
\end{example}

\subsection{State-conditional flow networks}
\label{subsec:statedcond-flownet}
\begin{definition}
\label{def:state-cond-flow-net}
Consider a flow network given by a DAG $G = (\gS, \sA)$ and a flow function $F$. For each state $s \in \gS$, let $G_s$ be the subgraph of $G$ containing all the states $s'$ such that $s' \geq s$. A \textbf{state-conditional flow network} is given by the family $\{G_s, \ s \in \gS\}$, along with a conditional flow function $F: \gS \times \gT \rightarrow \R^+$, where $\gT = \bigcup_{s \in \gS} \gT_s$, and $\gT_s$ the set of complete trajectories in $\mathcal{G}_s$, that satisfies:
\begin{equation}
\label{eq:terminating-state-conditional-flows}
    F_s( s' \rightarrow s_f) = F(s' \rightarrow s_f).
\end{equation}
\end{definition}

Note that in the definition above, we abused the notation $F$ to refer to both flow functions and edge flow functions, but also used $F_s$ to refer to the conditional flow function (or the corresponding edge flow function) $\tau \mapsto F(s, \tau)$. Unlike the reward-conditional flow networks defined in \cref{subsec:rewardcond-flownet}, the structure of the DAG in a state-conditional flow network depends on the anchor state $s$. In particular, this means that the initial state $(s_{0}\mid s) = s$ changes, but the final state $(s_{f}\mid s) = s_{f}$ remains unchanged, for any state $s$.

Since the definition of a state-conditional flow network depends on an original flow network, we must ensure that this definition is indeed correct, i.e. that such a state-conditional flow network that satisfies the conditions in \cref{eq:terminating-state-conditional-flows} exists.

\begin{proposition}
\label{prop:existence-state-conditional-flow}
For any flow network given by a DAG $G = (\gS, \sA)$ and a flow F, we can define a state-conditional flow network as per \cref{def:state-cond-flow-net}.
\end{proposition}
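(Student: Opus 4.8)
The plan is to exhibit an explicit construction of the conditional flow function $F$ on $\gT = \bigcup_{s\in\gS}\gT_s$ and then verify that, for each anchor state $s$, the restriction $F_s$ is a genuine (in fact Markovian) flow on the subgraph $G_s$ whose terminating edge flows agree with those of the original flow $F$, i.e.\ \cref{eq:terminating-state-conditional-flows}. The natural candidate is obtained by ``restarting'' the original flow's forward dynamics at $s$: let $P_F$ be the forward transition probability function associated with $F$ (\cref{def:transitionprob}), and for each anchor $s$ and each complete trajectory $\tau = (s, s_1, \dots, s_n, s_f) \in \gT_s$ define
\begin{equation*}
F_s(\tau) \defeq F(s\rightarrow s_f\text{-achievable mass from }s)\cdot\prod_{t} P_F(\cdot\mid\cdot) \quad\text{more precisely}\quad F_s(\tau) \defeq Z_s\prod_{t=1}^{n+1}P_F(s_t\mid s_{t-1}),
\end{equation*}
with $s_0 \defeq s$, $s_{n+1}\defeq s_f$, and the total conditional flow $Z_s \defeq \sum_{s'\,:\,s'\geq s} F(s'\rightarrow s_f)$ (the ``downstream reward mass'' of $s$, which is exactly $e^{-\mathcal F(s)}$ in the notation of \cref{def:free-energy} when $\mathcal E = -\log R$). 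Note $P_F(s_t\mid s_{t-1})$ is well-defined for every edge appearing in a trajectory of $G_s$ because all such edges lie in $\sA$ (the subgraph $G_s$ is a full subgraph on the up-set of $s$), and I set $F(x,\tau)=0$ whenever $\tau\notin\gT_x$ as required by \cref{def:conditional-flow-net}.

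First I would invoke \cref{lemma:PF-PB-extension}: since $P_F$ is consistent with $G$, its restriction to $G_s$ is consistent with $G_s$ (the children of a state $s'\geq s$ in $G_s$ are the same as in $G$, because $s'' \in Child(s')$ implies $s'' \geq s' \geq s$), so $\sum_{\tau\in\gT_s} \prod_t P_F(s_t\mid s_{t-1}) = 1$ and hence $F_s$ is a well-defined non-negative flow on $G_s$ with total flow $Z_s$. Next I would compute its terminating edge flows: for a terminating state $s'\in Par(s_f)$ with $s'\geq s$, \cref{prop:markovian-equivalences} (applied within $G_s$) gives $F_s(s'\rightarrow s_f) = Z_s\cdot P_{F,s}(s_f\mid s')\,\cdot(\text{flow through }s')/Z_s$; more cleanly, since $F_s$ is Markovian on $G_s$, its edge flow through $s'\rightarrow s_f$ equals $Z_s$ times the $P_F$-probability that a trajectory started at $s$ passes through the edge $s'\rightarrow s_f$. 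The key identity to establish is that this equals $F(s'\rightarrow s_f)$. This follows by comparing two ways of writing $F(s'\rightarrow s_f)$: in the original flow, $F(s'\rightarrow s_f) = F(s')P_F(s_f\mid s')$ and $F(s') = \sum_{\tau'\in\gT_{0,s'}} F(s_0)\prod P_F$; restricting the sum to trajectories that pass through $s$ and using that $P_F$ factorizes, the sub-sum over the $s$-to-$s'$ portion is precisely $Z_s^{-1} F_s(s')$... — I would instead argue more directly that both $\{F(s'\rightarrow s_f)\}_{s'\geq s}$ and $\{F_s(s'\rightarrow s_f)\}_{s'\geq s}$ are the terminating edge flows of Markovian flows on $G_s$ sharing the same forward transition probabilities $P_F$ restricted to $G_s$, and Markovian flows on a fixed DAG are determined up to a global scalar by their forward transitions (\cref{prop:flow-parametrizations}, first setting); matching the scalar via the total flow $Z_s = \sum_{s'\geq s}F(s'\rightarrow s_f)$ — which holds for the original flow's downstream part by \cref{prop:state-edge-flow} and for $F_s$ by construction — forces equality on every terminating edge.

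The main obstacle is the middle step: showing that the family $\{F(s'\rightarrow s_f)\}_{s'\geq s}$ genuinely arises as the terminating edge flows of \emph{some} flow on $G_s$ with forward transitions $P_F$ — equivalently, that when we take the original flow $F$ and look only at its behaviour ``conditioned on starting at $s$'', the induced object on $G_s$ is consistent. The subtlety, highlighted by \cref{fig:state-conditional-flow}, is that $F(s)\neq Z_s$ in general, so one cannot simply take $F$ restricted to $G_s$; one must genuinely rescale/recompute, and the verification that the rescaled forward-push has the \emph{same} terminating flows as $F$ is exactly the content that makes the definition non-vacuous. I expect the cleanest route is: define $F_s$ by the forward-push formula above with normalizer $Z_s$; verify it is Markovian on $G_s$ via \cref{lemma:PF-PB-extension} and \cref{prop:markovian-equivalences}; then show $F_s(s'\rightarrow s_f)=F(s'\rightarrow s_f)$ by induction on the poset $G_s$ from $s_f$ backward, using the flow-matching conditions (\cref{prop:flow-matching}) satisfied by $F$ restricted to the up-set of $s$ together with $\sum_{s'\in Par(s_f),\,s'\geq s}F(s'\rightarrow s_f) = Z_s$ as the base of the induction, and the fact that $P_F$ agrees for both flows on every non-terminating edge of $G_s$ to propagate the equality. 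Once the terminating edge flows match, \cref{def:state-cond-flow-net} is satisfied and the proposition follows.
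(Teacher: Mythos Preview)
Your construction $F_s(\tau) = Z_s \prod_t P_F(s_t\mid s_{t-1})$ is wrong in general, and the inductive repair you sketch cannot work either. Here is a concrete counterexample. Take states $\{s_0,a,b,c,s_f\}$ with edges $s_0\to a$, $s_0\to b$, $a\to c$, $b\to c$, $b\to s_f$, $c\to s_f$, and trajectory flows $F(s_0,a,c,s_f)=1$, $F(s_0,b,c,s_f)=1$, $F(s_0,b,s_f)=2$. Then $P_F(c\mid b)=1/3$, $P_F(s_f\mid b)=2/3$. Anchor at $s=b$: the reachable terminating states are $b$ and $c$ with $F(b\to s_f)=2$, $F(c\to s_f)=2$, so $Z_b=4$. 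Your formula gives $F_b(b\to s_f)=Z_b\,P_F(s_f\mid b)=4\cdot 2/3=8/3\neq 2$. The required state-conditional flow must instead use \emph{different} forward transitions at $b$ (namely $1/2,1/2$), because one unit of the original terminating flow at $c$ arrives via $a$, a path that avoids $b$ entirely.

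The error in your argument is the claim that ``$\{F(s'\to s_f)\}_{s'\geq s}$ are the terminating edge flows of a Markovian flow on $G_s$ with the same $P_F$''; this is precisely what fails, and your proposed induction cannot rescue it because the flow-matching conditions for $F$ do \emph{not} hold on the up-set of $s$: a state $s'\geq s$ may have parents outside the up-set (here $c\geq b$ has parent $a\not\geq b$), so $\sum_{s''\in Par(s')\cap G_s}F(s''\to s')\neq F(s')$ in general. The paper takes a completely different and more elementary route: it observes that the constraints $\sum_{\tau\in A_{s'\mid s}}F_s(\tau)=F(s'\to s_f)$ form a linear system whose equations involve disjoint sets of unknowns (since the $A_{s'\mid s}$ partition $\gT_s$), so a solution trivially exists; it then writes one down explicitly by taking, for each $\tau$ terminating in $s'$, the original flow through $\tau$'s extensions plus an equal share of the flow reaching $s'$ through paths that miss $s$.
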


\begin{proof}
Let $s \in \gS$ be a state. Since the structure of the DAG $G_{s}$ is clearly well-defined, we just need to show that there exists a flow function $F_{s}: \gT_s \rightarrow \R^+$ that satisfies  \cref{eq:terminating-state-conditional-flows}. If such a function exists for every $s\in \gS$, then it would suffice to define the conditional flow function $F: \gS \times \gT \rightarrow \R^+$ as:
\begin{equation*}
    F(s, \tau) = \begin{cases}
    F_s(\tau) \quad \text{if } \tau \in \gT_s \\
    0 \quad \text{otherwise}.
    \end{cases}
\end{equation*}
Let $A_{s'\mid s}$ be the set of complete trajectories in $\gT_{s}$ terminating in $s' \geq s$; the condition in  \cref{eq:terminating-state-conditional-flows} then reads:
\begin{equation}
\label{eq:A_s'_s'}
F_{s}(s'{\rightarrow}s_{f}) = F_{s}(A_{s'\mid s}) = \sum_{\tau \in A_{s'\mid s}}F_{s}(\tau) = F(s'{\rightarrow}s_{f}).
\end{equation}
Note that in \cref{eq:A_s'_s'}, $F(s'{\rightarrow}s_{f})$ is a given quantity because the flow $F$ is known. Since the sets of trajectories $\{A_{s'\mid s} , \ , s' \geq s\}$ form a partition of all the complete trajectories $\gT_{s}$, \cref{eq:A_s'_s'} is a system of linear equations, whose unknowns are $F_{s}(\tau)$ for all $\tau \in \gT_{s}$, where each equation involves separate sets of unknowns. Therefore there exists at least a solution $F_{s}(\tau)$ of this system.

We can construct such a solution in the following way. For some $\tau \in \gT_{s}$, we can first start by selecting the complete trajectories $\bar{\tau} \in \gT$ that contain $\tau$:
\begin{equation*}
C_{\tau} = \{\bar{\tau} \in \gT\,:\, \tau \subseteq \bar{\tau}\}.
\end{equation*}
The key difference between the DAG $G$ and the subgraph $G_{s}$ though is that $G$ may contain trajectories that terminate in some $s' \geq s$ but do not pass through $s$, and those are therefore not covered by the trajectories of $G_{s}$. Let $U_{s'\mid s}$ be the set of complete trajectories of $G$ defined as

\begin{equation*}
U_{s'\mid s} = \{\bar{\tau} \in \gT \,:\, \exists s'' > s, s'' \in \bar{\tau}, s' \in \bar{\tau}, s \notin \bar{\tau}\}.
\end{equation*}
For all $\tau \in \gT_{s}$ such that $\tau$ terminates in some $s' \geq s$, we can therefore construct the flow $F_{s}(\tau)$ as
\begin{equation*}
F_{s}(\tau) := F(C_{\tau}) + \frac{1}{n}F(U_{s' \mid s}),
\end{equation*}
where $n = |A_{s'\mid s}|$ is the number of trajectories $\tau' \in \gT_{s}$ that terminate in $s'$. It is easy to verify that $F_{s}(\tau)$ is a solution of \cref{eq:A_s'_s'}.
\end{proof}

While we saw in \cref{prop:flow-initial-state} that the initial flow $F(s_{0})$ was equal to the partition function, the initial state-conditional flow also benefit from a marginalizing property, and is now related to the free energy at $s$.

\begin{proposition}
\label{prop:free-energy-state-cond-flow-net}
Given a state-conditional flow network $(G, F)$ as in \cref{def:state-cond-flow-net}, for any state $s$, the initial flow of the state-conditional flow network corresponds to marginalizing the terminating flows $F(s' \rightarrow s_f)$ for $s' \geq s$:
\begin{equation*}
    F_{s}(s_{0} \mid s) = F_{s}(s) = \sum_{s'\,:\,s'\geq s}F(s' \rightarrow s_f) = \exp(-\gF(s)),
\end{equation*}
where $\gF(s)$ is the free energy associated to the energy function $\gE(s') = -\log F(s'{\rightarrow}s_f)$.
\end{proposition}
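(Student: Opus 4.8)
The plan is to read off the chain of equalities one at a time, applying earlier results to the flow network $(G_{s}, F_{s})$ — which exists by \cref{prop:existence-state-conditional-flow}, and which inherits every property of an ordinary flow network since a state-conditional flow network is a special case of a conditional flow network. The first equality $F_{s}(s_{0}\mid s) = F_{s}(s)$ is immediate from \cref{def:state-cond-flow-net}: the source of $G_{s}$ is $(s_{0}\mid s) = s$ itself, so the two notations denote the same state flow.

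For the middle equality I would first invoke \cref{prop:flow-initial-state} applied to $(G_{s}, F_{s})$: since $s$ is the source and $s_{f}$ the sink of $G_{s}$, the flow through the source, the total flow, and the flow through the sink all coincide, giving $F_{s}(s) = F_{s}(s_{f})$. Next I apply the incoming-edge relation \cref{eq:state-flow-incoming-edge-flows} of \cref{prop:state-edge-flow} to the sink inside $G_{s}$, obtaining $F_{s}(s_{f}) = \sum_{s' \in Par_{G_{s}}(s_{f})} F_{s}(s'{\rightarrow}s_{f})$. The key bookkeeping step is identifying $Par_{G_{s}}(s_{f})$: because $G_{s}$ keeps exactly the states $s' \geq s$ together with the edges of $G$ among them, and because $s_{f} \geq s$ always holds, the parents of $s_{f}$ in $G_{s}$ are precisely $\{s' \in \gS^{f} : s' \geq s\}$. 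Using the defining property \cref{eq:terminating-state-conditional-flows}, $F_{s}(s'{\rightarrow}s_{f}) = F(s'{\rightarrow}s_{f})$ for each such $s'$; and since $F(s'{\rightarrow}s_{f}) = 0$ whenever $s'{\rightarrow}s_{f} \notin \sA$, i.e. whenever $s' \notin \gS^{f}$, the sum over $\{s' \in \gS^{f}: s' \geq s\}$ equals the sum over all $\{s': s' \geq s\}$. Hence $F_{s}(s) = \sum_{s': s' \geq s} F(s'{\rightarrow}s_{f})$. Finally, with $\gE(s') = -\log F(s'{\rightarrow}s_{f})$ we have $e^{-\gE(s')} = F(s'{\rightarrow}s_{f})$, so $\sum_{s': s' \geq s} F(s'{\rightarrow}s_{f}) = \sum_{s': s' \geq s} e^{-\gE(s')} = e^{-\gF(s)}$ by \cref{eq:free-energy} of \cref{def:free-energy}, closing the chain.

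I do not expect a genuine obstacle here: the argument is a routine concatenation once the objects are in place. The only points needing care are the identification of $Par_{G_{s}}(s_{f})$ and the remark that enlarging the index set from terminating states $s' \geq s$ to all states $s' \geq s$ is harmless because $F$ vanishes on non-edges, together with the degenerate case $s = s_{f}$ (where $G_{s}$ reduces to a single state and both sides collapse consistently, namely to $F(s_{f}{\rightarrow}s_{f}) = 0$, equivalently $\gF(s_{f}) = \infty$).
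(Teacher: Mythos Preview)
Your proposal is correct and follows essentially the same route as the paper: both arguments apply \cref{prop:flow-initial-state} to the flow network $(G_{s},F_{s})$, decompose the resulting total flow as a sum of terminating edge flows, and then invoke \cref{eq:terminating-state-conditional-flows} and \cref{def:free-energy}. The paper is terser (it partitions $\sum_{\tau\in\gT_{s}}F_{s}(\tau)$ directly by terminating state rather than passing through $F_{s}(s_{f})$ and \cref{prop:state-edge-flow}), but the substance is identical; your extra bookkeeping about $Par_{G_{s}}(s_{f})$ and the harmless extension of the index set is sound and simply spells out what the paper leaves implicit.
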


\begin{proof}
This is a direct consequence of \cref{prop:flow-initial-state}, applied to the state-conditional flow function $\gF_s$, along with  \cref{def:free-energy}.

\begin{align*}
F_{s}(s_{0} \mid s) &= \sum_{\tau \in \gT_{s}}F_{s}(\tau)= \sum_{s'\,:\,s'\geq s}F_{s}(s'{\rightarrow}s_{f})\\
&= \sum_{s'\,:\,s'\geq s}F(s'{\rightarrow}s_{f}) = \sum_{s'\,:\,s'\geq s}e^{-\gE(s')}
\end{align*}
\end{proof}

Note that the definition of state-conditional flow networks is consistent with our original definition of (unconditional) flow networks in Section 2.1, in the sense that the original flow network is a valid state-conditional flow networks anchored at the initial state.

Another quantity of interest that state-conditional flow networks allow us to evaluate, is the probability of terminating a trajectory in a state $s'$ if all terminating edge flows were diverted towards an earlier state $s < s'$:

\begin{corollary}
\label{cor:state-conditional-P-T}
Consider a flow network given by a DAG $G=(\gS, \sA)$ and a flow $F$, from which we define any state-conditional flow network, as per \cref{def:state-cond-flow-net}. Given a state $s$, the flow function $F_s$ induces a probability distribution over $\{s'' \in \gS^f: \  s'' \geq s\} \subseteq \gS^f$, that we denote by $P_T(. \mid s)$. 

Under this measure, the probability of terminating a trajectory in $G_s$ in a state $s'$ (i.e. the last edge of the trajectory is $s' \rightarrow s_f$) is:
\begin{equation}
    P_T(s' \mid s) = \mathbf{1}_{s'\geq s} e^{-\gE(s') + \gF(s)},
\end{equation}
where $\gE$ is the energy function mapping each state $s'$ that is parent of $s_f$ to $-\log F(s'\rightarrow s)$, and $\gF$ is the corresponding free energy function.
\end{corollary}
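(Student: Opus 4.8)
The plan is to recognize that $P_T(\cdot \mid s)$ is nothing but the terminating state probability of the flow network $(G_s, F_s)$, and then apply the earlier results about terminating state probabilities together with the free-energy identity just established in \cref{prop:free-energy-state-cond-flow-net}. Concretely, $F_s$ is a flow function on the pointed DAG $G_s$ whose source state is $s$ and whose sink state is $s_f$, so by \cref{def:flow-P_T} the terminating state probability of this flow network is $P_T(s' \mid s) = F_s(s' \rightarrow s_f) / Z_s$, where $Z_s = F_s(\gT_s)$ is the total flow of $(G_s, F_s)$, defined only over those $s'$ that are parents of $s_f$ in $G_s$, i.e. those $s' \in \gS^f$ with $s' \geq s$.

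First I would identify the numerator: by the defining property \cref{eq:terminating-state-conditional-flows} of a state-conditional flow network, $F_s(s' \rightarrow s_f) = F(s' \rightarrow s_f) = e^{-\gE(s')}$, using the stated convention $\gE(s') = -\log F(s' \rightarrow s_f)$. Second I would identify the normalizing constant: by \cref{prop:flow-initial-state} applied to the flow network $(G_s, F_s)$, the total flow equals the flow through the source state, $Z_s = F_s(s)$, and by \cref{prop:free-energy-state-cond-flow-net} this equals $\sum_{s'' : s'' \geq s} F(s'' \rightarrow s_f) = e^{-\gF(s)}$. Dividing, for $s' \geq s$ we get $P_T(s' \mid s) = e^{-\gE(s')} / e^{-\gF(s)} = e^{-\gE(s') + \gF(s)}$, and for $s'$ not satisfying $s' \geq s$ the state $s'$ is simply not a terminating state of $G_s$ so $P_T(s' \mid s) = 0$; the two cases combine into the indicator expression $\mathbf{1}_{s' \geq s}\, e^{-\gE(s') + \gF(s)}$. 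It also remains to note that this is indeed a well-defined distribution over $\{s'' \in \gS^f : s'' \geq s\}$, which is immediate from \cref{prop:P_T-probability} applied to $(G_s, F_s)$.

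There is no real obstacle here — this corollary is essentially a repackaging of \cref{prop:free-energy-state-cond-flow-net} through the lens of \cref{def:flow-P_T} and \cref{prop:P_T-probability}. The one point requiring a sentence of care is checking that the hypotheses of those earlier results genuinely apply to $(G_s, F_s)$: we need $G_s$ to be a pointed DAG with source $s$ and sink $s_f$ (true by construction, since $G_s$ collects exactly the states $s'' \geq s$ and every such state lies on a trajectory from $s$ to $s_f$), and we need $F_s$ to be a genuine flow function on $\gT_s$ (true by \cref{prop:existence-state-conditional-flow}, which guarantees such an $F_s$ exists and is part of the state-conditional flow network). Once that is in place the computation is a two-line chain of equalities.
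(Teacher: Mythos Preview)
Your proposal is correct and follows essentially the same approach as the paper: both apply the terminating-state-probability formula to the flow network $(G_s, F_s)$, then invoke \cref{eq:terminating-state-conditional-flows} for the numerator and \cref{prop:free-energy-state-cond-flow-net} (via \cref{prop:flow-initial-state}) for the denominator. Your version is slightly more explicit about verifying that $(G_s, F_s)$ is a legitimate pointed-DAG flow network and about citing \cref{prop:P_T-probability} for well-definedness, but the core argument is identical.
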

\begin{proof}
Because $F_s$ is a flow function, \cref{def:p} and \cref{prop:flow-initial-state} tell us that:
\begin{align*}
    P_T(s' \mid s) = \begin{cases}
    \displaystyle \frac{F_s(s' \rightarrow s_f)}{F_s(s)} & \text{if } s' \geq s \\
    0 & \text{otherwise}
    \end{cases}
\end{align*}
Combining this with \cref{prop:free-energy-state-cond-flow-net}, and \cref{eq:terminating-state-conditional-flows}, we obtain for $s' \geq s$:
\begin{align*}
    P_T(s' \mid s)
    &= \mathbf{1}_{s' \geq s} \frac{F(s'\rightarrow s_f)}{e^{-\gF(s)}}\\
    &= \mathbf{1}_{s' \geq s} \frac{e^{-\gE(s')}}{e^{-\gF(s)}}
\end{align*}
\end{proof}

\subsection{Conditional GFlowNets}
\label{sec:conditional-gflownets}
Similar to the way we used a GFlowNet to estimate the flow of a flow network, we can also use a (conditional) GFlowNet in order to estimate a conditional flow network, with given target reward functions. A conditional GFlowNet follows the construction presented in Section 3, with the exception that all quantities to be learned now depend on the conditioning variable $x \in \mathcal{X}$ (e.g., $x$ is an additional input of the neural network).

All parametrizations and losses presented in \cref{sec:flow-matching-losses} could in principle be used to train a conditional GFlowNet, regardless of the conditioning set. 
Below we discuss yet another loss, first presented in \citet{deleu2022bayesian}, that could be used to train both GFlowNets and conditional GFlowNets.
\begin{example}
\label{ex:conditionalPFBparam}
Given a family of DAGs $G_x$ and reward functions $R_x$ indexed by $x \in \mathcal{X}$, where each state $s \in G_x$ is terminating (i.e. is a parent of $s_f$), and following \cref{ex:PFparam,ex:PFBparam}, we consider a parametrization given by the forward and backward transition probabilities $\gO^x_P = \gO^x_2 \times \gO^x_3$, where $\gO^x_2$ (resp. $\gO^x_3$) is the set of forward (resp. backward) probability functions $\hat{P}_F$ (resp. $\hat{P}_B$) consistent with $G_x$ for every $x \in \mathcal{X}$, and $(\Pi^x_P, \gH^x_P)$ defined as in \cref{ex:PFBparam,ex:PFBparam}. Each $(\gO^x_P, \Pi^x_P, \gH^x_P)$ is a flow parametrization of $(G_x, R_x)$, which can be trained with an edge-decomposable flow-matching loss, as proved in \citet{deleu2022bayesian}, and defined for every $s \rightarrow s' \in \sA^{-f}$:
\begin{align*}
    \gL_{DB}(\hat{P}_F, \hat{P}_B, s \rightarrow s', x) = \left(\log \frac{R_x(s') P_B(s \mid s', x) P_F(s_f \mid s, x)}{R_x(s) P_F(s' \mid s, x) P_F(s_f \mid s', x)} \right)^2
\end{align*}
\end{example}

\subsection{Training Energy-Based Models with a GFlowNet}
\label{sec:energy-based-gfn}
A GFlowNet can be trained to convert an energy function into an approximate corresponding sampler. Thus, it can be used as an alternative to MCMC sampling (\cref{sec:MCMC}). Consider the model $P_\theta(s)$ associated with a given parametrized energy function ${\cal E}_\theta(s)$ with parameters $\theta$: $P_\theta(s)=\frac{e^{-{\cal E}_\theta(s)}}{Z}$. Sampling from $P_\theta(s)$ could be approximated by sampling from the terminating probability distribution $P_T(s)$ of a GFlowNet trained with target terminal reward $R(s)=e^{-{\cal E}_\theta(s)}$ (see~\cref{eq:P_T_o}). In practice, $\hat{P}_T$ would be an estimator for the true $P_\theta$ because the GFlowNet training objective is not zeroed (insufficient capacity or finite training time). The GFlowNet samples drawn according to $\hat{P}_T$ could then be used to obtain a stochastic gradient estimator for the negative log-likelihood of observed data $x$ with respect to parameters $\theta$ of an energy function ${\cal E}_\theta$:
\begin{equation}
    \frac{\partial -\log P_\theta(x)}{\partial \theta} = \frac{\partial {\cal E}_\theta(x)}{\partial \theta} - \sum_{s} P_\theta(s) \frac{\partial {\cal E}_\theta(s)}{\partial \theta}.
\end{equation}
An approximate stochastic estimator of the second term could thus be obtained by sampling one or more terminating states $s \sim \hat{P}_T(s)$, i.e., from the trained GFlowNet's sampler. Furthermore, if the GFlowNet's loss is $0$, i.e. $\hat{P}_T = P_\theta$, the gradient estimator would be unbiased.

One could thus potentially jointly train an energy function ${\cal E}_\theta$ and a corresponding GFlowNet by alternating updates of $\theta$ using the above equation (with sampling from $P_\theta$ replaced by sampling from $\hat{P}_T$) and updates of the GFlowNet using the updated energy function for the target terminal reward. 

If we fix $\hat{F}(s{\rightarrow}s_f)=R(s)$ by construction (which we can do if the reward function is deterministic), then we can parametrize the energy function with the same neural network that computes the flow, since ${\cal E}(s)=-\log R(s)=-\log \hat{F}(s{\rightarrow}s_f)$. Hence {\em the same parameters are used for the energy function and for the GFlowNet}, which is appealing.

The above strategy for learning jointly an energy function and how to sample from it could be generalized to {\bf learning conditional distributions by using a conditional GFlowNet} instead. Let $x$ be an observed random variable and $h$ be a hidden variable, with the GFlowNet generating the pair $(x,h)$ in two sub-trajectories: either first generate $x$ and then generate $h$ given $x$, or first generate $h$ and then generate $x$ given $h$. This can be achieved by introducing a 6-valued component $u$ in the state to make sure that both $h$ and $x$ are generated before exiting into $s_f$, with the following values and constraints:
\begin{align}
    s=s_0 &\Rightarrow u= 0\\
    s=s_f &\Rightarrow u=5\\
    (u_t{\rightarrow}u_{t+1})& \in \{0{\rightarrow}1, 0{\rightarrow}2,1{\rightarrow}1,2{\rightarrow}2,1{\rightarrow}3,2{\rightarrow}4,3{\rightarrow}3,4{\rightarrow}4,3{\rightarrow}5,4{\rightarrow}5\}
\end{align}
where $u=1$ indicates that $x$ is being generated (before $h$), $u=2$ indicates that $h$ is being generated (before $x$), $u=3$ indicates that $h$ is being generated (conditioned on $x$), and $u=4$ indicates that $x$ is being generated (given $h$). The GFlowNet cannot reach the final state $s_f$ until both $x$ and $h$ have been generated. The conditional GFlowNet can thus approximately sample $P_T(x)$, $P_T(h \mid x)$, $P_T(h)$, $P_T(x \mid h)$ as well as $P_T(x,h)$. If we only want to sample $x$ (or only $h$), we allow exiting as soon as it is generated (resp. $h$ is generated). See~\cref{sec:marginalizing} for a more general discussion on how to represent, estimate and sample marginal distributions.

Let us denote by $P_\theta(x,h)\propto e^{-{\cal E}_\theta(x,h)}$ the joint distribution over $(x,h)$ associated with the energy function, i.e., with $F(s\rightarrow s_f)$. When $x$ is observed but $h$ is not, $\theta$ could thus be updated by approximating the marginal log-likelihood gradient
\begin{equation}
        \frac{\partial -\log P_\theta(x)}{\partial \theta} = \sum_h P_T(h \mid x) \frac{\partial {\cal E}_\theta(x,h)}{\partial \theta} - \sum_{x',h} P_T(x',h) \frac{\partial {\cal E}_\theta(x',h)}{\partial \theta}
\end{equation}
using samples from the estimated terminal sampling probabilities $\hat{P}_T$ of a trained GFlowNet to approximate in a stochastic gradient way the above sums (using one or a batch of samples).

Note how we now have outer loop updates (of the energy function, i.e., the reward function) from actual data, and an inner loop updates (of the GFlowNet) using the energy function as a driving target for the GFlowNet. How many inner loop updates are necessary for such a scheme to work is an interesting open question but most likely depends on the form of the underlying data generating distribution. If the work on GANs~\citep{goodfellow2014generative} is a good analogy, a good strategy may be to interleave updates of the energy function (as minus the log-terminal flow of a GFlowNet) based on a batch of data, and updates of the GFlowNet as a sampler based on both these samples (trajectories can be sampled backwards from a terminating state $s$ using $P_B$) and forward samples from the tempered training policy $\pi_T$ defined by the forward transition probabilities of the GFlowNet.

\subsection{Active Learning with a GFlowNet}
An interesting variant on the above scheme is one where the GFlowNet sampler is used not just to produce negative examples for the energy function but also to actively explore the environment. \citet{jain2022biologicalseqdesign} use an active learning scheme where the GFlowNet is used to sample candidates $x$ for which we expect the reward $R(x)$ to be generally large (since the GFlowNet approximately samples proportionally to $R(x)$). The challenge is that evaluating the true reward $R$ for any $x$ is computationally expensive and can potentially be noisy (for example, a biological assay to measure the binding energy of a drug to a given target protein). Thus, instead of using the true reward directly, the authors introduce a proxy $\hat{f}$ (which approximates the true reward function $f$), which is used to train the GFlowNet. This would lead to a setup similar to~\cref{sec:energy-based-gfn}, with an inner loop where a GFlowNet is trained to match the proxy $\hat{f}$, and an outer-loop where the proxy $\hat{f}$ is learned in a supervised fashion using $(x, y)$ pairs, where $x$ is proposed by the GFlowNet, and $y$ is the corresponding \textit{true} reward from the environment (for example, outcome of a biological of chemical assay). It is important to note here that the GFlowNet and the proxy are intricately linked since the coverage of proxy $\hat{f}$ over the domain of $x$ relies on diverse candidates from the GFlowNet. And similarly, since the GFlowNet matches a reward distribution defined by the proxy reward function $\hat{f}$, it also depends on the quality of the true reward function $f$. 

This setup can be further extended by incorporating information about how \textit{novel} a given candidate is, or how much epistemic uncertainty, $u(x, f)$, there is in the prediction of $\hat{f}$. We can use the acquisition function heuristics (like Upper Confidence Bound (UCB) or Expected Improvement (EI)) from Bayesian optimization~\citep{movckus1975bayesian,srinivas2009gaussian} to combine the predicted usefulness $\hat{f}(x)$ of configuration $x$ with an estimate of the epistemic uncertainty around that prediction. Using this as the reward can allow the GFlowNet to explore areas where the predicted usefulness is high ($\hat{f}(x)$ is large) and at the same time explore areas where there is more information to be gathered about useful configurations of $x$. The uncertainty over the predictions of $\hat{f}$ with the appropriate acquisition function can provide more control over the exploratory behaviour of GFlowNets.

As discussed by~\citet{bengio2021flow} when comparing GFlowNets with return-maximizing reinforcement learning methods, an interesting property of sufficiently trained GFlowNets is that they will sample from all the modes of the reward function, which is particularly desirable in a setting where exploration is required, as in active learning. The experiments in the paper also demonstrate this advantage experimentally in terms of the diversity of the solutions sampled by the GFlowNet compared with PPO, an RL method that had previously been used for generating molecular graphs and that tends to focus on a single mode of the reward function.

\subsection{Estimating Entropies, Conditional Entropies and Mutual Information}
\label{sec:mutual-information}

\begin{definition}
Given a reward function $R$ with $0\leq R(s)<1$ $\forall s$, we define the {\bf entropic reward function} $R'$ associated with $R$ as:
\begin{equation}
\label{eq:R'}
    R'(s) = - R(s) \log R(s). 
\end{equation}
\end{definition}
In brief, in this section, we show that we can estimate entropies by training two GFlowNets: one that estimates flows as usual for a target terminal reward function $R(s)$, and one that estimates flows for the corresponding entropic reward function. We show below that we obtain an estimator of entropy by looking up the flow in the initial state, and if we do this exercise with conditional flows, we get conditional entropy. Once we have the conditional entropy, we can also estimate the mutual information. 

\begin{proposition}
\label{prop:entropy}
Consider a flow network $(G, F)$ such that the terminating flows match a given reward function $R$, i.e. $\forall s \in \gS^f, \ F(s\rightarrow s_f) = R(s)$, with $R(s)<1$ for all $s$, and a second flow network $(G, F')$ with the same pointed DAG, but with a flow function for which the terminating flows match the entropic reward function $R'$ (\cref{eq:R'}), then the entropy $H[S]$ associated with the terminating state random variable $S \in \gS^f$ with distribution $P_T(S=s) = \frac{R(s)}{Z}$ (\cref{eq:flow-P_T}) is
\begin{equation}
\label{eq:H}
 H[S] \defeq - \sum_s P_T(s) \log P_T(s) = \frac{F'(s_0)}{F(s_0)} + \log F(s_0).
\end{equation}
\end{proposition}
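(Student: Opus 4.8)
The plan is to reduce everything to two elementary identities about total flows and then expand the definition of entropy directly. First I would invoke \cref{prop:flow-initial-state} together with \cref{prop:state-edge-flow} (equivalently, the normalization established in \cref{prop:P_T-probability}) to write the total flow of the first network as the sum of its terminating flows: $F(s_0) = Z = \sum_{s \in \gS^f} F(s{\rightarrow}s_f) = \sum_{s} R(s)$, using the hypothesis $F(s{\rightarrow}s_f) = R(s)$. Applying exactly the same reasoning to the second network $(G, F')$ gives $F'(s_0) = \sum_{s \in \gS^f} F'(s{\rightarrow}s_f) = \sum_{s} R'(s) = -\sum_{s} R(s)\log R(s)$, where the last equality is just the definition of the entropic reward $R'$ from \cref{eq:R'}.

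Next I would start from the definition $H[S] = -\sum_s P_T(s)\log P_T(s)$ and substitute $P_T(s) = R(s)/Z = R(s)/F(s_0)$ (from \cref{eq:flow-P_T} and \cref{prop:flow-initial-state}). Splitting the logarithm as $\log\!\big(R(s)/F(s_0)\big) = \log R(s) - \log F(s_0)$ yields
\[
H[S] = -\frac{1}{F(s_0)}\sum_{s} R(s)\log R(s) \;+\; \frac{\log F(s_0)}{F(s_0)}\sum_{s} R(s).
\]
By the identity above, the first sum equals $F'(s_0)$, so the first term is $F'(s_0)/F(s_0)$; and $\sum_s R(s) = F(s_0)$, so the second term collapses to $\log F(s_0)$. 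Combining the two terms gives the claimed formula $H[S] = F'(s_0)/F(s_0) + \log F(s_0)$.

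The only points needing care — and the nearest thing to an obstacle — concern conventions and positivity at the boundary of the reward range. The hypothesis $R(s) < 1$ for all $s$ (together with $R(s)\geq 0$) is precisely what guarantees $R'(s) = -R(s)\log R(s) \geq 0$, so that $R'$ is a legitimate non-negative reward function and a flow network $(G, F')$ with terminating flows matching $R'$ exists; for terminating states with $R(s) = 0$ one adopts the standard convention $0\log 0 = 0$, which is consistent both with $R'(s) = 0$ and with the vanishing of the corresponding term in $H[S]$. Beyond these bookkeeping remarks, the argument is a one-line rearrangement and requires no induction or structural argument on the DAG.
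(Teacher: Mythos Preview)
Your proof is correct and follows essentially the same approach as the paper: substitute $P_T(s)=R(s)/F(s_0)$, split $\log(R(s)/F(s_0))$ into two terms, and identify $-\sum_s R(s)\log R(s)=F'(s_0)$ and $\sum_s R(s)=F(s_0)$ via \cref{prop:flow-initial-state}. Your additional remarks on the $0\log 0$ convention and the role of the hypothesis $R(s)<1$ are accurate and slightly more explicit than the paper's one-line comment.
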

\begin{proof}
First apply the definition of $P_T(s)$, then~\cref{eq:F0=Z} on both flows:
\begin{align*}
    - \sum_s P_T(s) \log P_T(s) &= - \sum_{s}\frac{R(s)}{F(s_0)} (\log R(s) - \log F(s_0)) \\
    &= \frac{\big(-\sum_s R(s) \log R(s)\big) + \big(\log F(s_0) \,\sum_s R(s)\big)}{F(s_0)} \\
    &= \frac{F'(s_0)}{F(s_0)} + \log F(s_0).
\end{align*}
Note that we need $R(s)<1$ to make sure that the rewards $R'(s)$ (and thus the flows) are positive.
\end{proof}

\begin{proposition}
Given a set $\mathcal{X}$ of conditioning variables, consider a conditional flow network defined by a conditional flow function $F$, for which the terminating flows match a target reward $R$ family (conditioned on $x \in \mathcal{X}$) that satisfies $R_x(s)<1$ for all $s$, and a second conditional flow network defined by a conditional flow function $F'$, for which the terminating flows match the entropic reward functions $R'_x$  (\cref{eq:R'}), then the conditional entropy $H[S \mid x]$ of random terminating states $S \in \gS^f$ consistent with condition $x$ is given by
\begin{equation}
\label{eq:cond-entropy}
    H[S \mid x] = \frac{F'(s_0 \mid x)}{F(s_0 \mid x)} + \log F(s_0 \mid x).
\end{equation}
In particular, for a state-conditional GFlowNet ($\mathcal{X}=\gS$ is the state space of the DAG), we obtain
\begin{equation}
\label{eq:state-cond-entropy}
    H[S \mid s] = \frac{F'(s \mid s)}{F(s \mid s)} + \log F(s \mid s).
\end{equation}
More generally, the mutual information MI$(S;X)$ between the random draw of a terminating state $S=s$ according to $P_T(s \mid x)$ and the conditioning random variable $X$ is
\begin{equation}
    \label{eq:MI}
    {\rm MI}(S;X) = H[S]-E_X[H[S \mid X]] = 
    \frac{F'(s_0)}{F(s_0)} + \log F(s_0) -
    E_X\bigg[ \frac{F'(s_0 \mid X)}{F(s_0 \mid X)} + \log F(s_0 \mid X)\bigg]
\end{equation}
where $F(s)$ and $F'(s)$ indicate the unconditional flows (trained with no condition $x$ given) while $F(s \mid x)$ and $F'(s \mid x)$ are their conditioned counterparts.
\end{proposition}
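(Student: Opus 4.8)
The plan is to derive all three displayed identities from the unconditional entropy formula of \cref{prop:entropy}, using the observation made right after \cref{def:conditional-flow-net} that a conditional flow network is nothing but a family of ordinary flow networks $(G_x, F_x)$, so that every statement proved for unconditional flow networks applies verbatim to each $(G_x, F_x)$ with all symbols carrying the conditioning $x$.

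First I would prove \cref{eq:cond-entropy}. Fix $x \in \mathcal{X}$ and consider the two flow networks $(G_x, F_x)$ and $(G_x, F'_x)$ whose terminating flows match $R_x$ and the entropic reward $R'_x = -R_x \log R_x$ (\cref{eq:R'}), respectively. The hypothesis $R_x(s) < 1$ for all $s$ is precisely what makes $R'_x$ non-negative, hence what guarantees that $F'_x$ is a genuine flow and that \cref{prop:entropy} is applicable. The terminating-state distribution of $(G_x, F_x)$ is $P_T(\cdot \mid x) = R_x(\cdot)/F(s_0 \mid x)$, whose Shannon entropy is by definition $H[S \mid x]$. Applying \cref{prop:entropy} to this pair, with the initial state taken to be $(s_0 \mid x)$, immediately gives \cref{eq:cond-entropy}.

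Next, \cref{eq:state-cond-entropy} is the specialization $\mathcal{X} = \gS$ to the state-conditional flow networks of \cref{def:state-cond-flow-net}: for each anchor state $s$, the DAG $G_s$ has initial state $(s_0 \mid s) = s$, and by \cref{eq:terminating-state-conditional-flows} the terminating flows of $F_s$ (resp.\ $F'_s$) agree with $R$ (resp.\ $R'$) on $\{s' : s' \geq s\}$, the terminating distribution being $P_T(\cdot \mid s)$ of \cref{cor:state-conditional-P-T}; substituting $x = s$ and $(s_0 \mid s) = s$ into \cref{eq:cond-entropy} yields \cref{eq:state-cond-entropy}, and one may further rewrite $F(s \mid s) = F_s(s) = e^{-\gF(s)}$ via \cref{prop:free-energy-state-cond-flow-net}. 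Finally, \cref{eq:MI} is obtained from the standard identity ${\rm MI}(S;X) = H[S] - \mathbb{E}_X[H[S \mid X]]$ by substituting \cref{prop:entropy} (applied to the unconditional flows $F, F'$) for $H[S]$ and \cref{eq:cond-entropy} for $H[S \mid X]$, then carrying the expectation over $X$ through the second term.

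None of these steps is technically deep once \cref{prop:entropy} is in hand; the main point requiring care is the bookkeeping around $X$. Specifically, I need the hypothesis $R_x(s) < 1$ to hold uniformly in $x$ so that every $F'_x$ is a valid flow, and I need the ``unconditional'' flows $F, F'$ entering \cref{eq:MI} to be exactly those whose terminating flows are the $X$-marginalized reward and entropic reward, so that $F'(s_0)/F(s_0) + \log F(s_0)$ genuinely equals the marginal entropy $H[S]$ appearing in the mutual-information decomposition. In the state-conditional case one must also remember that $H[S \mid s]$ is the entropy of a distribution supported on $\{s'' \in \gS^f : s'' \geq s\}$ rather than on all of $\gS^f$, as dictated by \cref{cor:state-conditional-P-T}.
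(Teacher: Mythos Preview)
Your proposal is correct and follows essentially the same approach as the paper: apply \cref{prop:entropy} to each fixed $(G_x,F_x)$ to get \cref{eq:cond-entropy}, specialize to the state-conditional setting for \cref{eq:state-cond-entropy}, and substitute both the conditional and unconditional entropy formulas into the defining identity ${\rm MI}(S;X)=H[S]-E_X[H[S\mid X]]$ for \cref{eq:MI}. Your write-up is in fact more careful than the paper's own proof, which is a two-sentence sketch; your remarks about needing $R_x(s)<1$ uniformly in $x$ and about what the ``unconditional'' flows must represent for $H[S]$ to be the true marginal entropy are legitimate points of hygiene that the paper leaves implicit.
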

\begin{proof}
The proof of \cref{eq:cond-entropy} follows from the fact that each $(G_x, F_x)$ is a flow network, to which we can apply \cref{prop:entropy}. \cref{eq:MI} is a direct consequence of the definition of the Mutual Information, \cref{eq:H} and \cref{eq:cond-entropy}.
\end{proof}

If we have a sampling mechanism for $P(X)$, we can thus approximate the expectation in~\cref{eq:MI} by a Monte-Carlo average with draws from $P(X)$.

\section{GFlowNets on Sets, Graphs, and to Marginalize Joint Distributions}
\label{sec:GFlowNets-on-Sets}

\subsection{Set GFlowNets}

We first define an action space for constructing sets and we view the GFlowNet as a means to generate a random set $S$ and to estimate quantities like probabilities, conditional probabilities or marginal probabilities for realizations of this random variable. The elements of those sets are taken from a larger ``universe" set $\cal U$.
\begin{definition}
\label{def:set-gflownets}
Given a ``universe'' set $\mathcal{U}$, consider the pointed DAG $G=(\gS, \sA)$, where $\gS \defeq 2^{\mathcal{U}} \cup \{s_f\}$ is the set of all subsets of $\mathcal{U}$ with an additional state $s_f$, $s_0 = \{ \}$ is the empty set, and for any two subsets $s, s'$ of $\mathcal{U}$, $s \rightarrow s' \in \sA \Leftrightarrow \exists a \in \mathcal{U} \setminus s, \ \ s' = s \cup \{a\}$; meaning that each transition in the DAG corresponds to adding one element of $\mathcal{U}$ to the current subset. Additionally all subsets are connected to $s_f$, i.e. $\forall s \in \gS, \ s \rightarrow s_f \in \sA$. A \textbf{set flow network} is a flow network on this graph $G$, and a \textbf{set GFlowNet} is an estimator of such a flow network, as defined in \cref{sec:GFlowNets-Learning-a-Flow}. The target terminal reward function $R:s \mapsto F(s \rightarrow s_f)$ satisfies:
\begin{equation}
    Z = \sum_{s \in 2^{\cal U}} R(s) < \infty.
\end{equation}
\end{definition}
A set flow network defines a terminating probability distribution $P_T$ on states (see~\cref{def:flow-P_T} and~\cref{eq:free-energy}), with 
\begin{equation}
    P_T(s)=e^{-{\cal E}(s)+{\cal F}(s_0)}=\frac{F(s{\rightarrow}s_f)}{F(s_0)}
\end{equation}
where $\cal E$ represents the energy $-\log R$. Similarly,~\cref{cor:state-conditional-P-T} provides us with a formula for conditional probabilities of a given superset $s'$ of a given set $s$ under $P_T$,
\begin{equation}
\label{eq:P_T-superset}
P_T(s'  \mid  s' \supseteq s) \defeq e^{-{\cal E}(s')+{\cal F}(s)} = \frac{F(s'{\rightarrow}s_f)}{F(s \mid s)}
\end{equation}
where ${\cal F}$ indicates free energy (see~\cref{def:free-energy}).

Remember that with a GFlowNet with state and edge flow estimator $\hat{F}$, it is not guaranteed that $\hat{F}(s)=R(s)$ for all states $s \in \gS \setminus \{s_f\}$, so we could estimate probabilities with
\begin{equation}
\hat{P}_T(s) = \frac{\hat{F}(s{\rightarrow}s_f)}{\hat{F}(s_0)}
\end{equation}
or alternatively
\begin{equation}
\hat{P}_T(s) = \frac{R(s)}{\hat{F}(s_0)}.
\end{equation}
Similarly, we can estimate conditional superset probabilities with~\cref{eq:P_T-superset} or with
\begin{equation}
\hat{P}_T(s' \mid s' \supseteq s) = \frac{R(s')}{\hat{F}(s \mid s)},
\end{equation}
none of which are guaranteed to exactly sum to 1.

We can also compute the marginal probability over all supersets of a given set $s$, as shown below.
\begin{proposition}
Following the notations of \cref{def:set-gflownets}, let $\mathfrak{S}(s)=\{s'\supseteq s\}$ be the set of all supersets of a set $s$. The probability of drawing any element from $\mathfrak{S}(s)$ given a set flow network is
\begin{equation}
\label{eq:P_T-continuations}
    P_T(\mathfrak{S}(s))=\sum_{s'\supseteq s} P_T(s') = \frac{e^{-{\cal F}(s)}}{Z} = \frac{F(s \mid s)}{F(s_0)}.
\end{equation}
\end{proposition}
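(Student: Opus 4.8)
The plan is to reduce the claim to a direct bookkeeping of terminating flows, invoking \cref{prop:free-energy-state-cond-flow-net} for the free-energy identification. First I would observe that in the set DAG of \cref{def:set-gflownets} every subset of $\mathcal{U}$ is a terminating state (each $s$ has the edge $s\rightarrow s_f$), so $\mathfrak{S}(s)=\{s'\supseteq s\}\subseteq \gS^f$, and moreover the partial order ``$\geq$'' on this DAG is exactly reverse set inclusion, i.e. $s'\geq s \Leftrightarrow s'\supseteq s$. Hence $P_T(\mathfrak{S}(s))$ is a well-defined sub-sum of the terminating-state distribution $P_T$ (which is a genuine probability distribution on $\gS^f$ by \cref{prop:P_T-probability}).

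Next I would unfold the definitions: by \cref{def:flow-P_T} and \cref{prop:flow-initial-state} ($Z=F(s_0)$),
\begin{equation*}
    P_T(\mathfrak{S}(s)) = \sum_{s'\supseteq s} P_T(s') = \sum_{s'\supseteq s}\frac{F(s'\rightarrow s_f)}{Z} = \frac{1}{Z}\sum_{s'\,:\,s'\geq s} F(s'\rightarrow s_f).
\end{equation*}
The remaining sum is precisely the quantity evaluated in \cref{prop:free-energy-state-cond-flow-net}, which tells us $\sum_{s'\,:\,s'\geq s} F(s'\rightarrow s_f) = F_s(s) = F(s\mid s) = e^{-\mathcal{F}(s)}$, where $\mathcal{F}$ is the free energy for the energy $\mathcal{E}(s')=-\log F(s'\rightarrow s_f)=-\log R(s')$. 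Substituting and using $Z=F(s_0)$ once more yields
\begin{equation*}
    P_T(\mathfrak{S}(s)) = \frac{e^{-\mathcal{F}(s)}}{Z} = \frac{F(s\mid s)}{F(s_0)},
\end{equation*}
which is the desired identity.

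There is essentially no hard step here; the proof is a short chain of substitutions. The only point requiring care — and the one I would state explicitly — is the translation between the order-theoretic language of \cref{prop:free-energy-state-cond-flow-net} (sums over $s'\geq s$, the state-conditional initial flow $F(s\mid s)$) and the set-theoretic language of this proposition (supersets $s'\supseteq s$), together with checking that $\mathfrak{S}(s)$ lies entirely in $\gS^f$ so that applying $P_T$ to it is legitimate. Both are immediate from \cref{def:set-gflownets}.
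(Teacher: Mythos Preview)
Your proposal is correct and follows essentially the same route as the paper: apply the definition of $P_T$ (\cref{eq:flow-P_T}) with $Z=F(s_0)$, identify the superset relation with the DAG order $s'\geq s$, and invoke \cref{prop:free-energy-state-cond-flow-net} to collapse the sum to $F(s\mid s)=e^{-\mathcal{F}(s)}$. Your explicit checks that $\mathfrak{S}(s)\subseteq\gS^f$ and that $s'\supseteq s\Leftrightarrow s'\geq s$ are nice clarifications the paper leaves implicit.
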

\begin{proof}
We can rewrite the sum as follows, first applying the definition of $P_T$ (\cref{eq:flow-P_T}), and \cref{prop:free-energy-state-cond-flow-net}:
\begin{align*}
    \sum_{s'\supseteq s} P_T(s') &= \sum_{s'\geq s} \frac{F(s'{\rightarrow}s_f)}{F(s_0)} \\
    &= \frac{F(s \mid s)}{F(s_0)} \\
    &= \frac{e^{-{\cal F}(s)}}{Z}
\end{align*}
where we notice that for states that are sets, the order relationship $s \leq s'$ is equivalent to the subset relationship $s \subseteq s'$. 
\end{proof}

To summarize, a GFlowNet which is trained to match a given energy function (i.e. derived rewards) over sets can be used to represent that distribution, sample from it, estimate the probability of a set under it, estimate the partition function, search for the lowest energy set, sample a conditional distribution over supersets of a given set, estimate that conditional distribution for a given pair of set and superset, compute the marginal probability of a subset (i.e., summing over the probabilities of the supersets), and compute the entropy of the set distribution or of the conditional distribution of supersets of a set. For example,
using a GFlowNet to model distributions over sets has been successfully used in \cite{malik2023batchgfn}, where the goal is to iteratively select an informative batch of points for efficient active learning. The authors additionally show that the distribution over sets defined by a mutual information-based reward (\cref{sec:mutual-information}) can be approximated to satisfying levels using a neural network as a function approximator for the GFlowNet policy.

\subsection{GFlowNet on Graphs}

A graph is a special kind of set in which there are two kinds of elements: nodes and edges, with edges being pairs of node indices. Graphs may also have content attached to nodes and/or edges. The set operations described in the previous section can thus be specialized accordingly. Some actions (i.e. edges in the GFlowNet DAG) could insert a node while other actions could insert an edge. The set of allowable actions can be limited, for example to make sure the graph has a single connected component, or to ensure acyclicity. Like for sets in general, one cannot have an action which adds a node or edge which is already in the set. \citet{deleu2022bayesian} used a GFlowNet over DAGs in order to learn an approximation of the posterior distribution over the graphical structure of a Bayesian Network. While the GFlowNet was learned by minimizing some loss derived from flow-matching conditions as in \cref{sec:flow-matching-losses}, they showed that the resulting distribution is an accurate approximation of the target posterior. Since graphs are sets, all the GFlowNet operations on sets can be applied on graphs. 

\subsection{Marginalizing over Missing Variables}
\label{sec:marginalizing}

The ability of GFlowNets to capture probability distributions over sets can be applied to modeling the joint distribution over random variables, to calculating marginal probabilities over given subsets of variable values, and to sampling or computing probabilities for any conditional (e.g., for a subset of variables given another subset of variables).

Let $X=(X_1,X_2,\ldots,X_n)$ be a composite random variable with $n$ element random variables $X_i$, $1\leq i \leq n$, each with possible values $x_i \in {\cal X}_i$ (not necessarily numbers). If we are given an energy function or a terminal reward function $R(x)$ to score any instance $X=x$, we can train a particular kind of set GFlowNet for which the set elements are pairs $(i,x_i)$ and at most one element in the set with index $i$, for all $i$. The only allowed terminating transitions are when the set has exactly size $n$ and every size-$n$ set $s$ terminates on the next transition. 

Note how that GFlowNet can sample an $X$ in any possible order, if $P_B$ allows that order. Given an existing set of $(i,x_i)$ pairs (represented by a  state $s=\{(i,x_i)\}_i$), we can estimate the marginal probability of that subset of variables (implicitly summing over all the missing ones, see~\cref{eq:P_T-continuations}). We can sample the other variables by setting the state at $s$ and continuing to sample from the GFlowNet's policy (the learned forward transition probability function). We can sample from a chosen subset $S'$ of the other variables by constraining that policy to only add elements which are in $S'$. In addition, we can do all the other things that are feasible on set GFlowNets, such as estimating the partition function, sampling in an order which prefers the early subsequences with the largest marginal probability, searching for the most probable configuration of variables, or estimating the entropy of the distribution.

\subsection{Modular Energy Function Decomposition}

Let us see how we can apply the graph GFlowNet framework to a special kind of graph: a factor graph~\citep{kschischang2001factor} with reusable factors. This will yield a distribution $P_T(g)$ over graphs $g$, each of which is associated with an energy function value ${\cal E}(g)$ (and associated reward $R(g)$). Energy-based models are convenient because they can decompose a joint probability into independent pieces ~\citep[possibly corresponding to independent mechanisms,][]{SchJanPetSgoetal12,goyal2019recurrent,goyal2020inductive}, each corresponding to a {\em factor} of a factor graph. In our case, we would like a shared set of factors $\mathbb{F}$ to be reusable across many factor graphs $g$. The factor graph will provide an energy and a probability over a set of random variables $\cal V$. Let the graph $g=\{(F^i,v^i)\}_i$ be written as a set of pieces $(F^i,v^i)$, where $F^i \in \mathbb{F}$ is the index of a factor with energy function term ${\cal E}_{F^i}$, selected from the pool $\mathbb{F}$ of possible factors, and where $v^i=(v_1,v_2,\ldots)$ is a list of realizations of the random variables $V_{j} \leftarrow v_{j}$, where $V_j \in \mathcal{V}$ is a node of the factor graph. That list defines the edges of the factor graph connecting variable $V_j$ with the $j$-th argument of ${\cal E}_{F^i}$. Let us denote ${\cal E}_{F^i}(v^i)$ the value of this energy function term ${\cal E}_{F^i}$ applied to those values $v^i$, i.e.,
\begin{equation}
    {\cal E}_{F^i}(v^i) = {\cal E}_{F^i}(v_1,v_2,\ldots).
\end{equation}
The total energy function of such a graph can then be decomposed as follows
\begin{equation}
    \label{eq:total_energy_decomposition}
    {\cal E}(g) = \sum_i {\cal E}_{F^i}(v^i).
\end{equation}
What is interesting with this construction is that the graph GFlowNet can now sample a graph $g$, possibly given some conditioning observations $x$: see~\cref{sec:energy-based-gfn} on how GFlowNets can be trained jointly with an energy function, including the case where only some random variables are observed. Hence, given some observed variables (not necessarily always the same), the graph GFlowNet can sample a latent factor graph containing and connecting (with energy function terms) both observed and latent random variables, and whose structure defines an energy function over values of the joint observed and latent variables. 

Not only can we use the compositional nature of the objects generated by a GFlowNet to decompose the total energy into reusable energy terms corresponding to ideally independent mechanisms, but we can also decompose the GFlowNet itself into modules associated with each mechanism. The action space of this graph GFlowNet is fairly complex, with each action corresponding with the addition of a latent variable $V_k$ or the addition of a graph piece $(F^i,v^i)$. Such an action is taken in the context of the state of the GFlowNet, which is a partially constructed graph (arising from the previous actions). The GFlowNet and its associated energy function parameters are thus decomposed into modules. Each module knows how to compute an energy function ${\cal E}_{F_i}$ and how to score and sample competitively (against the other modules) a new graph piece (to insert the corresponding factor in the graph).

Consider some observed variables (a subset of the $V_j$'s with their values $v_j$), collectively denoted $x$. Consider a graph $g$ among those compatible with $x$ (i.e. with some nodes corresponding to $V_j=v_j$ for the observed variables) and denote $h$ the specification of the part of $g$ not already provided by $x$. We can think about latent variable $h$ as the explanation for the observed $x$. Note how marginalizing over all the possible $h$, we can compute the free energy of $x$. The principles of \cref{sec:energy-based-gfn} can be applied to train such an energy-based GFlowNet. It also makes sense to represent a prior over graph structures in the energy function. For example, we may prefer sparse factors (with few arguments), and we may introduce soft or hard constraints having to do with a notion of type that is commonly used in computer programming and in natural language. Each random variable in the graph can have as one of its attributes a type, and each factor energy function argument can expect a type. Energy function terms can be added to construct this prior by favouring graph pieces $(F_i,v^i=(v_k)_k)$ in which the type of variable $V_k$ (of which $v_k$ is a realization) matches the type expected of the $k$-th argument of $F_i$. This is very similar to attention mechanisms~\citep{bahdanau2014neural,vaswani2017attention}, which can be seen to match a query (an expected type) with a key (the type associated with an element). For instance, \cite{pan2023better} propose the Forward-Looking GFlowNet framework that exploits the modularity of the energy function, as in \cref{eq:total_energy_decomposition} and obtain a better approximation of the target distribution than those resulting from regular losses.

\section{Continuous or Hybrid Actions and States}
\label{sec:continuous}
All of the mathematical developments above have used sums over states or actions, with the idea that these would be elements of a discrete space. However, for the most part one can replace these sums by integrals in case the states or actions are either continuous or hybrid (with some discrete components and some continuous components). Beyond this, we discuss below what the presence of continuous-valued actions and states changes to the GFlowNet framework.

Although there are explicit sums respectively over successors and predecessors which come up in \cref{eq:flow-matching-log-loss}, such sums are also hiding in the detailed balance constraint of \cref{eq:detailed-balance}. Indeed, these sums are implicit as part of the normalizing constant in the conditional density of the next state or previous state in $P_F(s_{t+1} \mid s_t)$ and $P_B(s_t \mid s_{t+1})$. We consider below ideas to deal with this challenge.

\subsection{Integrable Normalization Constants}

We first note that if we can handle a continuous state, we can also handle a hybrid state, as follows. Let the state be decomposed into
\begin{equation}
    s = (s^i, s^x)
\end{equation}
where $s^i$ is discrete and $s^x$ is continuous. Then we can decompose any of the transition conditionals as follows:
\begin{equation}
    P_F(s_{t+1} \mid s_t) = P(s^x_{t+1} \mid s^i_{t+1},s_t) P(s^i_{t+1} \mid s_t).
\end{equation}
We note that this is formally equivalent to decomposing the transition into two transitions, first to perform the discrete choice into the next state, and second to perform the continuous choice into the next state (given the discrete choice). Having continuous-valued inputs to a neural net is no problem. The challenge is to represent continuous densities on the output, with the need to both being able to compute the density of a particular value (say $P(s^x_{t+1} \mid s^i_{t+1},s_t)$) and to be able to sample from it. Computing categorical probabilities and sampling from a conditional categorical is standard fare, so we only discuss the continuous conditional. One possibility is to parametrize $s^x_{t+1} \mid s^i_{t+1},s_t$ with a density for which the normalization constant is a known tractable integral, like the Gaussian. However, that may limit capacity too much, and may prevent a good minimization of the detailed balance or flow-matching loss. One workaround is to augment the discrete part of the state $s_i$ with extra dimensions corresponding to ``cluster IDs", i.e., partition the continuous density into a mixture. We know that with enough mixture components, we can arbitrarily well approximate densities from a very large family. Other approaches include modeling the conditional density with an autogressive or normalizing flow model~\citep[with a different meaning of the word flow]{rezende2015variational}, or, like in denoising diffusion models~\citep{sohl2015deep}, decomposing sampling of $s^x$ into several resampling steps, transforming its ditribution from a simple one to complex one.

To guarantee that the detailed balance constraint can be exactly satisfied, we could go further and think about parametrizing the edge flow $F((s_i,s_x){\rightarrow}(s'_i,s'_x))$, and note that this is the natural parametrization if we use the node-based flow-matching loss. For example, keeping with the Gaussian example, we would now have a joint Gaussian energy in the vector $(s_x,s'_x)$ for each feasible discrete component indexed by $(s_i,s'_i)$. Note that in practical applications of GFlowNets, not all the transitions that satisfy the order relationship are generally allowed in the GFlowNet's underlying DAG. For example, with set GFlowNets, the only allowed actions add one element to the set (not an arbitrary number of elements). These constraints on the action space mean that the number of legal $(s_i,s'_i)$ pairs is manageable and correspond to the number of discrete actions. The overall action is therefore seen as having a discrete part (choosing $s'_i$ given $s_i$) and a continuous part (choosing $s'_x$ given $s'_i$, $s_i$ and $s_x$). With such a joint flow formulation, the forward and backward conditional densities can be computed exactly and be compatible with each other.

\subsection{GFlowNets in GFlowNets}

Another way to implement an edge flow involving continuous variables is to use a lower-level $\langle$GFlowNet, energy function$\rangle$ pair to represent its flow, conditional probabilities and sample from them. Remember that such a pair can be trained following the approach discussed in~\cref{sec:energy-based-gfn}. Instead of a joint Gaussian for $(s_x,s'_x)$ given $(s_i,s'_i)$ we could have a smaller-scale GFlowNet and energy function (representing an edge flow in the outer GFlowNet) to handle a whole family of transitions of a particular type in a larger-scale outer GFlowNet. Imagine that we have a fairly arbitrary energy function for such a transition, with parameters that we will learn. Then we can also train a GFlowNet to sample in either direction (either from $s_t$ to $s_{t+1}$ or from $s_{t+1}$ to $s_t$) and to evaluate the corresponding normalizing constants (and hence, the corresponding conditional probabilities). The discrete aspect of the state and of its transition may correspond to a family of transitions (e.g., insert a particular type of node in a graph GFlowNet), and a separate $\langle$GFlowNet, energy function$\rangle$ module may be specialized and trained to handle such transitions.

While this paper was under review, \citet{lahlou2023continuous} extended the theory of GFlowNets to more general state spaces, including continuous ones, and experimentally validated that the usual losses can be used to effectively perform inference in continuous domains.

\section{Related Work}
\label{sec:related-work}
There are several classes of related literature that concern the problem of generating a diversity of samples, given some energy or reward signal, in particular:
\begin{itemize}[itemsep=-3pt,topsep=5pt]
    \item generative models (in particular deep learning ones),
    \item RL methods that maximize reward with some form of exploratory behavior or smoothness prior,
    \item MCMC methods that solve the problem of sampling from $p(x) \propto f(x)$ \textit{in principle},
    \item evolutionary methods, that can leverage group diversity through iterations over a population of solutions.
\end{itemize}

In what follows we discuss these and offer insights into similarities and differences between GFlowNets and these approaches. Note that the literature related to this problem is much larger than we can reference here, and extends to many other subfields of ML, such as GANs~\citep{kumar2019maximum}, VAEs~\citep{kingma2013auto,kusner2017grammar}, and normalizing flows~\citep{dinh2014nice,dinh2016density,rezende2015variational}. Yet another related
type of approach are the Bayesian optimization methods~\citep{movckus1975bayesian,srinivas2009gaussian}, which have also been used for searching in the space of molecules~\citep{griffiths2017constrained}. The main
relation with Bayesian optimization methods is that 
GFlowNets are generative and can thus complement Bayesian
optimization methods which scan a tractable list of candidates.
When the search space is too large to be able to separately 
compute a Bayesian optimization acquisition function score
on every candidate, using a generative model is appealing.
In addition, GFlowNets are used to explore the modes of the distribution
rather than to search for the single most dominant mode.
This difference is similar to that with classical RL
methods, discussed further below.

\subsection{Contrast with Generative Models}

The main difference between GFlowNets and established deep generative
models like VAEs or GANs is that whereas the latter are trained by being provided a finite set
of examples sampled from the distribution of interest, 
a GFlowNet is normally trained by being provided an energy function or a reward function. 

This reward function tells us not just about the samples that are likely under the distribution of interest (which we can think of as positive examples) but also about those that are unlikely (which we can think of as negative examples) and also about those in-between (whose reward is not large but is not zero either). If we think of the maximum likelihood training objective in those terms, it is like a reward function that gives a high reward to every training example (seen as a positive example, where the probability should be high) and a zero reward everywhere else. However, other reward functions are possible, as seen in the application of GFlowNets to the discovery of new molecules~\citep{bengio2021flow}, where the reward is not binary and increases monotonically as a function of the value of a desirable property of the candidate molecule.

Note however that the difference with other generative modeling approaches blurs when we include the learning of the energy function along with the learning of the GFlowNet sampler, as outlined in~\cref{sec:energy-based-gfn}. In that case,
the pair comprising the trainable GFlowNet sampler and the trainable energy function achieves a similar objective as a trainable generative model. Note that GFlowNets have been designed for generating discrete variable-size compositional structures (like sets or graphs), for both latent and observed variables, whereas GANs, VAEs or normalizing flows start from the point of view of modeling real-valued fixed-size vectors using real-valued fixed-size latent variables. 

An interesting difference between GFlowNets and most generative model training frameworks (typically some variation on maximum likelihood) is in the very nature of the training objective for GFlowNets, which came about in the context of active learning scenarios. Whereas the GFlowNet training pairs $(s,R(s))$ can come from any distribution over $s$ (any full-support training policy $\pi_T$), which does not have to be stationary (and indeed will generally not be, in an active learning setting), the maximum likelihood framework is very sensitive to changes in the distribution of the data it sees. This is connected to the ``offline learning" property of the flow matching objective (\cref{sec:offline}, among others). 

\subsection{Contrast with Regularized Reinforcement Learning}

The flow-matching loss of GFlowNets~\citep{bengio2021flow} arose from the inspiration of the temporal-difference training~\citep{sutton2018reinforcement} objectives associated with the Bellman equation. The flow-matching equations are analogous to the Bellman equation in the sense that the training objective is local (in time and states), credit assignment propagates through a bootstrap process and tries to fix the parametrization so that these equations are satisfied, knowing that if they were (everywhere), we would obtain the desired properties. However, these desired properties are different, as elaborated in the next paragraph. The context in which GFlowNets were developed is also different from the typical way of thinking about agents learning in some environment: we can think of the deterministic environments of GFlowNets as involving internal actions typically needed by a cognitive agent that needs to perform some kind of inference through a sequence of steps (predict or sample some things given other things), i.e., through actions internal to the agent and controlling its computation. This is in contrast with the origins of RL, focused on the actions of an agent in an external and unknown stochastic environment. GFlowNets were introduced as a tool for learning an internal policy, similar to the use of attention in modern deep learning, where we know the effect of actions, and the composition of these actions defines an inference machinery for that agent.

Classical RL ~\citep{sutton2018reinforcement} control methods work by maximizing return in Markov Decision Processes (MDPs); their focus is on finding the policy $\pi^* \in \mbox{argmax}_\pi V^\pi(s) \; \forall s$ maximizing the expected return $V^\pi(s)$, which happens to provably be achieved with a \textit{deterministic} policy~\citep{sutton2018reinforcement}, even in stochastic MDPs. In a deterministic MDP, of interest here, this means that training an RL agent is a search for the most rewarding trajectory, or in the case of terminal-reward-only MDPs (again of interest here), the most rewarding terminating state.

Another perspective, that emerged out of both the probabilistic inference literature~\citep{toussaint2006probabilistic} and the bandits literature~\citep{auer2002nonstochastic}, is concerned with finding policies of the form $\pi(a|s) \propto f(s,a)$. It turns out that maximizing both return and \textit{entropy}~\citep{ziebart2008maximum} of policies in a control setting yield policies such that
\begin{equation}
    p(\tau) = \left[p(s_0) \prod_{t=0}^{T-1} P(s_{t+1}|s_t,a_t)\right] \exp \left( \eta \sum_{t=0}^{T-1} R(s_t,a_t)\right)
\end{equation}
where $\tau=(s_0,a_0,s_1,a_1,\ldots,s_T)$ and $\eta$ can be seen as a temperature parameter. 
This result can also be found under the control-as-inference framework~\citep{haarnoja2017reinforcement, levine2018reinforcement}. In deterministic MDPs with terminal rewards and no discounting of future rewards, this simplifies to $p(\tau)\propto \exp(\eta \rho(\tau))$, where $\rho$ is the return.

In recent literature, this entropy maximization (MaxEnt) is often interpreted as a regularization scheme~\citep{nachum2017bridging}, entropy being used either as an intrinsic reward signal or as an explicit regularization objective to be maximized. Another way to understand this scheme is to imagine ourselves in an adversarial bandit setting~\citep{auer2002nonstochastic} where each arm corresponds to a unique trajectory, drawn with probability $\propto \exp(\rho(\tau))$.

An important distinction to make between MaxEnt RL and GFlowNets is that, in the general case they do not find the same result. A GFlowNet learns a policy such that $P_T(s) \propto R(s)$, whereas MaxEnt RL (with appropriately chosen temperature and $R$) learns a policy such that $P_T(s) \propto n(s) R(s)$, where $n(s)$ is the number of paths in the DAG of all trajectories that lead to $s$~\citep[a proof is provided in][]{bengio2021flow}. An equivalence only exists if the DAG minus $s_f$ is a tree rooted at $s_0$, which has been found to be useful~\citep{buesing2019approximate}. What this overweighting by
a factor $n(s)$ means practically is that states corresponding to longer sequences (which typically will have exponentially more paths to them) will tend to be sampled much more often (typically exponentially more often) than states corresponding to shorter sequences. Clearly, this breaks the objective of sampling terminating states in proportion to their reward and provides a strong motivation for considering GFlowNets instead.

Another perspective on maximizing entropy in RL is that one can also maximize entropy on the states' \textit{stationary distribution} $d^\pi$~\citep{ziebart2008maximum}, rather than the policy. In fact, one can show that the objective of training a policy such that $P_T(s)\propto R(s)$ is equivalent to training a policy that maximizes $r(s,a) = \log R(s,a) - \log d^\pi(s,a)$. Unfortunately, computing stationary distributions, although possible~\citep{nachum2019dualdice,wen2020batch}, is not always tractable nor precise enough for purposes of reward regularization. 


\subsection{Contrast with Monte-Carlo Markov Chain methods}

MCMC has a long and rich history~\citep{metropolis1953equation,hastings1970monte,andrieu2003introduction}, and is particularly relevant to the present work, since it is also a principled class of methods towards sampling from $P_T(s) \propto R(s)$.
MCMC-based methods have already found some amount of success with learned deep neural networks used to drive sampling~\citep{grathwohl2021oops,dai2020learning,xie2021mars,nash2019autoregressive,seff2019discrete}.

An important drawback of MCMC is its reliance on iterative sampling (forming the Markov chain, one configuration at a time, each of which is like a terminating state of a GFlowNet): a new state configuration is obtained at each step of the chain by making a small stochastic change to the configuration in the previous step. Although these methods guarantee that asymptotically (in the length of the chain) we obtain samples drawn from the correct distribution, there is an important set of distributions for which finite chains are unlikely to provide enough diversity of the modes of the distribution.

This is known as the mode-mixing problem~\citep{jasra2005markov,bengio2013better,pompe2020framework}: the chances of going from one mode to a neighboring one may become exponentially small (and thus require exponentially long chains) if the modes are separated by a long sequence of low-probability configurations. This can be alleviated by burning more computation (sampling longer chains) but becomes exponentially unsustainable with increased mode separation. The issue can also be reduced by introducing random sampling (e.g., drawing multiple chains) and simulated annealing~\citep{andrieu2003introduction} to facilitate jumping between modes. However,
this becomes less effective in high dimensions and when the modes occupy a tiny volume (which can become an exponentially small fraction of the total space as its dimension increases) since random sampling is unlikely to land in the neighborhood of a mode. 

In contrast, GFlowNets belong to the family of {\bf amortized} sampling methods \citep[which includes VAEs,][]{kingma2013auto}, where we train a machine learning system to produce samples: we have exchanged the complexity of sampling through long chains for the complexity of training the sampler. The potential advantage of such amortized samplers is when the distribution of interest has generalizable structure: when it is possible to guess reasonably well where high-probability samples can be found, based on the knowledge of a set of known high-probability samples (the training set). This is what makes deep generative models work in the first place and thus suggests that in such high-dimensional settings where modes occupy tiny volumes ~\citep[as per the manifold hypothesis,][]{cayton2005algorithms,narayanan2010sample,rifai2011manifold}, one can capitalize on the already observed $(x,R(x))$ pairs (where $x$ is an already visited configuration and $R(x)$ its reward) to ``jump" from known modes to yet unvisited ones, even if these are far from the ones already visited. 

How well this will work then depends on the ability to generalize of the learner, i.e., on the strength and appropriateness of its inductive biases, as usual in machine learning. 
In the case where there is no structure at all (and thus no possibility to generalize when learning about the distribution), there is no reason to expect that amortized ML methods will fare better than MCMC. But if there is structure, then the exponential cost of mixing between modes could go away. There is plenty of evidence that ML methods can do a good job in such high-dimensional spaces (like the space of natural images) and this suggests that GFlowNets and other amortized sampling methods would be worth considering where ML generally works well. Molecular graph generation experiments~\citep{bengio2021flow} comparing GFlowNets and MCMC methods appear to confirm this. 

Another factor to consider (independent of the mode mixing issue) is the amortization of the computational costs: GFlowNets pay a large price upfront to train the network and then a small price (sampling once from $P_T$) to generate each new sample. Instead, MCMC has no upfront cost but pays a lot for each independent sample. Hence, if we want to only sample once, MCMC may be more efficient, whereas if we want to generate a lot of samples, amortized methods may be advantageous. One can imagine settings where GFlowNets and MCMC could be combined to achieve some of the advantages of both approaches.

\paragraph{Evolutionary Methods}

Evolutionary methods work similarly to MCMC methods, via an iterative process of stochastic local search, and populations of candidates are found that maximize one or many objectives ~\citep{brown2004graph,salimans2017evolution,jensen2019graph,swersky2020amortized}. From such a perspective, they have similar advantages and disadvantages. One practical advantage of these methods is that natural diversity is easily obtainable via group metrics and subpopulation selection~\citep{mouret2012encouraging}. This is not something that is explicitly tackled by GFlowNet, which instead relies on i.i.d. sampling and giving non-zero probability to suboptimal samples as a diversity mechanism.

\paragraph{Sequential Monte-Carlo}

Sequential Monte Carlo~\citep[SMC,][]{naesseth2019elements, aru2002smc} methods are a class of methods aimed at solving inference problems. Similar to GFlowNets, SMC samplers are trained to sample from a distribution given by its unnormalized probability density, and require forward and backward kernels, as in GFlowNets. Unlike GFlowNets though, they require specifying intermediate targets $\gamma_t(z_t)$. In addition,
with GFlowNets, the reward normally only comes at the end of the sequence, unlike the per-time-step likelihoods used to reweight particles in SMC. GFlowNets can be applied in settings which do not fit the typical particle filter setting, such as those whose intermediate states do not correspond to valid elements of the sample space. The trajectories do not necessarily represent a sequence of latent variables associated with corresponding observations, as in filtering tasks. The GFlowNet trajectory distributions are only defined by the terminal state reward function.

\section{Conclusions and Open Questions}

This paper extends and deepens the mathematical framework and mathematical properties of GFlowNets~\citep{bengio2021flow}. It connects the notion of flow in GFlowNets with that of measure over trajectories and introduces a novel training objective (the detailed balance loss) which makes it possible to choose a parametrization separating the backward policy $P_B$ which controls preferences over the order in which things are done from the constraints imposed by the target reward function. 
 
An important contribution of this paper is the mathematical framework for marginalization or free energy estimation using GFlowNets. It relies on the simple idea of conditioning the GFlowNet so as to push the ability to estimate a partition function already introduced by~\citet{bengio2021flow} to a much more general setting. This makes it possible in principle to estimate intractable sums of rewards over the terminating states reachable by an arbitrary state, opening the door to marginalization over supergraphs of graphs, supersets of sets, and supersets of (variable,value) pairs. In turn, this provides formulae for estimating entropies, conditional entropies and mutual information. 

\ifarxiv
\cref{sec:direct-credit-assignment} introduces alternatives to the flow-matching objective which may bypass the slow ``bootstrapping" propagation of credit information from the end of the action sequence to its beginnings, with so-called direct credit assignment (which has some similarity to policy gradient). This question also motivated the trajectory balance objective introduced by~\citet{malkin2022trajectory} and further work in this direction is warranted.

In an attempt to better discern the links between GFlowNets and the more common forms of RL,  \cref{sec:stochastic-environment} considers the cases where rewards are provided not just at the end but possibly after every action, and where the environment may be stochastic. It also shows how a greedy policy that maximises returns could be obtained from a trained GFlowNet.
Another link with RL is introduced in \cref{sec:expected-downstream-reward} by considering something similar to a value function in GFlowNets, i.e., the expected downstream reward from a given state $s$. In a similar spirit,
\cref{sec:intermediate-rewards-and-returns} generalizes GFlowNets to the case where rewards can be earned along the trajectory, thus introducing a notion of return, but keeping the objective of sampling in proportion to the return.

\cref{sec:multi-flows} generalizes GFlowNets in another direction, considering the possibility of having a family of flows being learned simultaneously over the same graph. This opens the door to
a distributional generalization of GFlowNets
(by analogy with distributional RL) as well a
notion of unsupervised GFlowNet (where the reward
function can be defined after the GFlowNet
has been trained in an unsupervised fashion)
and learning to sample from the Pareto front
defined by a set of reward functions.
\fi 

Many open questions obviously remain, from the extension to continuous actions and states to hierarchical versions of GFlowNets with abstract actions and integrating the energy function in the GFlowNet parametrization itself, enabling an interesting form of modularization and knowledge decomposition. Importantly, many of the mathematical formulations presented in this paper will require empirical validation to ascertain their usefulness, improve these ideas, turn them into impactful algorithms and explore a potentially very broad range of interesting applications, from replacing MCMC or being combined with MCMC in some settings, to probabilistic reasoning to further applications in active learning for scientific discovery.

\section*{Acknowledgements}
The authors want to acknowledge the useful suggestions and feedback on the paper and its ideas provided by Alexandra Volokhova, Marc Bellemare, Valentin Thomas, Modjtaba Shokrian Zini, Mohammad Pedramfar, Arthur Maffre, and Axel Nguyen Kerbel. They are also grateful for the financial support from CIFAR, Samsung, IBM, Google, Microsoft, JP Morgan Chase, and the Thomas C. Nelson Stanford Interdisciplinary Graduate Fellowship.\\

This research was funded in part by JPMorgan Chase \& Co. Any views or opinions expressed herein are solely those of the authors listed, and may differ from the views and opinions expressed by JPMorgan Chase \& Co. or its affiliates. This material is not a product of the Research Department of J.P. Morgan Securities LLC. This material should not be construed as an individual recommendation for any particular client and is not intended as a recommendation of particular securities, financial instruments or strategies for a particular client. This material does not constitute a solicitation or offer in any jurisdiction.

\bibliography{main.bib}
\clearpage
\ifarxiv
\appendix
\section{Direct Credit Assignment in GFlowNets}
\label{sec:direct-credit-assignment}

Similarly to temporal-difference methods, which are based on minimizing the mismatch with respect to the Bellman equations, the flow-matching and detailed-balance losses will take many updates (and sampling many trajectories) to propagate a reward mismatch on a terminating state into the transition probabilities inside the flow network. This would be particularly acute for longer trajectories and prompts the question of alternative more direct training objectives. This section attempts to answer the following question:  ``Given a training trajectory $\tau$, are there more direct ways of assigning credit to the earlier transitions in the trajectory?''. \cite{malkin2022trajectory} provide an alternative answer by introducing the Trajectory Balance loss (\cref{ex:tb-loss}).

We can view the process of sampling a trajectory with a GFlowNet as analogous to sampling a sequence of states in a stochastic recurrent neural network. What makes things complicated is that such a neural network (i) does not directly output a prediction to be matched by some target, and (ii) that the state may be discrete (or a combination of discrete and continuous components). 

Regarding (i), we recall that the flow in a transition $s{\rightarrow}s'$ (involved in the training objective) is an intractable sum over all the possible trajectories leading to $s$. However, we may be able to obtain a stochastic gradient that, in average over such trajectories, estimates the desired quantity. For this, we exploit the properties of flows to obtain a stochastic gradient estimator, derived through the next three propositions.\\

\noindent{\bf Notation for derivatives through the flow-matching constraint}: below we sometimes need to distinguish total derivatives (noted $\frac{d y}{d x}$) that take into account the indirect effects due to the flow matching constraint from other derivatives (noted $\frac{\partial y}{\partial x}$) and capturing only direct gradients. Either notation can be used when the constraint does not change the result. Computing indirect gradients through implicit dependencies is an active area of research and commonly utilizes the Implicit Function Theorem, e.g., for implicit layers in fixed-point iteration layers and Deep Equilibrium Models \citep{bai2019deqs}.

\begin{proposition}
\label{prop:dlogF-dlogF}
Consider the effect of a slight change in the log of the flow at $s<s'$ under the flow-matching constraint: it yields a change in the log of the flow at $s'$, following the conditional probability $P(s | s')$:
\begin{equation}
\label{eq:dlogF-dlogF}
    \frac{d \log F(s')}{d \log F(s)} =P(s | s')
\end{equation}
where $P$ is the distribution on events over trajectories.
\end{proposition}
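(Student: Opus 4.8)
The plan is to read the total derivative $\frac{d\log F(s')}{d\log F(s)}$ as the result of implicit differentiation of the flow-matching constraint with the forward transition probabilities $P_F$ held fixed: we perturb the scalar $F(s)$ and let the constraint determine the flow everywhere else. First I would record that, once $P_F$ is fixed, the flow-matching condition $\forall v\ne s_0,\ F(v)=\sum_{u\in Par(v)}F(u{\rightarrow}v)=\sum_{u\in Par(v)}F(u)P_F(v\mid u)$ is exactly the forward recursion that distributes the flow through the DAG in topological order, and that any non-negative solution of it is automatically a genuine Markovian flow (the backward equation $F(u)=\sum_{v\in Child(u)}F(u{\rightarrow}v)$ follows from $\sum_{v}P_F(v\mid u)=1$, and \cref{prop:flow-matching} then gives the trajectory flow). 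Hence a small multiplicative perturbation $F(s)\mapsto e^{\epsilon}F(s)$ propagated forward yields, for $|\epsilon|$ small, a one-parameter family of Markovian flows on $G$, with every state $v$ that is not a (weak) descendant of $s$ left unchanged.

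Next I would linearize. Differentiating $F(v)=\sum_{u\in Par(v)}F(u)P_F(v\mid u)$ gives $dF(v)=\sum_{u\in Par(v)}dF(u)\,P_F(v\mid u)$; dividing by $F(v)$ and using the identity $F(u)P_F(v\mid u)=F(v)P_B(u\mid v)$ from \cref{def:transitionprob} yields the clean recursion
\[
 d\log F(v)=\sum_{u\in Par(v)}P_B(u\mid v)\,d\log F(u).
\]
Writing $g(v)\defeq\frac{d\log F(v)}{d\log F(s)}$, we obtain the boundary conditions $g(s)=1$ and $g(v)=0$ whenever $v\not\geq s$, together with $g(v)=\sum_{u\in Par(v)}P_B(u\mid v)\,g(u)$ for every $v>s$; by the boundary condition only parents $u$ with $u\geq s$ carry a nonzero contribution.

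Then I would show that the conditional probability $P(s\mid v)$ — meaning $P(\{\tau:s\in\tau\}\mid\{\tau:v\in\tau\})=F(\{\tau:s,v\in\tau\})/F(v)$ as in \cref{eq:cond-p} — satisfies the same recursion and boundary data. It equals $1$ at $v=s$ and $0$ when $v\not\geq s$ (no trajectory passes through two incomparable states). For $v>s$, decompose the complete trajectories through both $s$ and $v$ according to the last edge $u{\rightarrow}v$ entering $v$: since $s<v$, any such trajectory has $s$ on its prefix from $s_0$ to $v$, forcing $s\leq u$; and by the Markov property (\cref{eq:markov}), conditionally on passing through $u$ the event ``$\tau$ previously passed through $s$'' is independent of ``$\tau$ leaves $u$ by the edge $u{\rightarrow}v$'', so $F(\{\tau:s\in\tau,\ u{\rightarrow}v\in\tau\})=P(s\mid u)\,F(u{\rightarrow}v)=P(s\mid u)\,F(v)\,P_B(u\mid v)$. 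Summing over $u\in Par(v)$ with $u\geq s$ and dividing by $F(v)$ gives $P(s\mid v)=\sum_{u\in Par(v),\,u\geq s}P_B(u\mid v)\,P(s\mid u)$, which is precisely the recursion obeyed by $g$. An induction along a topological order of $\{u:u\geq s\}$ then gives $g(v)=P(s\mid v)$ for all $v\geq s$, and specializing to $v=s'>s$ proves \cref{eq:dlogF-dlogF}.

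The main obstacle here is conceptual rather than computational: pinning down precisely what ``a slight change in $\log F(s)$ under the flow-matching constraint'' means — i.e.\ which quantities are held fixed — and arguing that the natural choice (fix $P_F$, propagate the perturbation forward) both stays inside the space of valid Markovian flows and is the one relevant to GFlowNet credit assignment. Once that is settled, the only delicate point is the bookkeeping over which parents $u$ of a descendant $v$ satisfy $u\geq s$, so that the contributions of incomparable and upstream parents are correctly discarded on both sides of the induction.
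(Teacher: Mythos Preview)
Your proof is correct and takes a genuinely different route from the paper's. The paper proceeds by a single global decomposition: writing $F(s') = F(s)\,P(s'\mid s) + F(\mathcal{T}_{s'}^{-s})$, where $\mathcal{T}_{s'}^{-s}$ is the set of complete trajectories through $s'$ but not $s$, then differentiating with $P(s'\mid s)$ and $F(\mathcal{T}_{s'}^{-s})$ treated as constants to get $\frac{dF(s')}{dF(s)}=P(s'\mid s)$, and finally converting to log-derivatives via $\frac{F(s)}{F(s')}P(s'\mid s)=P(s\mid s')$. You instead linearize the flow-matching recursion node by node, obtain $d\log F(v)=\sum_{u}P_B(u\mid v)\,d\log F(u)$, and identify this recursion (with its boundary data) as the one satisfied by $v\mapsto P(s\mid v)$ via the Markov property, concluding by induction along a topological order. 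Your approach is more explicit about what is being held fixed --- you pin down $P_F$ as the invariant, which is exactly what makes the paper's $P(s'\mid s)$ and $F(\mathcal{T}_{s'}^{-s})$ constant --- and it actually delivers the stronger statement $\frac{d\log F(v)}{d\log F(s)}=P(s\mid v)$ for every $v\geq s$, not just the single target $s'$. The paper's argument is shorter because it collapses your entire induction into the one identity $F(\{\tau:s,s'\in\tau\})=F(s)\,P(s'\mid s)$.

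One small over-claim to tidy up: you assert that the perturbed system gives ``a one-parameter family of Markovian flows on $G$''. Strictly it does not --- once you bump $F(s)$ while leaving its parents alone, the incoming flow-matching constraint at $s$ itself is violated, so \cref{prop:flow-matching} does not apply globally. This is harmless for your argument, since you only ever invoke the differentiated recursion at nodes $v>s$, where the forward constraint does continue to hold; but the sentence claiming a genuine Markovian flow is unnecessary and not quite correct.
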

\begin{proof}
We are going to consider a partition of the complete trajectories going through $s'$ into those that also go through $s$ (set $\mathcal{T}_{s'}^s$) and those that don't (set $\mathcal{T}_{s'}^{-s}$). By definition we can write:
\begin{align*}
    P(s') = P(\mathcal{T}_{s'}^s) + P(\mathcal{T}_{s'}^{-s})
\end{align*}
Because $\mathcal{T}_{s'}^s = \{\tau \in \mathcal{T}: \ s \in \tau, \ s' \in \tau\}$, we can write $P(\mathcal{T}_{s'}^s) = P(s) P(s' | s)$. Hence:
\begin{align*}
    F(s') = F(s) P(s' | s) + F(\mathcal{T}_{s'}^{-s}).
\end{align*}
Additionally, because the flow in $s$ does not influence trajectories in $\mathcal{T}_{s'}^{-s}$, then $\frac{d F(\mathcal{T}_{s'}^{-s})}{d F(s)} = 0$; which leads to:
\begin{align*}
\frac{d F(s')}{d F(s)}&= P(s'|s)  \\
\frac{d F(s')}{d \log F(s)} &= F(s) P(s'|s) \\
      &= F(s \cap s') \\
      &= P(s | s') F(s') \\
\frac{d \log F(s')}{d \log F(s)} &=P(s | s').
\end{align*}
\end{proof}

\begin{proposition}
Consider the effect of a slight change in the log of the flow on the edge $s{\rightarrow}s'$: we obtain a change in the log of the flow at $s'$ following the backward conditional probability $P_B(s | s')$:
\begin{equation}
\label{eq:dlogF-dlogFedge}
    \frac{d \log F(s')}{d \log F(s{\rightarrow}s')} =P_B(s | s')
\end{equation}
where $P$ is the distribution on events over trajectories.
\end{proposition}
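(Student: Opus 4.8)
The plan is to mirror the proof of \cref{prop:dlogF-dlogF} almost verbatim, replacing the role of the state $s$ by the edge $s{\rightarrow}s'$. First I would partition the set of complete trajectories passing through the state $s'$ into those that also pass through the edge $s{\rightarrow}s'$ (call this set $\mathcal{T}_{s'}^{s\rightarrow s'}$) and those that reach $s'$ through some other parent (call it $\mathcal{T}_{s'}^{-(s\rightarrow s')}$). By definition of the induced measure $P$ on trajectories we get $P(s') = P(\mathcal{T}_{s'}^{s\rightarrow s'}) + P(\mathcal{T}_{s'}^{-(s\rightarrow s')})$, and since $\mathcal{T}_{s'}^{s\rightarrow s'} = \{\tau : s{\rightarrow}s' \in \tau\}$ is exactly the event defining $F(s{\rightarrow}s')$, we have $P(\mathcal{T}_{s'}^{s\rightarrow s'}) = P(s{\rightarrow}s')$. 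Translating back to flows, this reads $F(s') = F(s{\rightarrow}s') + F(\mathcal{T}_{s'}^{-(s\rightarrow s')})$.

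Next I would observe that a change in the flow on the edge $s{\rightarrow}s'$ does not affect the flow carried by trajectories that reach $s'$ without using that edge, so $\frac{d F(\mathcal{T}_{s'}^{-(s\rightarrow s')})}{d F(s{\rightarrow}s')} = 0$ under the flow-matching constraint. Differentiating the displayed identity then gives $\frac{d F(s')}{d F(s{\rightarrow}s')} = 1$. Converting to logarithmic derivatives, $\frac{d F(s')}{d \log F(s{\rightarrow}s')} = F(s{\rightarrow}s')$, and using \cref{eq:trans-pb} (the definition of the backward transition probability, $P_B(s\mid s') = F(s{\rightarrow}s')/F(s')$) this equals $P_B(s\mid s')\,F(s')$; dividing through by $F(s')$ yields $\frac{d \log F(s')}{d \log F(s{\rightarrow}s')} = P_B(s\mid s')$, as claimed. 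I would lay this out as a short chain of equalities inside a single \texttt{align*} block, taking care not to insert a blank line into the display.

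The one point that deserves a sentence of justification — and which I expect to be the only real subtlety — is the claim that perturbing the edge flow $F(s{\rightarrow}s')$ leaves $F(\mathcal{T}_{s'}^{-(s\rightarrow s')})$ unchanged. In the proof of \cref{prop:dlogF-dlogF} the analogous fact (that the flow through $s$ does not influence trajectories avoiding $s$) is stated without elaboration, so I would match that level of detail: the flow-matching constraint propagates a change in $F(s{\rightarrow}s')$ only to states and edges that are comparable with $s{\rightarrow}s'$ in the order, and every trajectory in $\mathcal{T}_{s'}^{-(s\rightarrow s')}$ reaches $s'$ by an edge disjoint from $s{\rightarrow}s'$, hence contributes a summand that is constant in $F(s{\rightarrow}s')$. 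Everything else is the same bookkeeping with $\log$-derivatives as in the previous proposition, so no genuinely new obstacle arises.
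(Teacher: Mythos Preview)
Your proof is correct and reaches the same conclusion through the same log-derivative bookkeeping, but the paper's own argument is more direct than the trajectory-partition route you borrow from \cref{prop:dlogF-dlogF}. Instead of splitting $\{\tau:s'\in\tau\}$ into trajectories that do or do not use the edge $s{\rightarrow}s'$ and then arguing that the second piece is insensitive to $F(s{\rightarrow}s')$, the paper simply invokes the flow-matching identity $F(s')=\sum_{\tilde s\in Par(s')}F(\tilde s{\rightarrow}s')$ from \cref{prop:flow-matching}, which immediately gives $\frac{dF(s')}{dF(s{\rightarrow}s')}=1$ without any trajectory-level reasoning. Your partition $F(s')=F(s{\rightarrow}s')+F(\mathcal{T}_{s'}^{-(s\rightarrow s')})$ is of course just this same sum with one term singled out, so the two arguments coincide; the paper's version just skips the detour through trajectory sets and the accompanying ``the other summand does not depend on this edge flow'' justification. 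After that, both proofs run the identical chain $\frac{d\log F(s')}{d\log F(s{\rightarrow}s')}=\frac{F(s{\rightarrow}s')}{F(s')}\cdot\frac{dF(s')}{dF(s{\rightarrow}s')}=\frac{F(s{\rightarrow}s')}{F(s')}=P_B(s\mid s')$.
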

\begin{proof}
We first use the chain rule and properties of the derivatives of the log, and then we use from the flow matching constraint that $F(s')=\sum_{s \in Par(s')} F(s{\rightarrow}s') \Rightarrow \frac{d F(s')}{d F(s{\rightarrow}s')}=1$:
\begin{align*}
    \frac{d \log F(s')}{d \log F(s{\rightarrow}s')} &= \frac{1}{F(s')} \frac{d F(s')}{d F(s{\rightarrow}s')} \frac{d F(s{\rightarrow}s')}{d \log F(s{\rightarrow}s')} \\
    &= \frac{F(s{\rightarrow}s')}{F(s')} \frac{d F(s')}{d F(s{\rightarrow}s')}  \\
    &= \frac{F(s{\rightarrow}s')}{F(s')} \\
    &= P_B(s | s').
\end{align*}
\end{proof}

Intuitively, \cref{eq:dlogF-dlogF} and \cref{eq:dlogF-dlogFedge} tell us how a perturbation in the flow at one place would result in a change elsewhere in order to maintain the flow match everywhere, using the flow-matching conditions to propagate infinitesimal changes in flow backwards and forwards. Hence, they only become true as we approach the limit of matched flows, and in practice (with an imperfectly trained GFlowNet) the corresponding expressions will be biased. However, we can exploit them to estimate long-range equilibrium gradients and obtain an estimator of credit assignment across a long trajectory in \cref{prop:grad-estimators} below.

In the GFlowNet setting, suppose we parametrize the edge flow estimator $F_\theta(s{\rightarrow}s')$ via parameters $\theta$. In order to understand the effect of a change in $\theta$ on our loss function $\mathcal{L}$, we must compute the \textit{total} derivative $\frac{dL}{d\theta}$ by summing the \textit{direct} and \textit{indirect} gradients. In our context, the direct gradient $\frac{\partial L}{\partial \theta}$ is due to the explicit change in loss from changing $\theta$ (not taking the flow-matching constraint into account) and the indirect gradient includes the induced changes in the flow due to the constraint. With this, we are in a position to formalize unbiased estimators for the total derivative $\frac{dL}{d\theta}$ in \cref{prop:grad-estimators}:
\begin{proposition}
\label{prop:grad-estimators}
Let $\cal L$ be a flow-matching loss computed along a trajectory $\tau=(s_0,s_1,\ldots,s_f)$ sampled according to the GFlowNet trajectory distribution $P(\tau)$. Let ${\cal L}=\sum_i L(s_i)$ decompose the total loss into per-state losses $L(s_i)$ along the trajectory. Let $\theta$ parametrize the edge flow estimator $F_\theta(s{\rightarrow}s')$. Then, in the limit of the flows becoming matched,
\begin{equation}
\label{eq:G1}
    G_1 \coloneqq \sum_{i} \frac{\partial L(s_i)}{\partial \theta} + \frac{\partial L(s_i)}{\partial \log F(s_i)} \sum_{t=1}^{i-1} \frac{\partial \log F_\theta(s_{t-1}{\rightarrow}s_t)}{\partial \theta}
\end{equation}
is an unbiased estimator of the total derivative $\frac{d{\cal L}}{d\theta}$, as is
\begin{align}
\label{eq:G2}
    G_2 \coloneqq \sum_{i} \frac{\partial L(s_i)}{\partial \theta} + \frac{\partial L(s_i)}{\partial \log F(s_i)} \sum_{t=1}^{i-1}  \sum_{s\in Par(s_t)} P_B(s|s_t)      \frac{\partial \log F_\theta(s{\rightarrow}s_t)}{\partial \theta}
\end{align}
as is the convex combination
\begin{align}
\label{eq:G3}
    G = \lambda G_1 + (1-\lambda) G_2
\end{align}
for any $0\leq\lambda\leq 1$. Intuitively, $\frac{\partial L(s_i)}{\partial \theta}$ indicates the gradient directly through the occurrences of $F_\theta$ in $L(s_i)$, and $F(s_i)$ indicates either the forward or backward flow sum present in $L(s_i)$ to obtain the flow through $s_i$.
\end{proposition}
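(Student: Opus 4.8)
The plan is to compute the total derivative $\frac{d\mathcal{L}}{d\theta}$ exactly (in the limit of matched flows) and then recognize that $G_1$, $G_2$, and their convex combinations are unbiased Monte-Carlo estimators of it under sampling $\tau \sim P(\tau)$. First I would write out the total derivative using the chain rule: since $\mathcal{L} = \sum_i L(s_i)$ and each $L(s_i)$ depends on $\theta$ both directly (through the occurrences of $F_\theta$ inside $L(s_i)$) and indirectly through the flow value $\log F(s_i)$ that appears in $L(s_i)$, we have
\begin{equation*}
    \frac{d\mathcal{L}}{d\theta} = \sum_i \frac{\partial L(s_i)}{\partial \theta} + \frac{\partial L(s_i)}{\partial \log F(s_i)}\,\frac{d \log F(s_i)}{d\theta}.
\end{equation*}
The whole argument then reduces to expanding $\frac{d \log F(s_i)}{d\theta}$, the total change in the flow at $s_i$ caused by a perturbation of $\theta$, propagated consistently through the flow-matching constraint.

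Next I would express $\frac{d \log F(s_i)}{d\theta}$ as a sum over all edges $s\rightarrow s'$ in the DAG of the direct effect $\frac{\partial \log F_\theta(s\rightarrow s')}{\partial\theta}$ times the sensitivity $\frac{d \log F(s_i)}{d \log F(s\rightarrow s')}$. Here I invoke \cref{eq:dlogF-dlogFedge}, which gives $\frac{d \log F(s')}{d \log F(s\rightarrow s')} = P_B(s\mid s')$, together with \cref{eq:dlogF-dlogF}, $\frac{d \log F(s')}{d \log F(s)} = P(s\mid s')$, to chain sensitivities along paths from an arbitrary edge to $s_i$. The key observation is that summing $\frac{d \log F(s_i)}{d \log F(s\rightarrow s')}$ over all edges, weighted by the direct gradient, collapses (because the sensitivities form a proper conditional distribution and $s_i$ lies on the sampled trajectory) to an expectation over the prefix of $\tau$: conditioned on $s_i \in \tau$, the edge $s_{t-1}\rightarrow s_t$ appears on the trajectory prefix with the appropriate probability, so that
\begin{equation*}
    \mathbb{E}_{\tau \sim P(\tau)}\!\left[\sum_{t=1}^{i-1}\frac{\partial \log F_\theta(s_{t-1}\rightarrow s_t)}{\partial\theta}\,\Big|\, s_i\in\tau\right] = \frac{d \log F(s_i)}{d\theta}.
\end{equation*}
This identifies $G_1$ as unbiased. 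For $G_2$, instead of committing to the specific incoming edge $s_{t-1}\rightarrow s_t$ used on $\tau$, one replaces it by its conditional expectation $\sum_{s\in Par(s_t)}P_B(s\mid s_t)\frac{\partial \log F_\theta(s\rightarrow s_t)}{\partial\theta}$; by the tower property this has the same expectation, so $G_2$ is unbiased too, and since expectation is linear any convex combination $G = \lambda G_1 + (1-\lambda)G_2$ is unbiased as well.

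The main obstacle is the bookkeeping in the second step: carefully justifying that the sum over all DAG edges of the propagated sensitivities, when averaged over $P(\tau)$, reproduces exactly the prefix sum appearing in $G_1$. This requires showing that the probability that edge $s_{t-1}\rightarrow s_t$ lies at position $t$ of a trajectory passing through $s_i$ matches the sensitivity weight $\frac{d\log F(s_i)}{d\log F(s_{t-1}\rightarrow s_t)}$ obtained by composing \cref{eq:dlogF-dlogF} and \cref{eq:dlogF-dlogFedge} along paths — i.e., that $P_B(s_{t-1}\mid s_t)\cdot P(s_t\mid s_i)$ telescopes correctly and that summing over which ancestor edge is "hit" accounts for every contribution exactly once. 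One must also be careful that these identities hold only \emph{in the limit of matched flows}, since \cref{prop:dlogF-dlogF} and its edge analogue were derived under the flow-matching constraint; away from that limit the estimators are biased, consistent with the statement. The per-state decomposition $\frac{\partial L(s_i)}{\partial \log F(s_i)}$ — where $F(s_i)$ stands for whichever (forward or backward) sum of edge flows appears inside $L(s_i)$ — should be handled uniformly, noting that both choices satisfy the flow-matching identity $\frac{dF(s_i)}{dF(s\rightarrow s_i)} = 1$ (resp. $\frac{dF(s_i)}{dF(s_i\rightarrow s')}=1$) used to kick off the propagation.
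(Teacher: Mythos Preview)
Your proposal is correct and follows essentially the same route as the paper: decompose the total derivative into direct and indirect pieces, identify the indirect sensitivities $\frac{d\log F(s_i)}{d\log F(s\rightarrow s')}$ with conditional probabilities via \cref{prop:dlogF-dlogF} and \cref{eq:dlogF-dlogFedge}, and then recognize the prefix sum in $G_1$ as a Monte-Carlo estimator of that sum of sensitivities under $P(\tau)$. The paper carries this out by computing $E[G_1]$ forward (expanding the expectation, factoring $P(s\rightarrow s'\mid s_i)=P_B(s\mid s')P(s'\mid s_i)$, then matching each factor to a sensitivity), whereas you start from $\frac{d\mathcal{L}}{d\theta}$ and work toward $E[G_1]$; these are the same calculation read in opposite directions. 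Your tower-property argument for $G_2$ is a clean shortcut compared to the paper's parallel line-by-line rederivation, but both are equally valid and rely on the same ingredients.
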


\begin{proof}
We will use the partial derivative notation when not considering the indirect influence due to the matching flow constraint and the total derivative notation when considering it, to apply~\cref{prop:dlogF-dlogF}.
Consider the expected value under the GFlowNet's trajectory distribution of the $G$ in~\cref{eq:G1},
use the previous proposition (\cref{eq:dlogF-dlogF}) and the chain rule:
\begin{align*}
    E[G_1] &= E\left[\sum_{i} \frac{\partial L(s_i)}{\partial \theta} + \sum_{t=1}^{i-1} \frac{\partial L(s_i)}{\partial \log F(s_i)}  \frac{\partial \log F_\theta(s_{t-1}{\rightarrow}s_t)}{\partial \theta} \right] \\
    &= \sum_{s_i} P(s_i) \left(
    \frac{\partial L(s_i)}{\partial \theta} + \sum_{s{\rightarrow}s'<s_i} P(s{\rightarrow}s'|s_i)  \frac{\partial L(s_i)}{\partial \log F(s_i)}  \frac{\partial \log F_\theta(s{\rightarrow}s')}{\partial \theta} \right) \\
    &= \sum_{s_i} P(s_i) \Biggl( \frac{\partial L(s_i)}{\partial \theta} \\ 
    & \mspace{50mu} + \sum_{s{\rightarrow}s'<s_i} P_B(s|s')P(s'|s_i)\left(
    \frac{\partial L(s_i)}{\partial \log F(s_i)} 
    \frac{\partial \log F_\theta(s{\rightarrow}s')}{\partial \theta} \right) \Biggr) \\
    &= \sum_{s_i} P(s_i) \Biggl( \frac{\partial L(s_i)}{\partial \theta}\\
    & \mspace{50mu} + \sum_{s{\rightarrow}s'<s_i} 
    \frac{\partial L(s_i)}{\partial \log F(s_i)} 
    \frac{d \log F(s_i)}{d \log F(s')}
    \frac{d \log F(s')}{d \log F_\theta(s{\rightarrow}s')}
    \frac{\partial \log F_\theta(s{\rightarrow}s')}{\partial \theta} \Biggr)  \\
    &= \sum_{s_i} P(s_i) \Bigg( \underbrace{\vphantom{\sum_{s{\rightarrow}s' < s_{i}}}\frac{\partial L(s_i)}{\partial \theta}}_{\text{direct gradients}} + \underbrace{\sum_{s{\rightarrow}s'<s_i}
    \frac{d L(s_i)}{d \log F_\theta(s{\rightarrow}s')}
    \frac{\partial \log F_\theta(s{\rightarrow}s')}{\partial \theta}}_{\text{indirect gradients}} \Bigg) \\
    &= E\left[ \frac{d \cal L}{d \theta}\right] \\
    &= \frac{d E[{\cal L}]}{d \theta}
\end{align*}

The above demonstration shows that $G$ in~\cref{eq:G3} is asymptotically (as the flows become matched) unbiased when $\lambda=1$ because we recover the $G_1$ of~\cref{eq:G1}. The same proof technique can then be used for $G_2$ which uses transitions $s{\rightarrow}s_t$ sampled from $P_B(s|s_t)$ instead of the trajectory transitions $s_{t-1}{\rightarrow}s_t$ and we obtain that the estimator $G$ in~\cref{eq:G3} is asymptotically unbiased when $\lambda=0$:

\begin{align*}
    E[G_2] &= E\left[\sum_{i} \sum_{t=1}^{i-1} \frac{\partial L(s_i)}{\partial \theta} + \frac{\partial L(s_i)}{\partial \log F(s_i)} \sum_{s\in Par(s')} P_B(s|s') \frac{\partial \log F_\theta(s{\rightarrow}s_t)}{\partial \theta}\right] \\
    &= \sum_{s_i} \sum_{s'<s_i} P(s',s_i) \Biggl(
    \frac{\partial L(s_i)}{\partial \theta} + \\
    & \mspace{50mu} \sum_{s\in Par(s')} P_B(s|s') \frac{\partial L(s_i)}{\partial \log F(s_i)}  \frac{\partial \log F_\theta(s{\rightarrow}s')}{\partial \theta} \Biggr) \\
    &= \sum_{s_i} P(s_i) \Biggl( \frac{\partial L(s_i)}{\partial \theta} + \\
    & \mspace{50mu} \sum_{s'<s_i} P(s'|s_i) \sum_{s\in Par(s')} P_B(s|s') \frac{\partial L(s_i)}{\partial \log F(s_i)} \frac{\partial \log F_\theta(s{\rightarrow}s')}{\partial \theta} \Biggr) \\
    &= \sum_{s_i} P(s_i) \Biggl( \frac{\partial L(s_i)}{\partial \theta} + \\
    &  \mspace{50mu} \sum_{s{\rightarrow}s'<s_i} \left(
    \frac{\partial L(s_i)}{\partial \log F(s_i)} 
    \frac{d \log F(s_i)}{d \log F(s')}
    \frac{d \log F(s')}{d \log F_\theta(s{\rightarrow}s')}
    \frac{\partial \log F_\theta(s{\rightarrow}s')}{\partial \theta} \right) \Biggr) \\
    &= \sum_{s_i} P(s_i) \Biggl( \underbrace{\vphantom{\sum_{s{\rightarrow}s' < s_{i}}}\frac{\partial L(s_i)}{\partial \theta}}_{\text{direct gradients}} + \underbrace{\sum_{s{\rightarrow}s'<s_i} \left(
    \frac{d L(s_i)}{d \log F_\theta(s{\rightarrow}s')}
    \frac{\partial \log F_\theta(s{\rightarrow}s')}{\partial \theta} \right)}_{\text{indirect gradients}} \Biggr) \\
    &= \frac{d E[{\cal L}]}{d \theta}
\end{align*}
where the last identity follows the fourth line in the proof for $G_1$.
Finally a convex combination of two unbiased estimators is unbiased, so we obtain that $G$ in~\cref{eq:G3} is asymptotically unbiased for any $0 \leq \lambda \leq 1$.

\end{proof}

This surprising result says that something very close to policy gradient actually provides an asymptotically (i.e., when flows are matched) unbiased gradient on the parameters of the edge flow, in expectation\footnote{the connection becomes clearer when you imagine minus the loss $L$ to be the reward itself, and we see that we immediately get a training signal at earlier times in the sequence with $G$, similarly to policy gradient. There are also differences because the above proposition relies on staying close to the learning fixed point where the flows are matched.}. Note that it only works exactly in an online setting, i.e., when the trajectory is sampled according to the learner's current policy. Otherwise, the gradient estimator may be biased (it would be biased anyways in practice because the flows are never perfectly matched). However, if instead of sampling trajectories $\tau$ from the GFlowNet transition probabilities $P_F(s_{t+1}|s_t)$ we sample them from a training distribution $\tilde{P}$ with transition probabilities $\tilde{P}(s_{t+1}|s_t)$ we can calculate the importance weights (by the ratio $P(\tau)/\tilde{P}(\tau)$) and correct the estimator accordingly. Since the training distribution $\tilde{P}$ should be broader and have a full support, the importance ratio cannot explode but there could still be the usual numerical problems with the variance of such importance-weighted estimators.

We now consider the setting in which the sampling policy is only slightly different from $\hat{P}$ , which is typically the case because we want the sampling policy to be broader and more exploratory, and because we may be using delayed data, e.g., with a replay buffer. This slight difference may induce a bias but it might still be advantageous to use the above gradient estimator. Note how it does not come in conflict with the gradient of the flow matching loss (which is the first term in $G$). The expected advantage of using $G$ is that it may initially speed up training by directly providing updates to earlier transitions of a complete trajectory. However, analogous to the trade-off between temporal-difference methods and policy-gradient methods, this may come at the price of higher variance.

This estimator is unbiased when the flows are matched and when the trajectory is sampled according to the GFlowNet's distribution, but it also makes a lot of intuitive sense: if the estimated flow at $s_i$ is too small (in the eye of $L_i$) one can clearly push that flow up by increasing the probability of a transition on a path leading to $s_i$. Even if we consider a slightly different trajectory sampling distribution, so long as it leads to $s_i$ we would expect that increasing its probability would increase the probability of ending up in $s_i$ (see~\cref{prop:dlogF-dlogF}).

If the state has a continuous component, we could also increase the probability of ending up in $s_i$ by choosing more often a more probable path to $s_i$. This could be calculated by backpropagating through the state transitions (with some form of backpropagation through time). However, if the transitions are not fully known or are not differentiable, this approach may be more challenging, and is related to similar questions raised with credit assignment in reinforcement learning with continuous states.

Finally, keep in mind that the more direct credit assignment terms in $G$ have to be combined with the local terms $\frac{\partial L_i}{\partial \theta}$ which make sure that the flows becomes better matched, since flow-matching is a necessary condition for $G$ to be unbiased.

\section{Conditional GFlowNets}

\begin{definition}
\label{def:param-cond-gfn}
Consider a set of conditioning information $\mathcal{X}$, a family of DAGs $\mathcal{G}= \{G_{x} = (\gS_x, \sA_x), \ x \in \mathcal{X} \}$, a family of target reward functions $\mathcal{R} = \{R_x: \gS_x^f \rightarrow \R^+, \ x \in \mathcal{X}\}$, and a flow parametrization $(\gO_x, \Pi_x, \gH_x)$ of $(G_x, R_x)$ for every $x \in \mathcal{X}$. 
Define the \textbf{conditional configuration space} as the product
\begin{equation}
    \gO \defeq \prod_{x \in \mathcal{X}} \gO_x.
\end{equation}
An element $o \in \gO$ is a family $(o_x)_{x \in \mathcal{X}}$; we write $o_x$ and $o(x)$ interchangeably for the component at condition $x$.

The tuple $(\mathcal{X}, \mathcal{G}, \mathcal{R}, \{\gO_x\}_{x \in \mathcal{X}}, \{\Pi_x\}_{x \in \mathcal{X}}, \{\gH_x\}_{x \in \mathcal{X}})$ is called a \textbf{conditional GFlowNet}.

\end{definition}

As with the unconditional case, conditional GFlowNets provide a way to sample from different target reward functions $R_x$ {\em simultaneously}. Given a configuration $o \in \gO$ of the conditional GFlowNet and a condition $x \in \mathcal{X}$, the per-condition component $o_x \in \gO_x$ induces a distribution $\pi_x \defeq \Pi_x(o_x)$ over $\gT_x$, the set of complete trajectories in $G_x$, which implicitly defines a terminating state probability measure in $G_x$:
\begin{equation}
    \forall x \in \mathcal{X} \quad \forall s \in \gS_x^f \quad P_T(s \mid x) \defeq \sum_{\tau \in \gT_x: s\rightarrow s_f \mid x \in \tau} \pi_x(\tau),
\end{equation}
where the dependence on $o$ in $P_T$ is omitted for clarity. 

Conditional GFlowNets cast the problem of sampling from target reward functions to a search problem: searching for objects $o \in \gO$ such that $o_x \in \gH_x(\gF_{Markov}(G_x, R_x)) \subseteq \gO_x$. For such objects $o \in \gO$, the terminating state probability measures correspond to the distribution of interest, i.e.:
\begin{equation*}
    \forall x \in \mathcal{X} \quad \forall s \in \gS_x^f \quad P_T(s \mid x) \propto R_x(s).
\end{equation*}

Similar to the unconditional case, we need to design a \textbf{conditional loss function} $\gL$ on $\gO$ that equals zero on such objects $o$ and only on those objects.

\begin{definition}
\label{def:cond-flow-match-loss}
Let $(\mathcal{X}, \mathcal{G}, \mathcal{R}, \gO, \Pi, \gH)$ be a conditional GFlowNet. A \textbf{conditional flow-matching loss} is any function $\gL: \gO \rightarrow \R^+$ such that:
\begin{equation}
    \forall o \in \gO \quad \gL(o) = 0 \Leftrightarrow \forall x \in \mathcal{X} \ \exists F_x \in \gF_{Markov}(G_x, R_x) \ \ o_x = \gH_x(F_x).
\end{equation}
We say that $\gL$ is \textbf{condition-decomposable}, if there are functions $\gL_x:\gO_x \rightarrow \R^+$ such that:
\begin{equation*}
    \forall o \in \gO \quad \gL(o) = \sum_{x \in \mathcal{X}} \gL_x(o_x)
\end{equation*}
\end{definition}

\begin{remark}[Amortized parameterization]
\label{rem:amortized}
In practice, the configuration $o \in \gO$ is often parameterized by shared parameters $\theta$ (e.g., a neural network), so that $o_x = o_x(\theta)$ for all $x \in \mathcal{X}$. In this \emph{amortized} setting, the loss becomes $\gL(\theta) = \sum_{x \in \mathcal{X}} \gL_x(o_x(\theta))$. 
The theoretical guarantee ($\gL = 0$ if and only if all conditions are flow-matched) still holds, but optimization is coupled across conditions. With finite model capacity, the global optimum $\gL = 0$ may be unattainable, and the practical optimum represents a trade-off across conditions.
\end{remark}

We can obtain conditional flow-matching losses that are condition-decomposable starting from any family of flow-matching losses $\gL_x$ for $(G_x, R_x, \gO_x, \Pi_x, \gH_x)$ (\cref{def:flow-matching-loss}). In particular, we can choose these losses to be state-decomposable, edge-decomposable or trajectory-decomposable.

For instance, if each $\gL_x = \sum_{s \in \gS_x} L_x(. \ , s)$ is state-decomposable, then the simultaneous sampling problem is cast to the following minimization problem:
\begin{equation}
    \min_{o \in \gO} \mathbb{E}_{(x, s) \sim \pi_T} [L_x(o_x, s)],
\end{equation}
where $\pi_T$ is any \textbf{conditional full support distribution} on $\mathcal{X} \times \bigcup_{x \in \mathcal{X}} \gS_x$, i.e. a probability distribution that satisfies:
\begin{equation*}
    \forall x \in \mathcal{X} \ \ \forall s \in \bigcup_{x \in \mathcal{X}} \gS_x \quad \pi_T(x, s) > 0 \Leftrightarrow s \in \gS_x.
\end{equation*}
Such a conditional full support distribution can be obtained starting from any distribution $\pi_\mathcal{X}$ with full support on $\mathcal{X}$ and distributions $\pi_x$ with full support on $\gS_x$ for any $x \in \mathcal{X}$, as:
\begin{equation*}
    \pi_T(x, s) = \begin{cases}
    \pi_\mathcal{X}(x) \pi_x(s) \quad \text{if } s \in \gS_x
    \\ 0 \quad \text{otherwise}.
    \end{cases}
\end{equation*}

\begin{example}
Consider the set:
\begin{equation*}
    \gO = \{\hat{F}: \mathcal{X} \times \bigcup_{x \in \mathcal{X}} \sA_x^{-f} \rightarrow \R^+, \ \ \hat{F}(s \rightarrow s' \mid x ) = 0 \ \text{ if } s \rightarrow s' \notin \sA_x \}.
\end{equation*}
For each $x \in \mathcal{X}$ the function $\hat{F}_x \defeq \hat{F}(. \mid x)$ is an element of $\gO_{edge, x}$(\cref{ex:EDGEparam}), i.e. it is a function from $\sA_x^{-f}$ to $\R^+$. Meaning that the set $\gO$ can be seen as a function mapping each $x \in \mathcal{X}$ to $F_x \in \gO_{edge, x}$\footnote{The definition of $\gO$ as a set of functions is equivalent to the product-space definition $\gO = \prod_{x \in \mathcal{X}} \gO_{edge,x}$ via the natural isomorphism $\prod_{x \in \mathcal{X}} \left( \sA_x^{-f} \to \R^+ \right) \;\cong\; \left\{ \hat{F}: \mathcal{X} \times \textstyle\bigcup_{x} \sA_x^{-f} \to \R^+ \;\middle|\; \hat{F}(s \to s' \mid x) = 0 \text{ if } s \to s' \notin \sA_x \right\},$
given by $(\hat{F}_x)_{x \in \mathcal{X}} \mapsto \hat{F}$ where $\hat{F}(\cdot \mid x) = \hat{F}_x(\cdot)$. The function-space representation is convenient when $\hat{F}$ is parameterized by a neural network that takes both $x$ and the edge as input.}. Denoting by $\gH: x\in \mathcal{X} \rightarrow \gH_{edge, x}$ and $\Pi: x\in \mathcal{X} \rightarrow \Pi_{edge, x}$, we obtain a valid conditional flow parametrization $(\gO, \gH, \Pi)$.

Instead of learning each function $\hat{F}_x$ separately, this parametrization enables learning functions $\hat{F} \in \gO$ of both the condition $x$ and the non-terminating edge $s \rightarrow s'$, thus exploiting the generalization capabilities of machine learning algorithms not only on edges, but also on conditions.

Consider the functions $L_x$ defined for each $x \in \mathcal{X}$ as:
\begin{equation*}
    L_x(\hat{F}_x, s') = \begin{cases} \left( \log \left( \frac{\delta + \sum_{s \in Par(s')} \hat{F}(s \rightarrow s' \mid x)}{\delta + R(s' \mid x) + \sum_{s'' \in Child(s') \setminus \{s_f \mid x\}}\hat{F}(s' \rightarrow s'' \mid x)} \right) \right)^2 \quad \text{if } s' \neq s_f, \\
 0 \quad \text{otherwise}
 \end{cases}
\end{equation*}
 where $\delta \geq 0$ is a hyperparameter. The function $\gL$ mapping each $\hat{F} \in \gO$ to 
 \begin{equation*}
     \gL(\hat{F}) = \sum_{x \in \mathcal{X}} \sum_{s \in \gS_x} L_x(\hat{F}_x, s'),
 \end{equation*}
 is a conditional flow-matching loss, that is both condition-decomposable and state-decomposable.
\end{example}

\section{Policies in Deterministic and Stochastic Environments}
\label{sec:stochastic-environment}

Until now, we have focused on a deterministic environment where state changes can perfectly be calculated from a given action. This makes sense when the actions are cognitive actions, internal to an agent, e.g., sequentially constructing a candidate solution to a problem (an explanation, a plan, an inferred guess, etc). What about the scenario where the actions are external and affect the real world? The outcome are likely to be only imperfectly predictable. To address this scenario, we will now extend the GFlowNet framework to learn a policy $\pi$ for an agent in an environment that could be deterministic or stochastic.
\begin{definition}
A {\bf policy} $\pi:{\cal A}\times{\cal S}\mapsto\mathbb{R}$ is a probability distribution $\pi(a|s)$ over actions $a \in {\cal A}$ for each state $s$. To denote the fact that the action space may be restricted based on $s$, we write ${\cal A}(s)$ for the valid actions in state $s$.
\end{definition}
To denote the introduction of actions in the GFlowNet framework, we will decompose transitions in two steps: first an action $a_t$ is sampled according to a policy $\pi$ from state $s_t$, and then the environment transforms this (in a possibly stochastic way) into a new state $s_{t+1}$.
\begin{definition}
We generalize the notion of state as follows: {\bf even states} are of the form $s \in {\cal S}$ while {\bf odd states} are of the form $(s,a) \in {\cal S}\times{\cal A}$. The policy $\pi$ governs the transition from an even state to a compatible next odd state with $a \in {\cal A}(s)$, while the {\bf environment} $P(s_t{\rightarrow}s_{t+1}|s_t,a_t)$ governs the transition from an odd state to the next even state.
\end{definition}
As a result of the above definition, the even-to-even transition is summarized by
\begin{equation}
\label{eq:transition-decomposition}
    P_F(s_{t+1}|s_t) = \sum_{a_t} P(s_t{\rightarrow}s_{t+1}|s_t,a_t) \pi(a_t|s_t).
\end{equation}

Note that the detailed balance condition, which involves a backward transition $P_B$, will also be decomposed in two parts: (1) for inverting the even-to-odd transition,
\begin{equation}
    P_B(s_t|(s_t,a_t))=1
\end{equation}
by definition, and (2) for inverting the odd-to-even transition, we have to actually represent (and learn)
\begin{equation}
    P_B((s_t,a_t)|s_{t+1}).
\end{equation}
This conditional distribution incorporates the preference we may have over different paths leading to the same state while consistent with the environment $P(s_t{\rightarrow}s_{t+1}|s_t,a_t)$. The normalization constraint on $\hat{P}_B$ can guarantee flow-matching via detailed balance, as argued around~\cref{eq:DB-solution}.

\subsection{Known Deterministic Environments}

A deterministic environment is perfectly controllable: we can choose the action that leads to the most desired next state, among the valid actions from the previous state.
In the case where the environment is deterministic, we can directly apply the results of~\cref{sec:GFlowNets-Learning-a-Flow}, as follows. At each time step $t$ and from state $s_t$, the agent picks an allowed action $a_t \in {\cal A}(s_t)$ according to a policy $\pi(a_t|s_t)$. The set of allowed actions should coincide with those actions for which $\pi(a_t|s_t)>0$. Since the environment is deterministic and known, there is a deterministic function $T:{\cal S}\times{\cal A}\mapsto{\cal S}$ which gives us the next state $s_{t+1}=T(s_t,a_t)$. In that case, we can ignore the even/odd state distinction and identify the learnable policy $\pi$ with the learnable transition probability function $P_F(s_{t+1}|s_t)$ of GFlowNets, as follows.
\begin{proposition}
\label{prop:pi-to-P}
In a deterministic environment and a GFlowNet agent with policy $\pi(a_t|s_t)$ and state transitions given by $s_{t+1}=T(s_t,a_t)$, the transition probability $P_F(s_{t+1}|s_t)$ is given by
\begin{equation}
    P_F(s_{t+1}|s_t) = \sum_{a:T(s_t,a)=s_{t+1}} \pi(a|s_t).
\end{equation}
Hence if only one action $a_t$ can transition from $s_t$ to $s_{t+1}=T(s_t,a_t)$, then
\begin{equation}
    P_F(s_{t+1}|s_t) = \pi(a_t|s_t).
\end{equation}
\end{proposition}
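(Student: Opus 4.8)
The plan is to obtain the formula by specializing the general even-to-even transition identity \cref{eq:transition-decomposition} to a deterministic environment. First I would recall the two-step structure introduced just above the statement: a single step of the agent factors through an intermediate odd state, with the policy $\pi$ governing the even-to-odd transition $s_t \mapsto (s_t, a_t)$ and the environment kernel $P(s_t{\rightarrow}s_{t+1}\mid s_t, a_t)$ governing the odd-to-even transition. Marginalizing over the intermediate action gives exactly \cref{eq:transition-decomposition}, i.e.\ $P_F(s_{t+1}\mid s_t) = \sum_{a_t} P(s_t{\rightarrow}s_{t+1}\mid s_t,a_t)\,\pi(a_t\mid s_t)$.

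The second step is to invoke the hypothesis that the environment is deterministic and known. Saying that the next state is given by $s_{t+1} = T(s_t, a_t)$ means precisely that the environment kernel is the point mass $P(s_t{\rightarrow}s_{t+1}\mid s_t,a_t) = \mathbf{1}_{T(s_t,a_t)=s_{t+1}}$. Moreover, because $T$ is known, each odd state $(s_t,a_t)$ can be identified with the unique even state it leads to, so the GFlowNet's underlying DAG can be taken over ${\cal S}$ alone and $P_F$ is just the even-to-even composition described above. Substituting the indicator into \cref{eq:transition-decomposition} collapses the sum onto the actions that actually realize the transition:
\[
    P_F(s_{t+1}\mid s_t) = \sum_{a_t} \mathbf{1}_{T(s_t,a_t)=s_{t+1}}\,\pi(a_t\mid s_t) = \sum_{a\,:\,T(s_t,a)=s_{t+1}} \pi(a\mid s_t),
\]
which is the first claim (the empty-sum convention handling the case where $s_t{\rightarrow}s_{t+1}$ is not a legal transition). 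The second claim is then immediate: if exactly one action $a_t$ satisfies $T(s_t,a_t)=s_{t+1}$, the sum reduces to the single term $\pi(a_t\mid s_t)$.

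I do not expect a real obstacle here; the one point deserving a sentence of care is the reduction that lets us drop the even/odd bookkeeping and treat $P_F$ directly as a transition probability over ${\cal S}$, which relies on the environment being both deterministic and known. An alternative, fully self-contained route avoids \cref{eq:transition-decomposition} entirely: from $s_t$ the agent picks action $a$ with probability $\pi(a\mid s_t)$ and then moves deterministically to $T(s_t,a)$, so the event that the next state equals $s_{t+1}$ is the disjoint union, over $\{a : T(s_t,a)=s_{t+1}\}$, of the events that action $a$ was chosen; summing their probabilities yields the stated formula, and the corollary follows as before.
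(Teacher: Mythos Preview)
Your proposal is correct and follows essentially the same approach as the paper: marginalize over actions via \cref{eq:transition-decomposition}, replace the environment kernel by the indicator $1_{s_{t+1}=T(s_t,a)}$, and read off the sum over actions realizing the transition; the single-action case then drops out because the sum has one term. The extra commentary you offer about collapsing the even/odd bookkeeping and the alternative disjoint-union argument is sound but goes slightly beyond what the paper records.
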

\begin{proof}
The result is obtained by marginalizing over $a$:
\begin{align*}
 P_F(s_{t+1}|s_t) &= \sum_a P(s_t{\rightarrow}s_{t+1},a_t|s_t)\\
  &= \sum_a P(s_t{\rightarrow}s_{t+1}|s_t,a_t) \pi(a|s_t) \\
  &= \sum_{a:s_{t+1}=T(s_t,a)} \pi(a|s_t)
\end{align*}
with $P(s_t{\rightarrow}s_{t+1}|s_t,a_t)=1_{s_{t+1}=T(s_t,a)}$.

The case with a single possible action to obtain the transition is obtained because the sum contains only one term.
\end{proof}

\begin{proposition}
In a deterministic environment with $s_{t+1}=T(s_t,a_t)$, a backwards transition probability function can be derived from a backwards policy $\pi_B$,
\begin{equation}
    P_B(s_t|s_{t+1}) = \sum_{a:T(s_t,a)=s_{t+1}} \pi_B(a|s_{t+1})
\end{equation}
and in the case where a single action $a_t$ explains each transition $s_{t+1}=T(s_t,a_t)$,
\begin{equation}
    P_B(s_t|s_{t+1}) = \pi_B(a_t|s_{t+1})
\end{equation}
\end{proposition}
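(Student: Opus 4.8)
The plan is to mirror the proof of \cref{prop:pi-to-P}, but in the backward direction, exploiting the even/odd state decomposition introduced just above this proposition. Recall that a single backward step from the even state $s_{t+1}$ to the even state $s_t$ factors through an intermediate odd state $(s_t,a_t)$: the even-to-odd inversion contributes $P_B(s_t \mid (s_t,a_t)) = 1$ by definition, and the odd-to-even (environment) inversion contributes the learned component $P_B((s_t,a_t) \mid s_{t+1})$. I would \emph{define} the backwards policy by $\pi_B(a \mid s_{t+1}) \defeq P_B((s_t,a) \mid s_{t+1})$; this is a legitimate distribution over actions because, the environment being deterministic with transition function $T$, the odd-state label $(s_t,a)$ transitions to $T(s_t,a)$, so it is a parent of $s_{t+1}$ only when $T(s_t,a)=s_{t+1}$, in which case the companion state $s_t$ is pinned down by $a$ and $s_{t+1}$ (the same implicit ``actions label transitions unambiguously'' convention already used in \cref{prop:pi-to-P}).

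First I would marginalize the backward transition over the intermediate odd state and apply $P_B(s_t \mid (s_t,a_t)) = 1$:
\[
  P_B(s_t \mid s_{t+1}) = \sum_{a_t} P_B\!\big(s_t \mid (s_t,a_t)\big)\, P_B\!\big((s_t,a_t) \mid s_{t+1}\big) = \sum_{a_t} P_B\!\big((s_t,a_t) \mid s_{t+1}\big).
\]
Next I would restrict the sum to actions whose odd state is actually a parent of $s_{t+1}$: since the deterministic environment sends $(s_t,a_t)$ to $T(s_t,a_t)$, we have $(s_t,a_t)\in Par(s_{t+1})$ exactly when $T(s_t,a_t)=s_{t+1}$, and the remaining terms vanish. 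Substituting the definition of $\pi_B$ yields $P_B(s_t \mid s_{t+1}) = \sum_{a:\,T(s_t,a)=s_{t+1}} \pi_B(a \mid s_{t+1})$, which is the claimed identity; the single-action case is then immediate, being the special case in which $\{a: T(s_t,a)=s_{t+1}\}$ is a singleton so the sum collapses to one term.

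The only thing requiring care, rather than any genuine obstacle, is the bookkeeping around odd states: one must check that $\pi_B(a \mid s_{t+1})$ — nominally a function of $a$ and $s_{t+1}$ — is consistently identified with $P_B((s_t,a) \mid s_{t+1})$, which a priori also carries an $s_t$ dependence. This is fine because for fixed $a$ and $s_{t+1}$ the relevant predecessor $s_t$ with $T(s_t,a)=s_{t+1}$ is determined by the (acyclic, deterministically labelled) transition structure, which is the same unambiguity assumption already underlying \cref{prop:pi-to-P}. Beyond that, the argument is a routine marginalization exactly parallel to the forward case, so I would keep the write-up as short as that of \cref{prop:pi-to-P}.
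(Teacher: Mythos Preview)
Your proposal is correct and follows the same approach the paper intends: its entire proof reads ``The proof goes along exactly the same lines as for~\cref{prop:pi-to-P},'' i.e., a direct marginalization over actions mirroring the forward case. You have simply spelled out that marginalization more carefully via the even/odd decomposition and flagged the well-definedness of $\pi_B$, which the paper leaves implicit; there is no substantive difference in method.
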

\begin{proof}
The proof goes along exactly the same lines as for~\cref{prop:pi-to-P}.
\end{proof}

\subsection{Unknown Deterministic Environments}

If the environment is deterministic but unknown, we have to learn the transition function $T$ and we should also learn its inverse $T^{-1}$ which recovers the previous state given the next state and the action:
\begin{equation}
    T^{-1}:{\cal S}\times{\cal A}\mapsto{\cal S} \quad {\rm s.t.}\quad
    T^{-1}(T(s,a),a)=s.
\end{equation}
Unfortunately, if the state and action spaces are discrete and in high dimension, learning $T$ and $T^{-1}$ in a way that generalizes to unseen transitions\footnote{Seen transitions can just be recorded in a table, but in a combinatorial state-space, they will form an exponentially tiny fraction of the ones to be encountered in the future.} may be difficult and might be more easily achievable via a continuous relaxation. The methods for stochastic environments could be used for this purpose.

\subsection{Stochastic Environments}

The setting of stochastic environments is less straightforward but more general. We will decompose the transition as per~\cref{eq:transition-decomposition} but not assume that $P(s_t{\rightarrow}s_{t+1}|s_t,a_t)$ is a dirac. The first thing to note is that we can still obtain a Markovian flow, but that we are not guaranteed to find a policy which matches the desired terminal reward function.
\begin{proposition}
In a stochastic environment with environment transitions $P(s_t{\rightarrow}s_{t+1}|s_t,a_t)$, any policy $\pi(a|s)$ can yield a Markovian flow and it may not be possible to perfectly achieve desired flows $\hat{F}(s{\rightarrow}s_f)=R(s)$.
\end{proposition}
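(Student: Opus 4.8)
The statement has two essentially independent parts. For the first part --- that \emph{any} policy induces a Markovian flow --- the plan is to reduce to the unconditional theory of \cref{sec:measures-over-markovian-flows}. Work on the DAG over even states, i.e.\ declare $s{\rightarrow}s'$ an edge iff $P(s{\rightarrow}s'\mid s,a)>0$ for some $a\in{\cal A}(s)$, and assume (as elsewhere in the paper, possibly after the time-stamp augmentation of \cref{sec:introduction}'s cycle discussion) that this is a pointed DAG with source $s_0$ and sink $s_f$. Given a policy $\pi$, define $\hat P_F$ on edges by \cref{eq:transition-decomposition}, i.e.\ $\hat P_F(s'\mid s)\defeq\sum_{a}P(s{\rightarrow}s'\mid s,a)\,\pi(a\mid s)$. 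I would then verify the two requirements of \cref{def:P_F-P_B-consistent}: non-negativity is immediate, and $\sum_{s'\in Child(s)}\hat P_F(s'\mid s)=\sum_a \pi(a\mid s)\sum_{s'}P(s{\rightarrow}s'\mid s,a)=\sum_a\pi(a\mid s)=1$, so $\hat P_F$ is a forward probability function consistent with $G$. Fixing any $\hat Z>0$ and invoking \cref{prop:flow-parametrizations} (first setting) yields a unique Markovian flow $F$ with $P_F=\hat P_F$, whose trajectory measure, by \cref{prop:markovian-equivalences}, is exactly the product of the $\hat P_F$'s --- the distribution the policy generates. Hence every $\pi$ yields a Markovian flow.

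For the second part --- impossibility of matching arbitrary terminating flows --- the plan is to exhibit a counterexample. Take $\gS=\{s_0,s_1,s_2,s_f\}$ where $s_0$ admits a single action $a$ whose environment transition is $P(s_0{\rightarrow}s_1\mid s_0,a)=P(s_0{\rightarrow}s_2\mid s_0,a)=\tfrac{1}{2}$, and where $s_1,s_2$ each admit only the exit transition to $s_f$. For any policy $\pi$ we are forced to have $\hat P_F(s_1\mid s_0)=\hat P_F(s_2\mid s_0)=\tfrac{1}{2}$, since there is no choice at $s_0$ and the environment fixes the split. Then for the induced flow $F$, using \cref{prop:flow-initial-state} ($F(s_0)=Z$) and the flow-matching identities, $F(s_i)=F(s_0{\rightarrow}s_i)=Z\,\hat P_F(s_i\mid s_0)=Z/2$ and hence $F(s_i{\rightarrow}s_f)=F(s_i)=Z/2$ for $i\in\{1,2\}$. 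Thus every achievable flow has $F(s_1{\rightarrow}s_f)=F(s_2{\rightarrow}s_f)$, so whenever the target reward satisfies $R(s_1)\neq R(s_2)$ --- e.g.\ $R(s_1)=1$, $R(s_2)=2$ --- no policy $\pi$, and no choice of the (otherwise free) total flow $Z$, can achieve $\hat F(s{\rightarrow}s_f)=R(s)$ for both $s_1$ and $s_2$. This proves the claim.

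The conceptual point behind the obstruction, which I would present as the takeaway rather than as a difficulty, is that the stochastic environment constrains the realizable forward kernels: $\hat P_F(\cdot\mid s)$ must lie in the convex hull of $\{P(\cdot\mid s,a):a\in{\cal A}(s)\}$, whereas in the deterministic case these vertices are Dirac masses whose convex hull is all of $\Delta(Child(s))$. The only genuinely delicate aspect of the write-up is making the counterexample airtight --- in particular ensuring it rules out all policies \emph{and} all normalizing constants simultaneously --- which the minimal two-branch DAG above handles cleanly; there is no hard analytic step.
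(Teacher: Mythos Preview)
Your proposal is correct, and in both halves you take a genuinely different route from the paper.

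For the first part, the paper works directly on the even/odd state decomposition: it defines edge flows $\hat F(s_t{\rightarrow}(s_t,a_t))=\hat F(s_t)\pi(a_t\mid s_t)$ and $\hat F((s_t,a_t){\rightarrow}s_{t+1})=\hat F(s_t)\pi(a_t\mid s_t)P(s_t{\rightarrow}s_{t+1}\mid s_t,a_t)$, checks flow-matching at both even and odd nodes by hand, and then appeals to \cref{prop:markovian-equivalences}. You instead collapse to the even-state DAG, define the marginalized kernel $\hat P_F$ via \cref{eq:transition-decomposition}, verify consistency by a one-line double sum, and invoke \cref{prop:flow-parametrizations}. Your version is more economical and reuses the unconditional theory cleanly; the paper's version has the minor advantage of keeping the odd states visible, which matters elsewhere in the section when discussing $P_B((s_t,a_t)\mid s_{t+1})$.

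For the second part, the paper's counterexample is the degenerate one: take $R(s)>0$ for a state $s$ into which every environment transition has probability zero, so no policy can place any flow there. Your counterexample is structurally sharper: all states are reachable, but a single forced action with a $50/50$ environment split pins the \emph{ratio} $F(s_1{\rightarrow}s_f)/F(s_2{\rightarrow}s_f)=1$ regardless of $\pi$ and $Z$, ruling out any target with $R(s_1)\neq R(s_2)$. Your convex-hull remark --- that realizable $\hat P_F(\cdot\mid s)$ are constrained to $\mathrm{conv}\{P(\cdot\mid s,a):a\in{\cal A}(s)\}$, which is the full simplex only in the deterministic case --- is the right conceptual explanation and is absent from the paper's proof. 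The paper's example is quicker to state; yours actually isolates the mechanism.
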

\begin{proof}
We obtain a flow by satisfying the flow-matching or detailed balance equations for both even and odd steps, which can always be done for the following reason. From the even states, we can define an edge flow 
$$
\hat{F}(s_t{\rightarrow}(s_t,a_t))=\hat{F}(s_t) \pi(a_t|s_t)
$$ 
and the backwards transition is $\pi_B(s_t|(s_t,a_t))=1$. This leads to the intermediate state flow
$$
 \hat{F}((s_t,a_t)) = \hat{F}(s_t{\rightarrow}(s_t,a_t))
$$
since there is only one edge into $(s_t,a_t)$, the one starting at $s_t$ and taking action $a_t$. From the odd states, we have the edge flow
\begin{align*}
    \hat{F}((s_t,a_t){\rightarrow}s_{t+1}) =& \hat{F}((s_t,a_t)) P(s_t{\rightarrow}s_{t+1}|s_t,a_t) \\
                                           =& \hat{F}(s_t) \pi(a_t|s_t) P(s_t{\rightarrow}s_{t+1}|s_t,a_t) 
\end{align*}
with $P(s_t{\rightarrow}s_{t+1}|s_t,a_t)$ representing the environment,
and we obtain the even state flow with the usual formula (\cref{eq:flow-match})
$$
\hat{F}(s_{t+1}) = \sum_{(s_t,a_t)} \hat{F}((s_t,a_t){\rightarrow}s_{t+1}) = \sum_{(s_t,a_t)} \hat{F}(s_t) \pi(a_t|s_t) P(s_t{\rightarrow}s_{t+1}|s_t,a_t).
$$
If unknown, the environment transitions $P(s_t{\rightarrow}s_{t+1}|s_t,a_t)$ can be estimated in the usual supervised way by observing the triplets $(s_t,a_t,s_{t+1})$ and estimating transition probabilities that approximately maximize the empirical log-likelihood of these observations. However, whereas in a stationary environment the transitions $P(s_t{\rightarrow}s_{t+1}|s_t,a_t)$ do not depend on the policy, the backwards transitions $\hat{P}_B((s_t,a_t)|s_{t+1})$ depend on the forward environment transition and on the state flows, i.e., on the policy. 
With enough training time and capacity, the forward and backward transitions can be made compatible, but as usual with GFlowNets, in a realistic settings the flow matching equations will not be perfectly achieved.

If there is enough capacity and training time (training to completion), we thus obtain a flow. Then, defining the transition probabilities by the sequential sampling of transitions from the even and odd steps above, we obtain a Markovian flow (\cref{prop:markovian-equivalences}).

To show that the desired terminal flows are not necessarily achievable, it is sufficient to identify a counter-example. Consider a terminal reward $R(s)>0$ while the environment transitions into $s$ have zero probability. In that case, no matter how we choose our policy, we cannot put the desired flow into state $s$.
\end{proof}

Keep also in mind that in practice, even in a completely controllable environment, we will not be guaranteed to find a flow that matches the target terminal reward function simply because of finite capacity and finite training time for the GFlowNet.

Whereas with a deterministic environment for the GFlowNet, one can freely choose $P_B$ for non-terminal edges, it is not so for stochastic environments, as argued below. On even-to-odd transitions, $P_B(s|s_t,a_t)=1_{s=s_t}$ by construction. On odd-to-even transitions $(s_t,a_t){\rightarrow}s_{t+1}$ the problem is that the forward transition is not a free parameter (it corresponds to the environment's $P(s_t{\rightarrow}s_{t+1}|s_t,a_t)$).


\begin{counterexample}
With a GFlowNet with a fixed stochastic environment, it may not be possible for $P_B(s_t,a_t|s_{t+1})$ to be chosen freely while also matching the flows and the terminal rewards.

For a counterexample, consider the setting in~\cref{fig:diamondcounterexample}. 
Suppose $R(s''') \neq 0$ and the environment-provided transition $T(s''' | s'', a'') = 0$. Then, in order to match the terminal reward $R(s''')$,  we must require that $P_B(s', a' | s''') \neq 0$, which means that $P_B$ cannot  be chosen freely.

\end{counterexample}

\begin{figure}
    \centering
    \includegraphics[scale=0.15]{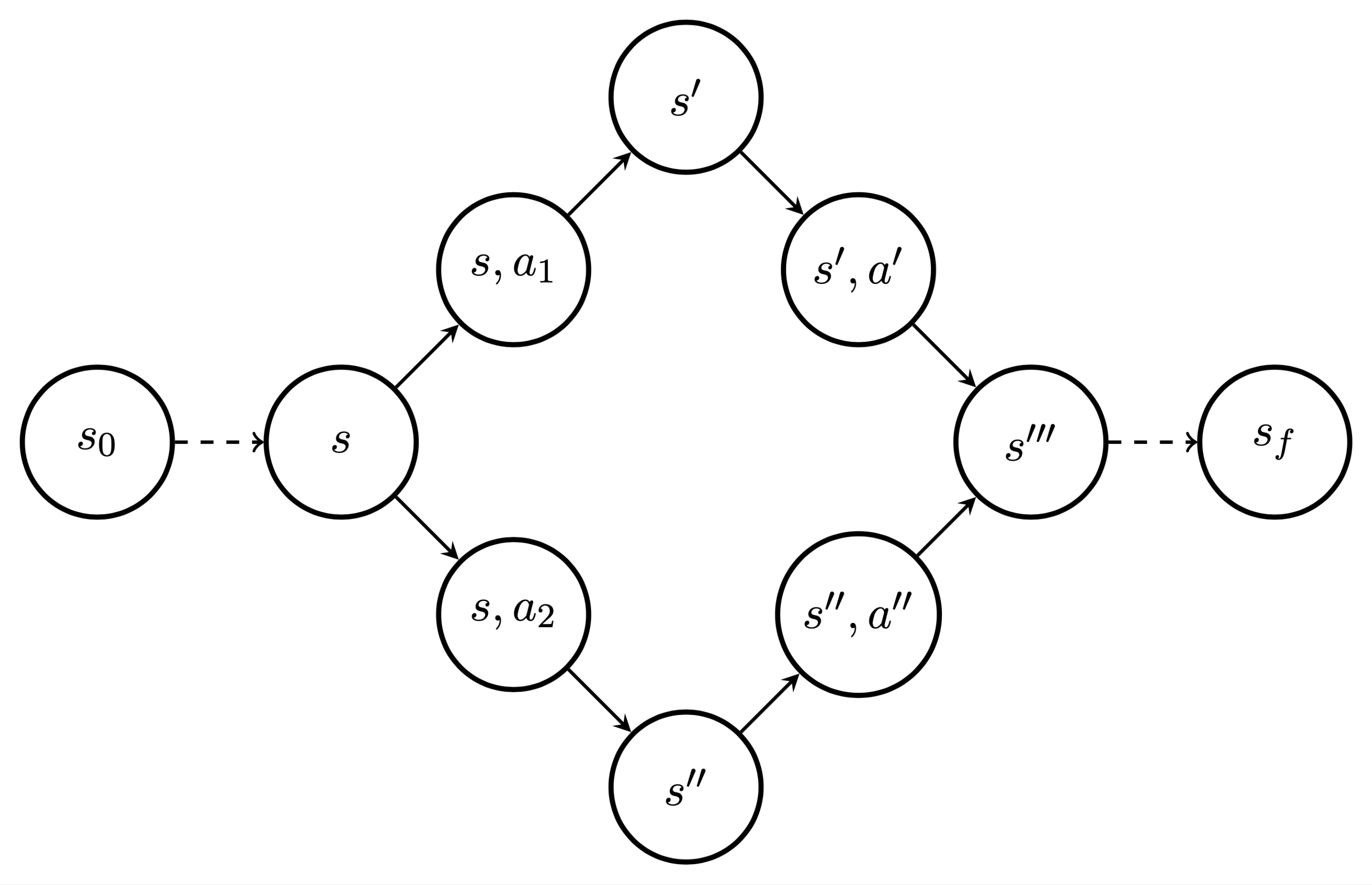}
    \caption{Consider a simple counter-example with only two paths from $s$ to $s'''$, with a given $R(s''')>0$. One path goes through $s'$ and the other through $s''$. From $s'$ the only feasible action $a'$ leads to $s'''$ and similarly from $s''$ with action $a''$. However, it may be that the environment probability $P(s'''| (s'',a''))=0$, constraining $P_B((s'',a'')|s''')=0$. Therefore it may not possible to choose the backward transitions freely while matching the flows and terminal rewards.}
    \label{fig:diamondcounterexample}
\end{figure}




\section{Expected Downstream Reward and Reward-Maximizing Policy}
\label{sec:expected-downstream-reward}

We have already introduced the probability distribution $P_{T}(s)$ and conditional probabilities $P_{T}(s'|s \leq s')$ over terminating states (the states visited just before exiting into $s_{f}$). More generally, one can consider any distribution $P_\pi(s)$ over terminating states arising from some arbitrary choice of GFlowNet policy $\pi$ and compute the expected reward under this distribution:
\begin{definition}
\label{def:V}
The {\bf expected reward} after visiting state $s$ of a flow with terminal reward function $R$, under some distribution over terminating states $P$, is
\begin{equation}
    \label{eq:V}
    V_{P_\pi}(s) \defeq E_{P_\pi(S)}[R(S)|S \geq s] = \sum_{s' \geq s}R(s') P_\pi(s'|s \leq s').
\end{equation}
\end{definition}
\begin{proposition}
\label{prop:V=R2-over-R}
When the probability distribution over terminating states is $P_{T}$ given by the flow (see~\cref{def:flow-P_T}), the expected reward under $P_{T}$ is
\begin{equation}
    \label{eq:V=R2-over-R}
    V_{P_{T}}(s) = \frac{\sum_{s'\geq s}R(s')^{2}}{\sum_{s' \geq s}R(s')}.
\end{equation}
\end{proposition}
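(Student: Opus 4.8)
The plan is to unfold \cref{eq:V} with the specific choice $P_\pi = P_T$ and substitute the known closed form for the conditional terminating probability. The statement is essentially a one-line computation once the right identities are assembled, so the bulk of the work is recalling which earlier results to invoke and making sure the conditioning is handled correctly.

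First I would write out the definition: $V_{P_T}(s) = \sum_{s' \geq s} R(s')\, P_T(s' \mid s \leq s')$. Here $P_T(s' \mid s \leq s')$ is the conditional probability of terminating in $s'$ given that the terminating state is a descendant of (or equal to) $s$. By \cref{cor:state-conditional-P-T}, or equivalently by the discussion around \cref{eq:P_T-superset} in the set-GFlowNet section, this conditional equals $\mathbf{1}_{s' \geq s}\, e^{-\gE(s') + \gF(s)}$, where $\gE(s') = -\log F(s' \rightarrow s_f) = -\log R(s')$ and $\gF$ is the corresponding free energy. Thus $P_T(s' \mid s \leq s') = \mathbf{1}_{s' \geq s}\, R(s') e^{\gF(s)} = \mathbf{1}_{s' \geq s}\, R(s') / e^{-\gF(s)}$. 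By \cref{def:free-energy} (or \cref{prop:free-energy-state-cond-flow-net}), $e^{-\gF(s)} = \sum_{s'' \geq s} e^{-\gE(s'')} = \sum_{s'' \geq s} R(s'')$, which is exactly the normalization constant one expects for a conditional distribution over the descendants of $s$.

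Substituting this into the sum gives
\begin{equation*}
    V_{P_T}(s) = \sum_{s' \geq s} R(s') \cdot \frac{R(s')}{\sum_{s'' \geq s} R(s'')} = \frac{\sum_{s' \geq s} R(s')^2}{\sum_{s' \geq s} R(s')},
\end{equation*}
where I pull the $s$-dependent normalizer out of the sum over $s'$ since it does not depend on $s'$. This is precisely \cref{eq:V=R2-over-R}. As an alternative route that avoids invoking the free-energy machinery directly, one can start from $P_T(s' \mid s \leq s') = P_T(\{s'\} \cap \mathfrak{S}(s)) / P_T(\mathfrak{S}(s)) = P_T(s') / P_T(\mathfrak{S}(s))$ for $s' \geq s$, use $P_T(s') = R(s')/Z$, and use \cref{eq:P_T-continuations} which gives $P_T(\mathfrak{S}(s)) = \sum_{s'' \geq s} R(s'') / Z$; the factors of $Z$ cancel and the same expression results.

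There is no real obstacle here — the only thing to be careful about is the bookkeeping of the conditioning event $\{S \geq s\}$ and confirming that the normalizing constant $\sum_{s'' \geq s} R(s'')$ is strictly positive (which holds whenever $s$ is itself a terminating state or has a terminating descendant, so that the conditional is well-defined), and noting that the indicator $\mathbf{1}_{s' \geq s}$ is automatically satisfied since the outer sum already ranges over $s' \geq s$. Everything else is a direct substitution using results already established earlier in the paper.
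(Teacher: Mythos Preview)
Your proof is correct and takes essentially the same approach as the paper: you substitute the conditional terminating probability from \cref{cor:state-conditional-P-T} into the definition \cref{eq:V} and simplify. The paper's own proof is the one-line version of exactly this computation, so your expanded treatment (including the alternative route via $P_T(\mathfrak{S}(s))$ and the remarks on well-definedness) is simply a more detailed unpacking of the same argument.
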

\begin{proof}
We apply the definition of conditional $P_{T}$ (\cref{cor:state-conditional-P-T}) to~\cref{eq:V} and obtain the result.
\end{proof}

While we have a simple expression of the expected reward under $P_{T}$, the expected reward is defined more broadly for the distribution $P_\pi$ arising from any policy $\pi$. In particular for any policy $\pi(a|s)$, we can also define the expected reward $V_{P_{\pi}}$ under the distribution $P_\pi$ over terminating states induced by $\pi$. Expected rewards play a role similar to the state and state-action value functions in reinforcement learning, and as a consequence they also satisfy an equivalent of the policy improvement theorem when intermediate rewards are $0$ and the discount factor $\gamma = 1$:
\begin{proposition}
\label{prop:policy-improvement}
Let $P_\pi$ be a distribution over terminating states arising from a policy $\pi$, and $\bar{\pi}$ a greedy policy under the expected reward $V_{P_\pi}$, i.e.,
\begin{align}
    &\bar{\pi}(a|s) = 0\quad{\rm unless} \nonumber \\
    &V_{P_\pi}((s, a)) \geq V_{P_\pi}((s, a')) \quad \forall a'.
\end{align}
Then for all $s$
\begin{equation}
    V_{P_{\bar{\pi}}}(s) \geq V_{P_\pi}(s).
\end{equation}
That is, the expected reward under the probability induced by $\bar{\pi}$ is no worse than the one induced by $\pi$.
\end{proposition}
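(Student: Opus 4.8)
The plan is to mimic the classical policy improvement theorem from reinforcement learning, adapted to the GFlowNet setting where the "return" is the terminal reward and there are no intermediate rewards. The key identity I would establish first is a one-step recursion (a Bellman-type equation) for $V_{P_\pi}$ expressed through the policy $\pi$: for a non-terminating state $s$,
\begin{equation}
    V_{P_\pi}(s) = \sum_{a} \pi(a\mid s)\, V_{P_\pi}((s,a)),
\end{equation}
where $V_{P_\pi}((s,a))$ is the expected reward after landing in the odd state $(s,a)$ — equivalently, after transitioning to $T(s,a)$ in the deterministic case — and $V_{P_\pi}(s_f\text{-parent terminating }s) = R(s)$ provides the base case. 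This follows directly from \cref{def:V} by conditioning on the first action taken from $s$ under $\pi$ and using the Markov/tree-of-continuations structure: the trajectories continuing from $s$ partition according to the first action, and $P_\pi(s'\mid s\leq s')$ decomposes accordingly. I would spell this recursion out carefully since it is the workhorse of the argument.

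Next I would run the standard telescoping/iteration argument. Fix a state $s$. Using the recursion for $V_{P_\pi}$ and the definition of the greedy policy $\bar\pi$ (which only puts mass on actions $a$ maximizing $V_{P_\pi}((s,a))$), we get
\begin{equation}
    V_{P_\pi}(s) = \sum_a \pi(a\mid s) V_{P_\pi}((s,a)) \leq \max_a V_{P_\pi}((s,a)) = \sum_a \bar\pi(a\mid s) V_{P_\pi}((s,a)).
\end{equation}
Then I expand $V_{P_\pi}((s,a))$ one more step (through the deterministic transition to $T(s,a)$, then through $\pi$ again) and repeat the inequality substitution at that next even state, replacing the inner $\pi$ by $\bar\pi$ and using the same greedy bound. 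Iterating this down the DAG — which terminates because $G$ is acyclic and every trajectory has bounded length — unrolls $V_{P_\pi}(s)$ into an expectation over trajectories generated by $\bar\pi$ with $\pi$-evaluated leaves, and each substitution step can only increase the value. In the limit (after finitely many steps, once all continuations have reached terminating states) every occurrence of $\pi$ has been replaced by $\bar\pi$, and the leaf values are the true rewards $R(\cdot)$, yielding exactly $V_{P_{\bar\pi}}(s)$. Hence $V_{P_{\bar\pi}}(s)\geq V_{P_\pi}(s)$.

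The main obstacle I anticipate is being careful about the bookkeeping of the recursion in the \emph{general} (not necessarily deterministic) environment and about the fact that $\bar\pi$ is only partially specified (it is "$0$ unless $a$ is greedy", so there may be ties and several valid $\bar\pi$). For the recursion itself one must check that $V_{P_\pi}$ genuinely satisfies the one-step decomposition — i.e. that $P_\pi(s'\mid s\leq s')$ factors as (prob.\ of first action)$\times$(prob.\ of continuation from $T(s,a)$), which is where the DAG/Markov structure and the definition of $P_\pi$ as the terminating distribution induced by a forward policy are used; a subtlety is that a state $s'$ may be reachable from $s$ via several first actions, so the decomposition is a sum, not a single product, and one must verify the weights are consistent. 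For the tie issue it suffices to note the inequality $\sum_a \pi(a\mid s)V((s,a)) \le \max_a V((s,a))$ holds for \emph{any} choice of greedy $\bar\pi$, so the conclusion is uniform over the admissible $\bar\pi$'s. I expect the acyclicity of $G$ to make the "iterate to termination" step rigorous without needing a fixed-point/contraction argument, unlike the discounted infinite-horizon case in RL.
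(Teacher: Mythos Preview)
Your proposal is correct and follows essentially the same route as the paper: the classical policy-improvement telescoping argument, starting from $V_{P_\pi}(s)\le \max_a V_{P_\pi}((s,a))$ and iteratively unrolling under $\bar\pi$ until termination, where the leaf values become $R(\cdot)$ and the whole expression is $V_{P_{\bar\pi}}(s)$. The paper's proof is terser (it assumes $\bar\pi$ is deterministic and uses the one-step recursion implicitly), whereas you are more explicit about first deriving the Bellman-type decomposition, handling ties in $\bar\pi$, and invoking acyclicity for termination --- but the substance is the same.
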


\begin{proof}
Let us denote by $\bar{\pi}(s)$ the action deterministically chosen by greedy policy $\bar{\pi}$ from $s$, and $s_n$ the stochastically sampled terminating state. Then:

\begin{align}
    V_{P_\pi}(s) &\leq V_{P_\pi}((s, \bar{\pi}(s))) \nonumber \\
                &= E_{\bar{\pi}}[V_{P_{\pi}}(s_{t+1}) | s_t = s]  \nonumber \\
                &\leq E_{\bar{\pi}}[V_{P_\pi}(s_{t+1}, \bar{\pi}(s_{t+1})) | s_t = s] \nonumber  \\
                &= E_{\bar{\pi}}[\mathbb{E_{\bar{\pi}}}[V_{P_{\pi}}(s_{t+2}) | s_t = s | s_t = s] \nonumber  \\
                &= E_{\bar{\pi}}[V_{P_{\pi}}(s_{t+2}) | s_t = s] \nonumber \\
                &\dots \nonumber \\
                &\leq E_{\bar{\pi}}[V_{P_{\pi}}(s_n) | s_t = s] \nonumber \\
                &= E_{\bar{\pi}}[R(s_n) | s_t = s] \nonumber \\
                &= V_{P_{\bar{\pi}}}(s) \nonumber
\end{align}

where we have used the fact that, for all $s'$, $V_{P_\pi}(s') \leq V_{P_\pi}(s', \bar{\pi}(s'))$ since $\bar{\pi}$ is a greedy policy.

\end{proof}
An immediate consequence is the following:
\begin{corollary}
\label{cor:reward-maximizing-policy}
There exists a policy $\pi^{*}(a|s)$ that maximizes the expected reward for all states $s$, namely the greedy policy of~\cref{prop:policy-improvement} associated with the GFlowNet's policy $\pi$ yielding terminal distribution $P_T(s)=R(s)/Z$.
\end{corollary}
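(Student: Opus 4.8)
The plan is to prove the corollary by showing that the greedy policy $\bar{\pi}$ of \cref{prop:policy-improvement}, built from the GFlowNet policy $\pi$ with $P_\pi = P_T$, attains the largest achievable expected reward at every state, i.e. that $V_{P_{\bar{\pi}}}(s) = \max_{s' \in \gS^f,\, s' \geq s} R(s')$ for all $s$. First I would record that in the deterministic environment of interest the trajectory generated by $\bar{\pi}$ from any $s$ is itself deterministic, since each greedy step selects a single successor $T(s,a)$; hence conditioned on $S \geq s$ the terminating state $S$ is fully determined, and $V_{P_{\bar{\pi}}}(s)$ equals $R$ evaluated at the unique terminating state reached by following $\bar{\pi}$ from $s$. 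This reduces the statement to the combinatorial claim that, starting from any $s$ and repeatedly moving to the successor of maximal $V_{P_\pi}$-value, one lands in a reward-maximizing descendant of $s$.

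Next I would set up a backward induction along the partial order, ordering states by the length of their longest path to $s_f$. The base case is the terminating states, where $V_{P_\pi}(s) = R(s)$ by \cref{prop:V=R2-over-R} and there is nothing to choose. For the inductive step at an internal state $s$, I would invoke \cref{prop:policy-improvement} to obtain $V_{P_{\bar{\pi}}}(s) \geq V_{P_\pi}(s)$, and then attempt to upgrade this to equality with the optimal value by comparing the greedy successor's value against the values of the remaining successors and applying the inductive hypothesis to each child $T(s,a)$.

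The main obstacle, which I expect to carry essentially all of the difficulty, is justifying that a single greedy improvement already reaches the optimum rather than merely improving on $\pi$. \cref{prop:policy-improvement} only yields $V_{P_{\bar{\pi}}} \geq V_{P_\pi}$, whereas optimality requires $\bar{\pi}$ to be a fixed point of the greedy operator, equivalently that $V_{P_{\bar{\pi}}}$ satisfy the Bellman optimality equation $V(s) = \max_{a} V(T(s,a))$ with $V(s) = R(s)$ at terminating states. For one step to suffice, one needs $V_{P_\pi}$ to rank successors in the same order as their achievable maxima, i.e. $V_{P_\pi}(s_1) > V_{P_\pi}(s_2) \Rightarrow \max_{s' \geq s_1} R(s') \geq \max_{s' \geq s_2} R(s')$. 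Since for the GFlowNet policy $V_{P_T}(s) = \big(\sum_{s' \geq s} R(s')^2\big)/\big(\sum_{s' \geq s} R(s')\big)$ is a reward-weighted mean that can sit well below the subtree maximum, a narrow high-reward branch can be outranked by a branch with a larger mean, so this monotonicity is not automatic and closing the gap is precisely where the argument is delicate. I would therefore either (i) isolate and prove an additional structural hypothesis on $V_{P_T}$ under which the single-step successor ordering is correct, or (ii) recast the construction so that $\bar{\pi}$ is shown to be a fixed point of greedy improvement; pinning down which of these the intended argument relies on is the crux of the proof.
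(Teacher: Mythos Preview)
The paper gives no proof for this corollary; it is labelled an ``immediate consequence'' of \cref{prop:policy-improvement}. Your diagnosis of the gap is correct, and in fact the gap cannot be closed: the literal claim that one greedy step with respect to $V_{P_T}$ already yields a reward-maximizing policy is false in general. Take a DAG in which $s_0$ has children $s_1$ (terminal, $R(s_1)=10$) and $s_2$ (non-terminal), and $s_2$ has terminal children $s_3$ with $R(s_3)=11$ and $s_4$ with $R(s_4)=3$. By \cref{prop:V=R2-over-R}, $V_{P_T}(s_1)=10$ while $V_{P_T}(s_2)=(121+9)/(11+3)=65/7<10$, so the greedy policy $\bar\pi$ sends $s_0$ to $s_1$ and terminates with reward $10$. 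The optimal policy goes $s_0\to s_2\to s_3$ for reward $11$, hence $\bar\pi$ is not reward-maximizing at $s_0$. This is exactly the failure mode you anticipated: $V_{P_T}$ is a reward-weighted mean that need not rank successors by their subtree maxima.

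What \cref{prop:policy-improvement} \emph{does} support is the bare existence statement: in a finite DAG there are only finitely many deterministic policies, so iterating greedy improvement is monotone and must reach a fixed point, which then satisfies the Bellman optimality equation and is therefore optimal. Read that way---with the ``namely'' clause indicating the construction (policy iteration seeded at the GFlowNet policy) rather than a single step---the corollary really is immediate. Your options (i) and (ii) cannot rescue the single-step reading without extra hypotheses, since the counterexample above already rules it out; the right resolution is to weaken the identification of $\pi^*$ to the limit of iterated improvement rather than the first iterate.
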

How do can we estimate the expected reward under $P_{T}$? We just need to train another flow (or set of heads for a GFlowNet, see \cref{sec:multi-flows}) with $R^2$ as the reward function.
\begin{proposition}
Consider two flows $F$ and $F'$, one matching terminal reward function $R$ and the other matching terminal reward function $R^2$. Then the expected reward under $P_{T}$ (the distribution over terminating states defined by the flow $F$) is
\begin{equation}
    V_{P_{T}}(s) = \frac{F'(s|s)}{F(s|s)}.
\end{equation}
\end{proposition}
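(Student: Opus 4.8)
The plan is to reduce everything to the closed-form expression for the expected reward already established in \cref{prop:V=R2-over-R}, namely $V_{P_T}(s) = \left(\sum_{s'\geq s}R(s')^2\right)/\left(\sum_{s'\geq s}R(s')\right)$, and then recognize the numerator and denominator as the two state-conditional initial flows appearing in the statement. So the proof is essentially a matter of identifying each side of the desired identity with the same quantity.

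First I would invoke \cref{prop:free-energy-state-cond-flow-net} applied to the state-conditional flow network built from $F$: for the anchor state $s$, it gives $F(s\mid s) = F_s(s_0\mid s) = \sum_{s'\,:\,s'\geq s}F(s'\rightarrow s_f)$. Since $F$ matches the terminal reward $R$, i.e.\ $F(s'\rightarrow s_f)=R(s')$ for all $s'\in\gS^f$ (and $F(s'\rightarrow s_f)=0$ otherwise, so only parents of $s_f$ contribute), this reads $F(s\mid s)=\sum_{s'\geq s}R(s')$. Next I would repeat the identical argument for the state-conditional flow network built from $F'$, using that $F'$ matches the terminal reward $R^2$, obtaining $F'(s\mid s)=\sum_{s'\geq s}F'(s'\rightarrow s_f)=\sum_{s'\geq s}R(s')^2$.

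Finally I would combine these two computations with \cref{prop:V=R2-over-R}:
\begin{equation*}
    V_{P_T}(s) = \frac{\sum_{s'\geq s}R(s')^2}{\sum_{s'\geq s}R(s')} = \frac{F'(s\mid s)}{F(s\mid s)},
\end{equation*}
which is exactly the claim. One should also note that the denominator is nonzero whenever there is at least one terminating state $s'\geq s$ with $R(s')>0$, i.e.\ exactly when $V_{P_T}(s)$ is well-defined as a conditional expectation, so no additional hypotheses are needed.

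There is no real obstacle here; the only thing to be slightly careful about is bookkeeping of the conditioning set, making sure that in $\sum_{s'\geq s}$ the only nonzero terms come from $s'\in\gS^f$ with $s'\geq s$, since the state-conditional flow network anchored at $s$ has sink state $s_f$ and its terminating edge flows coincide with those of the original flow by \cref{eq:terminating-state-conditional-flows}. This is immediate once one writes $\sum_{\tau\in\gT_s}F_s(\tau)=\sum_{s'\geq s}F_s(s'\rightarrow s_f)$ as in the proof of \cref{prop:free-energy-state-cond-flow-net}, so the argument is complete.
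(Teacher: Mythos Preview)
Your proposal is correct and follows essentially the same approach as the paper: start from \cref{prop:V=R2-over-R} to write $V_{P_T}(s)$ as the ratio $\big(\sum_{s'\geq s}R(s')^2\big)/\big(\sum_{s'\geq s}R(s')\big)$, then identify the numerator and denominator as the state-conditional initial flows $F'(s\mid s)$ and $F(s\mid s)$ via \cref{prop:free-energy-state-cond-flow-net}. The paper's proof is a one-line version of exactly this; your write-up simply spells out the bookkeeping more carefully.
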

\begin{proof}
We start from~\cref{eq:V=R2-over-R} of the above corollary and notice that the numerator is the self-flow for $F'$ while the denominator is the self-flow for $F$ (see~\cref{eq:free-energy}).
\end{proof}

\subsection{Preference for High-Reward Early Trajectory}

We have seen in~\cref{sec:free-P_B} that by imposing a particular preference on $P_B$, one can make the GFlowNet sampling mechanism prefer to construct states in some orders more than others, e.g., one could prefer to start with states with larger expected reward (over their potential continuations), using~\cref{def:V}. It suffices to define $P_B(s_t,a_t|s_{t+1})$ so that it puts more probability mass on state-action pairs $(s_t,a_t)$ with larger $V((s_t,a_t))$.

\section{Intermediate Rewards and Trajectory Returns}
\label{sec:intermediate-rewards-and-returns}

Up to now and in the GFlowNet paper~\citep{bengio2021flow}, we have considered terminal rewards as events happening only once per trajectory, at its end.  Consider instead an agent experiencing a complete trajectory $\tau$ and declare its {\bf return} to be the sum of some intermediate environment rewards associated with all the transitions into the sink node from each of the visited states.
\begin{definition}
\label{def:return}
The {\bf trajectory return} $\rho(\tau)$ associated with a partial trajectory \mbox{$\tau=(s_i,s_{i+1}\ldots,s_n,s_f)$} is defined as
\begin{align}
    \rho(s_i,s_{i+1},\ldots,s_n,s_f) &= \sum_{t=i}^{n} R(s_t) = \sum_{t=i}^n F(s_t{\rightarrow}s_f) \\
    \rho(s_f) &= 0
\end{align}
and the {\bf expected future return $\bar{\rho}(s_t)$} associated with a state $s_t$ is defined as
\begin{align}
    \bar{\rho}(s_t) &= E[\rho(s_t,s_{t+1},s_{t+2},\ldots,s_n,s_f)|s_t] \nonumber \\
    &= \sum_{s_{t+1},\ldots,s_n} P(s_{t+1},\ldots,s_n,s_f|s_t) \rho(s_t,s_{t+1},\ldots,s_f)
\end{align}
where the expectation is defined under the flow's probability measure over trajectories (conditioned on the trajectory going through $s_t$).
\end{definition}

\begin{proposition}
The expected future return $\bar{\rho}(s)$ achievable from trajectories starting at $s$ satisfies the following recursion:
\begin{equation}
  \bar{\rho}(s) = R(s) + \sum_{s'\in Child(s)} P_F(s'|s) \bar{\rho}(s').
\end{equation}
\end{proposition}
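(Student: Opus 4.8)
The plan is to unfold the trajectory return by a single step at $s$ and recognize the remaining expectation as a $P_F(\cdot \mid s)$-mixture of the returns $\bar{\rho}(s')$ over the children $s'$ of $s$. First I would record the one-step identity for the return itself: writing a complete trajectory starting at $s$ as $\tau = (s, s_1, s_2, \ldots, s_n, s_f)$, \cref{def:return} gives
\begin{equation*}
    \rho(\tau) = \sum_{t} R(s_t) = R(s) + \rho(s_1, s_2, \ldots, s_n, s_f),
\end{equation*}
and the degenerate case $s \rightarrow s_f$ is covered since $\rho(s_f) = 0$. Taking the expectation under the flow's probability measure conditioned on the trajectory visiting $s$, and using linearity, the deterministic term $R(s)$ pulls out:
\begin{equation*}
    \bar{\rho}(s) = R(s) + E\big[\rho(s_1, \ldots, s_n, s_f) \,\big|\, s \in \tau\big].
\end{equation*}

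Next I would apply the tower property, conditioning on the identity of the first state $s_1$ after $s$. Because the flow is Markovian, \cref{prop:markovian-equivalences} (equivalently the Markov property \cref{eq:markov}) gives that, conditioned on the prefix reaching $s$, the probability that $s_1 = s'$ equals $P_F(s' \mid s)$, and — conditioned further on $s_1 = s'$ — the law of the suffix $(s_2, \ldots, s_n, s_f)$ coincides with the flow-induced law on trajectories from $s'$ to $s_f$, independent of how $s$ was reached. Hence $E[\rho(s', s_2, \ldots, s_f) \mid s_1 = s'] = \bar{\rho}(s')$ by the definition of $\bar{\rho}$, and summing over the children of $s$ yields
\begin{equation*}
    \bar{\rho}(s) = R(s) + \sum_{s' \in Child(s)} P_F(s' \mid s)\,\bar{\rho}(s'),
\end{equation*}
as claimed; the summand for $s' = s_f$ vanishes since $\bar{\rho}(s_f) = 0$.

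The easy ingredients are linearity of expectation and the tower property. The step that genuinely uses the structure developed earlier is the Markov argument: one must justify that conditioning on the trajectory through $s$ and then on the transition $s \rightarrow s'$ leaves the conditional law of the remaining suffix equal to the unconditional flow-law of trajectories out of $s'$. This is precisely the product-factorization characterization of Markovian flows in \cref{prop:markovian-equivalences}, specialized to the partial trajectories in $\gT_{s',f}$, with \cref{lemma:PF-PB-extension} guaranteeing that these suffix laws are bona fide probability distributions; I expect this to be the main (and essentially only) obstacle, the rest being bookkeeping.
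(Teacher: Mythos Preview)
Your proposal is correct and follows essentially the same approach as the paper: split off $R(s)$ from the return, then use the Markov property to factor $P(s_{t+1},\ldots,s_f\mid s_t)$ as $P_F(s_{t+1}\mid s_t)\,P(s_{t+2},\ldots,s_f\mid s_{t+1})$ and recognize the inner expectation as $\bar{\rho}(s_{t+1})$. You are more explicit than the paper about invoking \cref{prop:markovian-equivalences} and about the $s'=s_f$ boundary case, but the argument is the same.
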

\begin{proof}
From the definition of return (\cref{def:return}), we obtain the following:
\begin{align*}
 \bar{\rho}(s_t) &= \sum_{s_{t+1},s_{t+2},\ldots}
 P(s_{t+1},\ldots,s_f|s_t)(R(s_t) + \rho(s_{t+1},s_{t+2},\ldots,s_n)) \\
 &= R(s_t) + \sum_{s_{t+1}} P_F(s_{t+1}|s_t) \sum_{s_{t+2}, \ldots, s_n} P(s_{t+2},\ldots,s_n|s_{t+1}) \rho(s_{t+1},\ldots,s_n) \\
  &= R(s_t) + \sum_{s_{t+1}} P_F(s_{t+1}|s_t) E[\rho(s_{t+1},s_{t+2},\ldots)|s_{t+1}] \\
  &= R(s_t) + \sum_{s_{t+1}} P_F(s_{t+1}|s_t) \bar{\rho}(s_{t+1}).
\end{align*}
\end{proof}
One reason why the above recursion is interesting is that it corresponds to the Bellman equation~\citep{sutton2018reinforcement} for the value function (which is the expected downstream return) in the case of no discounting (with an episodic setting). It is interesting to compare it with one of the equations we obtain for the state flow (\cref{eq:flow-match}):
\begin{align*}
F(s_t) &= \sum_{s_{t+1}} F(s_{t+1}) P_B(s_t|s_{t+1})  \\
      &= R(s_t) + \sum_{s_{t+1} \neq s_f} P_B(s_t|s_{t+1}) F(s_{t+1})
\end{align*}
The two recursions are different: one uses the forward transition to propagate values backward, while the other uses the backward transitions to propagate flows.

\begin{definition}
\label{def:return-augmented}
Let us denote $r(s)$ the possibly stochastic {\bf environment reward}, provided when an agent visits state $s$ (and generally distinct from the GFlowNet terminal reward at $s$), and consider the environment reward accumulated in the partial trajectory $\tau=(s_0,s_1,\ldots,s_n)$ leading to state $s_n=s$. Let us call the GFlowNet state {\bf return-augmented} if the state $s$ includes the {\bf accumulated reward} $\nu(s)$, i.e., there exists a function $\nu(s)=\sum_{t=0}^n r(s_t)$. Let us call a GFlowNet with a return-augmented state a {\bf return-augmented GFlowNet}.
\end{definition}
We want to keep track of accumulated intermediate rewards in the GFlowNet state to compute the terminal reward from the state, and thus train GFlowNets to sample in proportion to the accumulated reward:
\begin{proposition}
\label{prop:proportional-to-return}
Suppose $G$ is a return-augmented GFlowNet with target terminal reward function $R(s)$ equal to the accumulated environment reward $\nu(s)$. Furthermore, suppose $G$ is trained to completion. Then sampling from $G$ produces accumulated reward $\nu$ with probability proportional to $\nu$.
\end{proposition}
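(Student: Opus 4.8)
# Proof Proposal for Proposition~\ref{prop:proportional-to-return}

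The plan is to reduce this statement to the fundamental property of GFlowNets established in \cref{sec:GFlowNets-Learning-a-Flow}, namely that a GFlowNet trained to completion induces a terminating state probability $P_T(s) \propto R(s)$ (\cref{def:formal-gfn} and the surrounding discussion, together with \cref{prop:P_T-probability}). Concretely, a ``return-augmented GFlowNet trained to completion'' means that we have found a configuration $o \in \gH(\gF_{Markov}(G,R))$, so that the induced flow $F$ satisfies $F(s\rightarrow s_f) = R(s)$ for every terminating state $s$, and $P_T(s) = F(s\rightarrow s_f)/Z = R(s)/Z$ with $Z = \sum_{s'\in\gS^f} R(s')$.

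The key observation that makes the statement nontrivial (and not merely a restatement of the definition) is that the state of a return-augmented GFlowNet \emph{encodes} the accumulated environment reward $\nu(s) = \sum_{t=0}^n r(s_t)$ along the trajectory leading to $s$ (\cref{def:return-augmented}). So the first step is to make precise what ``sampling from $G$ produces accumulated reward $\nu$ with probability proportional to $\nu$'' means: for a value $v$ in the range of the accumulated-reward function, the probability that a trajectory sampled forward from $s_0$ via $P_F$ terminates in a state $s$ with $\nu(s) = v$ should be proportional to $v$. Second, I would invoke \cref{cor:sampling-from-P_T}: the forward sampling procedure starting at $s_0$ and drawing from $P_F(\cdot\mid s)$ until reaching $s_f$ terminates in state $s$ with probability exactly $P_T(s)$. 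Third, by the hypothesis $R(s) = \nu(s)$, we have $P_T(s) = \nu(s)/Z$; summing over all terminating states $s$ with $\nu(s) = v$ gives
\begin{equation*}
    \Pr[\text{accumulated reward} = v] = \sum_{s \in \gS^f : \nu(s) = v} \frac{\nu(s)}{Z} = \frac{v \cdot |\{s\in\gS^f : \nu(s) = v\}|}{Z},
\end{equation*}
which is proportional to $v$ only if we interpret the claim as being about the probability mass assigned to a \emph{particular terminating state} with accumulated reward $\nu$ — the cleanest reading, and the one consistent with the usual GFlowNet statement $P_T(s)\propto R(s)$. So the careful step is to fix the precise semantics of the conclusion and state it as $P_T(s) \propto \nu(s)$.

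I expect the main obstacle to be not mathematical difficulty but rather pinning down the intended interpretation and verifying that the construction in \cref{def:return-augmented} is self-consistent: one must check that, for the terminal reward $R(s) = \nu(s)$ to be well-defined as a deterministic function of the state $s$, the accumulated reward $\nu$ must indeed be recoverable from the augmented state — which is exactly what ``return-augmented'' stipulates, but in the stochastic-environment-reward case ($r(s)$ possibly stochastic) one should note that $\nu(s)$ is the realized accumulated reward recorded in the state, not its expectation, so the argument goes through pathwise. A secondary point worth a sentence is the well-posedness of ``trained to completion'': this presumes the existence of a Markovian flow in $\gF_{Markov}(G,R)$ with this reward, which is guaranteed by \cref{prop:flow-matching} (or \cref{prop:flow-parametrizations}) provided $Z = \sum_s \nu(s) < \infty$, i.e., the augmented state space is such that the total accumulated-reward mass is finite. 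With these clarifications, the proof is a two-line invocation of \cref{cor:sampling-from-P_T} and the hypothesis $R = \nu$.
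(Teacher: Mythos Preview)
Your proposal is correct and follows the same approach as the paper's proof, which is simply a one-line invocation of the fundamental GFlowNet property $P_T(s) \propto R(s)$ under the hypothesis $R(s) = \nu(s)$. Your discussion of the interpretive ambiguity (per-state versus per-value of $\nu$) and of well-posedness is in fact more careful than the paper itself, which glosses over these points entirely.
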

\begin{proof}
Since $G$ is a GFlowNet trained to completion with terminal reward function $R(s) = \nu(s)$, we know that sampling from $G$ samples terminal accumulated rewards $\nu(s)$ with probability proportional to $\nu$.
\end{proof}

Note that such a GFlowNet can only be trained offline, i.e., using trajectories which have terminated and for which we have observed the return.

Note also that if we did not augment the state with the accumulated reward, then the GFlowNet terminal reward would not be a function of the state. Having a return-augmented state also makes it possible to handle stochastic environment rewards in the GFlowNet framework.

\section{Multi-Flows, Distributional GFlowNets, Unsupervised GFlowNets and Pareto GFlowNets}
\label{sec:multi-flows}

Consider an environment with stochastic rewards. As with~\cref{def:return-augmented}, we could augment the state to include the random accumulated rewards, thus (by~\cref{prop:proportional-to-return}) making the GFlowNet sample trajectories with returns $\rho$ occurring with probability proportional to $\rho$. However, similarly to Distributional RL~\citep{bellemare2017distributional}, it could be interesting to generalize GFlowNets to capture not just the expected value of achievable terminal rewards but also other statistics of its distribution. More generally, we can think of this like a family of GFlowNets, each of which models in its flow a particular future environmental outcome of interest. With the particle analogy of GFlowNets, it would be as if the particles had a colour or label (just like the frequency of each photon in an group of photons travelling together in a beam of light) and that we separately account for the flows associated with all the possible label types.

If the number of outcomes (the number of possible labels) is small, this could be implemented with different output heads of the GFlowNet (e.g., one output for the flow associated with each label). When a trajectory associated with a particular label outcome is observed, the corresponding heads get gradients. A more powerful and general implementation puts the outcome event as an {\em input} of the GFlowNet, thus amounting to training a {\em conditional GFlowNet} (see~\cref{sec:conditional-gflownets}), and formalized below.

\begin{definition}
Let us define the {\bf outcome} $y=f(s)$ as a known function $f(s)$ of the state $s$. An outcome can be whether an environment reward takes a particular value, or it can be a vector of important features of $s$ which are sufficient to determine many possible environment reward functions (in particular, $f$ can be the identity function). Let us call the conditional GFlowNet taking $y$ as conditioning input, with conditional flows $F(A|y)$ for events $A$ over the trajectories consistent with reward function 
\begin{equation}
\label{eq:outcome-conditioned-reward}
    R_y(s)=1_{f(s)=y}
\end{equation}
an {\bf outcome-conditioned GFlowNet} with outcome function $f$.
\end{definition}

We will limit ourselves here to a discrete set of outcomes for simplicity but expect that this approach can be generalized to a continuous set. Note how, if the outcome-conditioned GFlowNet is trained to completion, it makes it possible to only sample terminating states $s$ yielding the chosen outcome $y$. In principle, this allows sampling objects guaranteed to have a high reward (under the given reward function). In practice, a GFlowNet will never be perfectly trained to completion, and we should think of such an outcome-conditioned GFlowNet similarly to a goal-conditioned policy in RL~\citep{ghosh2018learning} or the reward-conditioned 
upside-down RL~\citep{schmidhuber2019reinforcement}. An interesting question for future work is to extend these outome-conditioned GFlowNets to the case of stochastic rewards or stochastic environments.

\begin{definition}
A {\bf distributional GFlowNet} is an outcome-conditioned GFlowNet taking as conditioning input the value of the environment return associated with complete trajectories. This can be achieved by making the GFlowNet return-augmented, so that the return can be read from the terminating state of the trajectory.
\end{definition}

Training an outcome-conditioned GFlowNet can only be done offline because the conditioning input (e.g., the final return) may only be known after the trajectory has been sampled. A reasonable contrastive training procedure could thus proceed as follows:
\begin{enumerate}
    \item Sample a trajectory $\tau^+$ according to an unconditional training policy $\pi_T$. 
    \item Obtain the outcome $y^+=f(s^+)$ from the terminating state $s^+$ (occurring just before the sink state $s_f$ in $\tau^+$).
    \item Update the conditional GFlowNet with $\tau^+$ and target terminating reward $R(s^+|y^+)=1_{f(s^+)=y^+}=1$.
    \item Sample a trajectory $\tau^-$ according to the conditional GFlowNet policy with condition $y^+$.
    \item Obtain the actual outcome $y^- = f(s^-)$ for the terminating state $s^-$ (occurring just before the sink state $s_f$) in $\tau^-$. If the GFlowNet was perfectly trained, we should have $y^+=y^-$ but otherwise, especially if the number of possible outcomes is large, this becomes unlikely.
    \item Update the conditional GFlowNet using a flow-matching loss with trajectory $\tau^-$ and target terminating reward $R(s^-|y)=1_{y^-=y^+}$ (likely to be 0 if there are many possible values for $y$).
\end{enumerate}

An interesting question for future work is to consider a smoother reward function instead of the sharp but sparse reward $R(s|y)=1_{f(s)=y}$ as conditional reward, in order to make training easier.

\subsection{Defining a reward function a posteriori}

The reward function may not be known a priori or it may be known only up to some unknown constants (e.g., defining a Pareto front) or we may wish to generalize GFlowNets so they can be trained or pre-trained in a more unsupervised way, with the specific generative task only specified afterwards. The following important proposition allows us to do just that, and convert an outcome-conditioned GFlowNet into one that samples according to a given reward function, without having to retrain the network.

\begin{proposition}
Consider an outcome-conditioned GFlowNet trained to completion with respect to the possible outcomes $y=f(s)$ over terminating states $s$ and a terminal reward function $R(s)=r(f(s))$ given a posteriori (possibly after training the GFlowNet) as a function $r(y)$ of the outcome $y=f(s)$. Then a GFlowNet with flow $F_{r\circ f}(A)$ over events $A$ which matches target terminal reward function $R={r\circ f}$ can be obtained from the flow $F(A|y)$ of the outcome-conditioned GFlowNet via
\begin{equation}
\label{eq:outcome-sum}
    F_{r\circ f}(A) = \sum_y r(y) F(A|y).
\end{equation}
For example, the GFlowNet policy $\pi_{r\circ f}(a|s)$ for terminal reward $R=r\circ f$ can be obtained from
\begin{equation}
    \pi_{r\circ f}(a|s) = \frac{\sum_y r(y)F((s,a)|y)}{\sum_y r(y) F(s|y)}
\end{equation}
or
\begin{equation}
    \pi_{r\circ f}(a|s) = \frac{\sum_y r(y)F(s|y)\pi(a|s,y)}{\sum_y r(y) F(s|y)}
\end{equation}
where $F(s|y)$, $F((s,a)|y)$ and $\pi(a|s,y)$ are the outcome-conditioned state flow, state-action flow and action policy respectively.
\end{proposition}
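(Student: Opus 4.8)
The plan is to define $F_{r\circ f}$ directly by the formula in \cref{eq:outcome-sum}, $F_{r\circ f}(A)=\sum_{y}r(y)\,F(A\mid y)$ for every event $A\subseteq\gT$, and then to verify, in order, that (i) it is a genuine flow, (ii) its terminating flows coincide with $R=r\circ f$, and (iii) its forward transition probability is the stated $\pi_{r\circ f}$. For (i), recall that an outcome-conditioned GFlowNet is a reward-conditional flow network (\cref{def:reward-cond-flow-net}), so all the DAGs $G_y$ equal one fixed pointed DAG $G$ and the $F(\cdot\mid y)$ are non-negative functions on the common set $\gT$; since $r(y)\ge 0$ (reward functions are $\R^{+}$-valued) and the outcome set is finite, $F_{r\circ f}$ is again a non-negative function on $\gT$ and hence a flow in the sense of \cref{def:flowmeasure}. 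Exchanging the (finite) outcome sum with the sums in \cref{eq:Fs} and \cref{eq:Fss'} shows that the induced state and edge flows are the corresponding combinations, $F_{r\circ f}(s)=\sum_{y}r(y)F(s\mid y)$ and $F_{r\circ f}(s{\rightarrow}s')=\sum_{y}r(y)F(s{\rightarrow}s'\mid y)$.

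For (ii), I would use the hypothesis that the conditional GFlowNet is trained to completion: for each $y$ this means $F(\cdot\mid y)\in\gF_{Markov}(G,R_y)$ with $R_y(s)=\mathbf{1}_{f(s)=y}$ from \cref{eq:outcome-conditioned-reward}, so $F(s{\rightarrow}s_f\mid y)=\mathbf{1}_{f(s)=y}$ at every terminating state $s$. Summing over $y$ gives $F_{r\circ f}(s{\rightarrow}s_f)=\sum_{y}r(y)\mathbf{1}_{f(s)=y}=r(f(s))=R(s)$, which is exactly the terminal-reward matching required of a GFlowNet for $R$; by \cref{def:flow-P_T} and \cref{prop:P_T-probability} the terminating state probability of $F_{r\circ f}$ is then proportional to $R$.

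For (iii), apply the definition of the forward transition probability (\cref{eq:trans-p}) to $F_{r\circ f}$. Writing the outgoing edge from an even state $s$ as $s{\rightarrow}(s,a)$ and using that the odd state $(s,a)$ has the single parent $s$, \cref{prop:state-edge-flow} gives $F_{r\circ f}(s{\rightarrow}(s,a))=F_{r\circ f}((s,a))$, so
\[
  \pi_{r\circ f}(a\mid s)=\frac{F_{r\circ f}(s{\rightarrow}(s,a))}{F_{r\circ f}(s)}=\frac{\sum_{y}r(y)F((s,a)\mid y)}{\sum_{y}r(y)F(s\mid y)},
\]
which is the first displayed identity; the second follows by substituting $F((s,a)\mid y)=F(s\mid y)\,\pi(a\mid s,y)$, where $\pi(a\mid s,y)$ is the outcome-conditioned action policy (its forward transition probability). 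To confirm that forward sampling with $\pi_{r\circ f}$ indeed terminates with distribution $\propto R$, I would pass to the unique Markovian flow equivalent to $F_{r\circ f}$ (\cref{prop:Markoveq}), which has the same state and edge flows --- hence the same terminating flows and the same $P_F$ --- and then apply \cref{cor:sampling-from-P_T}.

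The step I expect to be the main obstacle is that a non-negative combination of Markovian flows need not be Markovian, so $F_{r\circ f}$ itself may violate \cref{eq:markov}; this is why the cleanest route is to note that everything in the statement (terminating flows, $P_T$, and the ratios defining $\pi_{r\circ f}$) depends on $F_{r\circ f}$ only through its state and edge flows, which are invariant within an equivalence class of flows (\cref{prop:Markoveq}), so one may freely replace $F_{r\circ f}$ by its Markovian representative. A secondary caveat worth recording is that ``trained to completion'' is an idealization: with an imperfect estimator $\hat{F}(\cdot\mid y)$ the key identity $F(s{\rightarrow}s_f\mid y)=\mathbf{1}_{f(s)=y}$, and hence the matching $F_{r\circ f}(s{\rightarrow}s_f)=R(s)$, holds only approximately.
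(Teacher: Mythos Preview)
Your proof is correct but follows a different route from the paper's. The paper does not define $F_{r\circ f}$ as the sum and then check properties; instead it first \emph{defines} $F_{r\circ f}(A)\defeq\sum_s r(f(s))\,P_B(A\mid s{\rightarrow}s_f)$ via a fixed backward transition kernel (i.e.\ the third parametrization of \cref{prop:flow-parametrizations}), observes that this $P_B$-representation does not depend on the reward, specializes to the indicator rewards to obtain $F(A\mid y)=\sum_s \mathbf{1}_{f(s)=y}P_B(A\mid s{\rightarrow}s_f)$, and then recovers \cref{eq:outcome-sum} by swapping sums and collapsing $\sum_y r(y)\mathbf{1}_{f(s)=y}=r(f(s))$.

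The trade-offs are as follows. The paper's argument implicitly assumes that the outcome-conditioned family shares a single $P_B$; under that assumption the resulting $F_{r\circ f}$ is automatically Markovian (it is exactly the flow determined by $P_B$ and the terminating flows $r\circ f$), so no passage to an equivalence-class representative is needed. Your argument is more elementary and more general: it does not require a shared $P_B$, it works for any family $\{F(\cdot\mid y)\}$ on the common DAG, and it correctly isolates the only delicate point---that a convex combination of Markovian flows need not be Markovian---and dispatches it cleanly via \cref{prop:Markoveq}. You also derive the policy formulas explicitly, which the paper leaves as immediate consequences. One small remark: the paper only assumes a discrete outcome set, not a finite one, so strictly speaking your ``outcome set is finite'' should be relaxed to summability; nothing else in your argument changes.
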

\begin{proof}
We first clarify what $F_{r\circ f}(A)$ means:
\begin{equation}
\label{eq:Frf}
 F_{r\circ f}(A) = \sum_s r(f(s)) P(A|s{\rightarrow}s_f)
   = \sum_s r(f(s)) P_B(A|s{\rightarrow}s_f)
\end{equation}
where $P(A|s{\rightarrow}s_f)$ is the probability of event $A$ (e.g., a particular state or transition) among the trajectories that end in terminal transition $s{\rightarrow}s_f$, and it can be determined entirely by $P_B$ (considering all the backward paths starting at $s$ and going back to the particular state or transition $A$). This means that $P(A|s{\rightarrow}s_f)$ {\em does not depend of the choice of reward function}, so~\cref{eq:Frf} is valid for any $r$.

Clearly, we see that~\cref{eq:outcome-sum} works in the extreme case where $r(y')=1_{y=y'}$ is an indicator function at some specific $y$ value since 
the sum in~\cref{eq:outcome-sum} reduces to $F(A|y)$
which corresponds to reward function $R_y$ as per~\cref{eq:outcome-conditioned-reward}. This yields
\begin{equation}
\label{eq:indicator-F}
    F_{1_y}(A) = F(A|y) = \sum_s 1_{f(s)=y} P(A|s{\rightarrow}s_f)
\end{equation}
where $1_y$ denotes the function that, given $s$, returns $1_{y=s}$.

To complete the proof let us start with the right-hand side of~\cref{eq:outcome-sum} and insert the above definition of  $F(A|y)$ (\cref{eq:indicator-F}), then swap the sums and use the indicator function to cancel the sum over $y$, and finally apply the definition of $F_{r\circ f}(A)$ in~\cref{eq:Frf}:
\begin{align}
    \sum_y r(y) F(A|y) &= \sum_y  r(y) \sum_{s} 1_{f(s)=y} P(A|s{\rightarrow}s_f) \nonumber \\
    &= \sum_s  \sum_y r(y) 1_{f(s)=y} P(A|s{\rightarrow}s_f) \nonumber \\
    &= \sum_s r(f(s)) P(A|s{\rightarrow}s_f) \nonumber \\
    &= F_{r\circ f}(A)
\end{align}
recovering the left-hand side of~\cref{eq:outcome-sum} as desired.
\end{proof}

This makes it possible to predict probabilities and perform sampling actions for all the possible outcomes $y$ arising in different states (in the extreme where we know nothing about possible reward functions and the outcome is $y=s$), and then convert that GFlowNet on the fly to one specialized to a given terminal reward function $R=r\circ f$. However, we note that it requires more computation for each action at run-time: we have to perform these sums (possibly via Monte-Carlo integration) over the outcome space, and there may be a computational time versus accuracy trade-off in the resulting decisions (based on how many Monte-Carlo samples are used to approximate the above sums).

\subsection{Pareto GFlowNets}

A related application of these ideas concerns Pareto optimization, where we are not sure about the correct reward function up to a few coefficients forming a convex combination of underlying objectives.
\begin{definition}
The {\bf Pareto additive terminal reward functions} can be written as
\begin{equation}
    R_\omega(s) = \sum_i \omega_i f_i(s)
\end{equation}
where $\omega \in \{\omega \in W \subset \mathbb{R}^d: \omega_i\geq 0, \sum_i \omega_i=1\}$ are convex weights and the outcomes of interest are $d$ objectives $y_i=f_i(s)$, and $W$ is a discrete
set of convex weights.
\end{definition}
\begin{definition}
The {\bf Pareto multiplicative terminal reward functions} can be written as
\begin{equation}
    R_\omega(s) = e^{ - \sum_i \omega_i e_i(s)}
\end{equation}
where $\omega \in \{\omega \in W \subset \mathbb{R}^d: \omega_i\geq 0, \sum_i \omega_i=1\}$ are convex weights and the outcomes of interest are $d$ objectives $y_i=f_i(s)$, and $W$ is a discrete
set of convex weights.
\end{definition}
In these cases, we can train a conditional GFlowNet with $\omega$ as conditioning input and $R_\omega(s)=R(s|\omega)$ as conditional terminal reward function. At run-time, we can scan the set of $\omega$'s in order to obtain different policies or predicted probabilities or free energies. The above can easily be generalized to a non-convex and non-linear combination of the objectives, so long as the combined objective is parametrized by $\omega$.  Note the similarity between this idea (and more generally outcome-conditioned GFlowNets) and the earlier work by~\citet{dosovitskiy2019you}. The same idea of conditioning by a form of specification of the loss can be applied to GFlowNets to obtain a family of GFlowNets, one for each variant of the loss function.

A useful application of a Pareto GFlowNet with such reward functions is to draw samples from the Pareto frontier. Once the Pareto GFlowNet is trained, we can draw samples from the Pareto frontier by first sampling the convex weights $\omega$ and then sampling trajectories. This can be useful in multi-objective optimization or sampling, where we want to draw a diversity of solutions corresponding to different trade-off points of the various objectives.

We could also train an outcome-driven GFlowNet by providing the vector of objective values $y$ as input and exploit prior knowledge about the objectives.
For example, we may believe that different objectives can be modeled independently of each other and that $F(s|y)$ (or similarly for $F((s,a)|y)$) can be written as a basis expansion $F(s|y)=\sum_{i=1}^d \sum_{j=1}^N \phi_j(y_i) F_{i,j}(s)$ with $N$ bases $\phi_j$ used to represent the objective $y_i$. 
This could be advantageous from a generalization point of view if learning about the different objectives should be disentangled from one another (e.g., one is stationary and the other is not).

\fi
\end{document}